\let\hat\widehat
\let\tilde\widetilde
\theoremstyle{plain}
\newtheorem{theorem}{Theorem}[section]
\newtheorem{lemma}[theorem]{Lemma}
\theoremstyle{definition}
\newtheorem{definition}[theorem]{Definition}
\newtheorem{assumption}[theorem]{Assumption}
\theoremstyle{remark}
\newcommand{\set}[1]{\left\{#1\right\}}
\newcommand{\innerprod}[1]{\left\langle#1\right\rangle}
\def\##1\#{\begin{align}#1\end{align}}
\def\$#1\${\begin{align*}#1\end{align*}}
\newcommand{\braces}[1]{\left\{#1\right\}}
\newcommand{\prns}[1]{\left(#1\right)}
\newcommand{\MLE}{\mathrm{MLE}}
\newcommand{\TV}{\mathrm{D}_\mathrm{TV}}
\newcommand{\bracks}[1]{\left[#1\right]}
\def\oper{\mathop{\text{op}}}
\def\rank{\mathrm{rank}}
\icmltitlerunning{On the Role of Discount Factor in Offline Reinforcement Learning}
\begin{document}

\twocolumn[
\icmltitle{On the Role of Discount Factor in Offline Reinforcement Learning}

% It is OKAY to include author information, even for blind
% submissions: the style file will automatically remove it for you
% unless you've provided the [accepted] option to the icml2022
% package.

% List of affiliations: The first argument should be a (short)
% identifier you will use later to specify author affiliations
% Academic affiliations should list Department, University, City, Region, Country
% Industry affiliations should list Company, City, Region, Country

% You can specify symbols, otherwise they are numbered in order.
% Ideally, you should not use this facility. Affiliations will be numbered
% in order of appearance and this is the preferred way.
\icmlsetsymbol{equal}{*}

\begin{icmlauthorlist}
\icmlauthor{Hao Hu}{equal,yyy}
\icmlauthor{Yiqing Yang}{equal,comp}
\icmlauthor{Qianchuan Zhao}{comp}
\icmlauthor{Chongjie Zhang}{yyy}
%\icmlauthor{}{sch}
%\icmlauthor{}{sch}
\end{icmlauthorlist}

\icmlaffiliation{yyy}{Institute of Interdisciplinary Information Sciences, Tsinghua University, Beijing, China}
\icmlaffiliation{comp}{Department of Automation, Tsinghua University, Beijing, China}

\icmlcorrespondingauthor{Hao Hu}{hu-h19@mails.tsinghua.edu.cn}
\icmlcorrespondingauthor{Yiqing Yang}{yangyiqi19@mails.tsinghua.edu.cn}

% You may provide any keywords that you
% find helpful for describing your paper; these are used to populate
% the "keywords" metadata in the PDF but will not be shown in the document
\icmlkeywords{Machine Learning, ICML}

\vskip 0.3in
]

% this must go after the closing bracket ] following \twocolumn[ ...

% This command actually creates the footnote in the first column
% listing the affiliations and the copyright notice.
% The command takes one argument, which is text to display at the start of the footnote.
% The \icmlEqualContribution command is standard text for equal contribution.
% Remove it (just {}) if you do not need this facility.

%\printAffiliationsAndNotice{}  % leave blank if no need to mention equal contribution
\printAffiliationsAndNotice{\icmlEqualContribution} % otherwise use the standard text.

\begin{abstract}
    Offline reinforcement learning (RL) enables effective learning from previously collected data without exploration, which shows great promise in real-world applications when exploration is expensive or even infeasible. The discount factor, $\gamma$, plays a vital role in improving online RL sample efficiency and estimation accuracy, but the role of the discount factor in offline RL is not well explored. This paper examines two distinct effects of $\gamma$ in offline RL with theoretical analysis, namely the regularization effect and the pessimism effect. On the one hand, $\gamma$ is a regulator to trade-off optimality with sample efficiency upon existing offline techniques.
    On the other hand, lower guidance $\gamma$ can also be seen as a way of pessimism where we optimize the policy's performance in the worst possible models. 
    % We quantify these effects by deriving two performance bounds for algorithms with a lower discount factor. 
    We empirically verify the above theoretical observation with tabular MDPs and standard D4RL tasks. The results show that the discount factor plays an essential role in the performance of offline RL algorithms, both under small data regimes upon existing offline methods and in large data regimes without other conservative methods.
\end{abstract}

\section{Introduction}
Online reinforcement learning has achieved great success in various domains, including video games \citep{mnih2015human}, class games~\citep{silver2016mastering} and continuous control~\citep{lillicrap2015continuous}. However, it requires extensive interaction with the environment to learn through trial-and-error. In many real-world problems, like personalized recommendation systems~\citep{swaminathan2015batch} and autonomous driving~\citep{shalev2016safe,singh2020cog}, exploration can be expensive and unsafe, which requires conducting RL in a supervised manner. In contrast, offline RL~\citep{levine2020offline,fujimoto2019off} enables learning from previously collected datasets, which shows great potential for real-world applications.

\begin{table}
    \vspace{2em}
    \centering
    \begin{tabularx}{0.5\textwidth}{|c|X|X|}
    \hline
    Dataset size/quality & w other pessimisms  & w$\backslash$o other  pessimisms  \\ 
    \hline
    Large, good coverage &      &   pessimism effect \checkmark \\ 
    \hline
    Small or bad coverage &   regurlarization effect \checkmark &      \\ 
   \hline
    \end{tabularx}
    \caption{The applicability of a lower guidance discount factor.}
    \vspace{-1.5em}
\label{summarize}
\end{table}

One of the major challenges of offline RL comes from the distributional shift between the data collection policy and the learned policy~\citep{levine2020offline}, and the direct application of online RL algorithms is known to lead to poor performance~\citep{fujimoto2019off}. One paradigm for algorithm design in offline RL incorporates proper conservatism to the learning algorithm. There are several conservative methods in existing empirical literature, including policy regularization~\citep{fujimoto2019off,peng2019advantage,kumar2020conservative}, ensemble-based uncertainty~\citep{wu2021uncertainty,an2021uncertainty} and model-based penalty~\citep{yu2021combo,yu2020mopo,kidambi2020morel}.

However, as an essential number in RL, the discount factor already provides a natural way of conservatism. The effect of $\gamma$ is extensively discussed in online RL and specifically TD-learning. For instance, \citet{jiang2015dependence} shows that the discount factor can be regarded as a complexity control parameter for the class of policies. \citet{amit2020discount} shows that it is equivalent to L2-regularization of Q-values in TD-learning to improve generalization. However, the analysis of the role of $\gamma$ in the context of offline RL is missing, which naturally leads to the following question:
\begin{center}
    {\it What are the roles of the discount factor in the context of offline RL, and how does it contribute to proper conservative offline RL methods?}
\end{center}
% On the other hand, a lower discount factor acting as a regularization mechanism is well-known in online literature~\citep{petrik2008biasing}. For instance,
% \citet{jiang2015dependence} shows that the discount factor can be regarded as a complexity control parameter for the class of policies. \citet{amit2020discount} shows that it is equivalent to L2-regularization of Q-values in TD-learning to improve generalization.
% However, the role of the discount factor is not well explored in offline RL,  which naturally leads to the following question:
% \begin{center}
%     {\it What are the roles of the discount factor in offline RL and \\what affects the effectiveness of a lower guidance discount factor on offline RL methods?}
% \end{center}
% In this paper, we examine the two roles of the discount factor in offline RL, namely, the regularization effect and the pessimistic effect. 
% First, we show that the regularization effect applies to offline algorithms with pessimistic mechanisms. 
% Similar to the online scenarios, a lower discount factor can also cooperate with pessimistic offline algorithms and achieve a trade-off between optimality and sample efficiency for a better performance bound.
In this paper, we examine the two roles of the discount factor in offline RL that affect the performance in two distinct ways. 
First, we show that $\gamma$ plays as a regulator upon existing conservative methods and achieves a trade-off between optimality and sample efficiency. 
Second, we show that a lower discount factor resembles model-based pessimism. A lower discount factor is equivalent to maximizing the value function in the worst possible models among a confidence set. We summarize the applicability of the two effects in Table~\ref{summarize}.
To give a rigid characterization, we analyze the two effects above under the context of linear MDPs and derive two different performance bounds. These quantitive results also characterize how the impact of a lower guidance discount factor relies on other factors like the size of the dataset and the coverage coefficient in terms of optimal policies.
We empirically verify the two effects on both tabular MDPs and the standard D4RL benchmark~\citep{fu2020d4rl}.
Results indicate that the discount factor can significantly affect the performance of offline RL algorithms, both under small data regimes upon existing pessimism and in large data regimes without other pessimisms. We believe that our findings highlight the importance of setting a proper lower discount factor in offline RL practices.

\subsection{Related Works}
\paragraph{Role of Discount Factor.} 
The discount factor is extensively analyzed in online RL.~\citet{petrik2008biasing} first shows that the approximation error bounds may be tightened with a lower discount factor in online RL. \citet{jiang2015dependence} gives a more rigorous analysis by analyzing the size of potentially optimal policy class; ~\citet{amit2020discount} points out the equivalence between the L2 regularization and a smaller discount factor in TD-learning.
~\citet{zhang2020deeper} analyzes the discount factor in actor-critic algorithms from a bias-variance-representation trade-off perspective.
In the practical aspect, ~\citet{chen2018improving} observes a similar regularization effect in partial observable settings; ~\citet{fedus2019hyperbolic} uses a geometry discount factor to learn multiple values in an ensemble way.
Some works~\citep{xu2018meta, sherstan2020gamma, romoff2019separating} suggest learning a sequence of value functions with lower discount factors for online RL.

\paragraph{Convervatism in Offline RL.} 
There are extensive works on the conservative offline RL, which can be roughly divided into policy constraint-based and uncertainty estimation-based.
Policy constraint methods attempt to enforce the trained policy to be close to the behavior policy via introducing a generative model~\citep{fujimoto2019off,fujimoto2021td3}, KL-divergence~\citep{peng2019advantage, nair2020accelerating,siegel2020keep,wu2019behavior} or value function regularization~\citep{kumar2020conservative, agarwal2020optimistic, kostrikov2021offline}.
Some more recent policy constraint methods~\citep{yang2021believe,ma2021offline,ghasemipour2021emaq,kostrikov2021offline, brandfonbrener2021offline} also suggested that only trusting the state-action pairs given in the dataset can solve complex tasks well.
As for the uncertainty estimation of model-free methods, researchers attempt to take into account the confidence of the Q-value prediction using dropout or ensemble techniques~\citep{wu2021uncertainty, an2021uncertainty}. Differently, model-based offline methods incorporate the uncertainty in the model space~\citep{yu2020mopo, yu2021combo, kidambi2020morel}.

In addition, there are also some theoretical results for pessimism in offline RL.
\citet{jin2021pessimism} proves that using a negative bonus from the online exploration is sufficient for offline RL. 
\citet{rashidinejad2021bridging} proves that a UCB-like penalty is provably sufficient in tabular settings.
\citet{uehara2021pessimistic} studies the properties of model-based pessimisms under partial coverage.
\section{Preliminaries}
% \subsection{Linear MDPs and Reinforcement Learning}
We consider infinite-horizon discounted Markov Decision Processes (MDPs), defined by the tuple $(\mathcal{S},\mathcal{A},\mathcal{P},r,\gamma),$ where $\mathcal{S}$ is a state space, $\mathcal{A}$ is a action space, $\gamma \in [0,1)$ is the discount factor and $\mathcal{P}: \mathcal{S}\times\mathcal{A}\rightarrow \Delta(\mathcal{S}), r:\mathcal{S}\times\mathcal{A}\rightarrow [0, r_{\text{max}}]$ are the transition function and reward function, respectively. We also assume a fixed distribution $\mu_0 \in \Delta(\mathcal{S})$ as the initial state distribution.\par

To make our analysis more general, we consider the \textit{linear MDP}~\citep{yang2019sample,jin2020provably} as follows, where the transition kernel and expected reward function are linear with respect to a feature map. Note that tabular MDPs are linear MDPs with the canonical one-hot representation.

% In this work we focus on the case for \textit{linear MDPs}, where there exists feature maps $\phi:\mathcal{S}\times\mathcal{A}\rightarrow\mathbb{R}^{d_1}, \psi:\mathcal{S}\times\mathcal{A}\times\mathcal{S}\rightarrow\mathbb{R}^{d_2},$ and parameters $\theta_p\in \mathbb{R}^{d_2},\norm{\theta_p}_2\leq\sqrt{d_2}$, $\theta_r \in \mathbb{R}^{d_1},\norm{\theta_r}_2\leq\sqrt{d_1},$ such that $\mathcal{P}(s'|s,a)=\innerprod{\psi(s,a,s'),\theta_p},r(s,a)=\innerprod{\phi(s,a), \theta_r}.$\par
\begin{definition}[Linear MDP]
    We say an infinite-horizon discounted MDP $(\cS,\cA,\cP,r, \gamma)$ is a linear MDP with known feature map $\phi:\cS\times \cA\to \RR^d, \psi:\cS\to \RR^l$ if there exist an unknown matrix parameter $M \in \RR^{d\times l}$ and an unknown vector parameter $\theta \in \RR^d$ such that
    \#
    &\cP(s'\given s,a) = \phi(s,a)^\top M \psi(s'),\notag\\
    &\EE\bigl[r(s, a)\bigr] = \phi(s,a)^\top\theta
    \#
    for all $(s,a,s')\in \cS\times \cA\times \cS$. And we assume $\|\phi(s,a)\|_\infty\leq 1,\|\psi(s')\|_\infty\leq 1$ for all $(s,a,s')\in \cS\times \cA \times\cS$ and $\max\{\| M \|_2 ,\|\theta\|_2\}\leq \sqrt{d}$.
    \label{assump:linear_mdp}
\end{definition}
A policy $\pi : \cS\rightarrow \Delta{(\cA)}$ specifies a decision-making strategy in which the agent chooses its actions based on the current state, i.e., $a_t \sim \pi(\cdot \given s_t)$. The value function $V^\pi_{\gamma,M}: \cS \rightarrow \RR$ is defined as the $\gamma$-discounted sum of future rewards starting at state $s$ for policy $\pi$ in model $M$, i.e.
\begin{equation}
V^\pi_{\gamma,M}(s) = \EE_{\pi}\Big[ \sum_{t=0}^{\infty} \gamma^t r(s_t, a_t)\Biggiven s_0=s  \Big],
\label{eq:def_value_fct}
\end{equation}
where the expectation is with respect to the trajectory $\tau$ induced by policy $\pi$. The action-value function $Q^\pi:\cS\times \cA\to \RR$ is similarly defined as
\begin{equation}
Q^\pi_{\gamma,M}(s,a) = \EE_\pi\Big[\sum_{t=0}^{\infty} \gamma^tr(s_t, a_t)\Biggiven s_0=s, a_0=a  \Big].
\label{eq:def_q_fct}
\end{equation}
We overload the notation and define $V_{\gamma,M}(\pi)$ as the expected $\gamma$-discounted value of policy $\pi$ under the initial distribution $\mu_0$, i.e. $V_{\gamma,M}(\pi) = \EE_{s_0\sim\mu_0}[V^\pi_{\gamma,M}(s_0)]$, and similarly we have  $Q_{\gamma,M}(\pi) = \EE_{\begin{subarray} ss_0\sim\mu_0\\a_0\sim\pi\end{subarray}}[Q^\pi_{\gamma,M}(s_0,a_0)]$. When it does not lead to confusion, we omit the index for $\gamma$ and $M$ for simplicity.

We define the Bellman operator as
\begin{align}
(\BB_\gamma f)(s,a) &= \EE\bigl[r(s, a) + \gamma f(s')\bigr],
\label{eq:def_bellman_op}
\end{align}
for any $f:\mathcal{S}\rightarrow \mathbb{R}$ and $\gamma\in[0,1)$.
The optimal Q-function~$Q^*$, and the optimal value function~$V^*$ are related by the Bellman optimality equation
\begin{align}
 &V^*_{\gamma,M}(s) = \max_{a\in \cA}Q^*_{\gamma,M}(s,a),\notag\\
 &Q^*_{\gamma,M}(s,a) = (\BB_\gamma V^*_{\gamma,M}) (s,a),
\label{eq:dp_optimal_values}
\end{align}
while the optimal policy is defined as
\begin{align*}
   \pi^*_{\gamma,M} (\cdot \given s)&=\argmax_{\pi}\EE_{a\sim \pi}{Q^*_{\gamma,M}(s, a)}.
\end{align*}

We define the suboptimality as the performance difference of the optimal policy $\pi^*_\gamma$ and the policy $\pi$ given the initial distribution $\mu_0$ evaluated with discount factor $\gamma$. That is  
\begin{equation}
\text{SubOpt}(\pi;\gamma) = V_{\gamma,M}(\pi^*_\gamma) - V_{\gamma,M}(\pi),
\label{eq:def_regret}
\end{equation}
We also define the suboptimality for each state, that is
\begin{equation}
    \text{SubOpt}(\pi,s;\gamma) = V^{\pi^*_\gamma}_{\gamma,M}(s) - V^\pi_{\gamma,M}(s). \notag
    \label{eq:def_regret_2}
\end{equation}

\subsection{Pessimistic Offline Algorithms}
\label{algo_meta}
In this section, we sketch two offline algorithms to characterize the effect of a lower guidance discount factor. The first one is \textit{pessimistic value iteration} \citep[PEVI;][]{jin2021pessimism}, as shown in Algorithm~\ref{alg:1}, which uses uncertainty as a negative bonus for value learning.
% This algorithm is common in both empirical and theoretical works~\citep{jin2020provably,rashidinejad2021bridging,kumar2020conservative}. 
The second one is \textit{model-based pessimistic policy optimization} \citep[MBPPO;][]{uehara2021pessimistic}, as shown in Algorithm~\ref{alg:2}, which optimizes the worst possible performance of a policy over a set of models. 

PEVI uses negative bonus $\Gamma(\cdot,\cdot)$ over standard $Q$-value estimation $\hat{Q}(\cdot,\cdot) =  (\hat\BB_\gamma \hat{V})(\cdot)$ to reduce potential bias due to finite data, where $\hat\BB_\gamma$ is the empirical estimation of $\BB_\gamma$ from dataset $\cD$. We use the notion of  $\xi$-uncertainty quantifier as follows to formalize the idea of pessimism.
\begin{definition}[$\xi$-Uncertainty Quantifier] 
    We say $\Gamma :\cS\times\cA\to \RR$ is a $\xi$-uncertainty quantifier for $\hat\BB_\gamma$ and $\widehat{V}$ if with probability $1-\xi$,
    \begin{equation}
     \big|(\hat\BB_\gamma\hat{V})(s,a) - (\BB_\gamma\hat{V})(s,a)\big|\leq \Gamma(s,a), 
    \label{eq:def_event_eval_err_general}
    \end{equation}
    for all~$(s,a)\in \cS\times \cA$.
    \label{def:uncertainty_quantifier}
    %Here $\hat\BB,$ $\hat{V}'(\cdot)$ and $\Gamma(\cdot,\cdot)$ all may depend on dataset $\cD$.
\end{definition}

% In linear MDPs, we can construct $\hat\BB_\gamma\hat{V}$ and $\Gamma$ based on $\cD$ as follows, where $\hat\BB_\gamma\hat{V}$ is the empirical estimation for $\BB_\gamma\hat{V}$. For a given dataset $\cD=\{(s_\tau,a_\tau,r_\tau)\}_{\tau=1}^{N}$, we define the empirical mean squared Bellman error (MSBE) as
% \begin{equation*}
% M(w) = \sum_{\tau=1}^N \bigl(r_\tau + \gamma \widehat{V}(s_{\tau+1}) - \phi (s_\tau,a_\tau)^\top w\bigr)^2 + \lambda \norm{w}_2^2
% \end{equation*}
% Here $\lambda>0$ is the regularization parameter. Note that $\hat{w}$ has the closed form
% \#\label{eq:w18}
% &\hat{w} =  \Lambda ^{-1} \Big( \sum_{\tau=1}^{N} \phi(s_\tau,a_\tau) \cdot \bigl(r_\tau + \gamma\hat{V}(s_{\tau+1})\bigr) \Bigr ) , \notag\\
% &\text{where~~} \Lambda = \lambda I+\sum_{\tau=1}^N \phi(s_\tau,a_\tau)  \phi(s_\tau,a_\tau) ^\top. 
% \#
% Then we let $\hat\BB_\gamma\hat{V}=\langle\phi,\hat w  \rangle$. Meanwhile, we construct $\Gamma$ based on $\cD$ as 
% \#\label{eq:w05}
% \Gamma(s, a) = \beta\cdot \big( \phi(s, a)^\top  \Lambda ^{-1} \phi(s, a)  \big)^{1/2}.
% \#
% Here $\beta>0$ is the scaling parameter.

\begin{algorithm}[H]
    \caption{Pessimistic Value Iteration}\label{alg:1}
    \begin{algorithmic}[1]
    \STATE {\bf Require}: Dataset $\cD=\{(s_\tau,a_\tau,r_\tau)\}_{\tau=1}^{T}$.
    \STATE Initialization: Set $\hat{V}(\cdot) \leftarrow 0$ and construct $\Gamma(\cdot, \cdot)$.
    \WHILE{not converged}
    \STATE Construct $(\hat\BB_\gamma \hat{V})(\cdot,\cdot)$
    \STATE Set $\hat{Q}(\cdot,\cdot) \leftarrow  (\hat\BB_\gamma \hat{V})(\cdot,\cdot)- \Gamma(\cdot,\cdot)$.
    \STATE Set $\hat{\pi} (\cdot \given \cdot) \leftarrow \argmax_{\pi}\EE_{\pi}{\left[\hat{Q}(\cdot, \cdot)\right]}$.
    \STATE Set $\hat{V}(\cdot) \leftarrow  \EE_{\hat{\pi}}{\left[\hat{Q}(\cdot, \cdot)\right]}$. \label{alg:general_Vhat}
    \ENDWHILE
    \STATE \textbf{Return} $\hat\pi$%
    \end{algorithmic}
\end{algorithm}

\begin{algorithm}[H]
    \caption{Model-Based Pessimistic Policy Optimization}\label{alg:2}
    \begin{algorithmic}[1]
        \STATE {\bf Require}: Dataset $\cD$, discount factor $\gamma$, policy class $\Pi$, Model set $\cM$
        % \STATE Estimate the model by maximizing log-likelihood $$\widehat{P}=\argmax_{P}\EE_{\cD}[\ln P(s'\mid s,a)].$$ 
        \STATE Optimize policy with respect to the worst possible model:
        \vspace{-5pt}
            \begin{align}\textstyle
            \label{alg:2_1}
            &\hat\pi = \argmax_{\pi\in\Pi} \min_{M\in\cM} V^{\pi}_{\gamma,M}.
            % \text{ where }&\cM=\left\{P(\cdot|s,a)\biggiven \TV (\widehat{P}(\cdot | s,a), P(\cdot | s,a)) \leq \xi\right\}.
            \vspace{-15pt}
        \end{align}
        \STATE \textbf{Return} $\hat\pi$%
    \end{algorithmic}
    \end{algorithm}

    Intuitively, $\Gamma(s,a)$ represents the uncertainty when estimating the value function. The negative bonus ensures that we do not overestimate the value function due to finite samples with high probability, which allows us to give a performance lower bound with respect to the number of samples in the dataset.

    MBPPO, on the contrary, considers the uncertainty in the model space to reduce potential sampling bias. By optimizing the performance in the worst possible model, as shown in Algorithm~\ref{alg:2}, we obtain a model-based offline algorithm, which gives better suboptimality bounds compared to the model-free counterpart when we use a proper set of models. 
% Note that we do not specify the model set in Algorithm~\ref{alg:2}.
% To connect two set of models, we need the following assumption.
% \begin{assumption}[Regularity]
%     We assume that 
%     \$\tilde{p}=\min\{p_{\text{min}},1-p_{\text{max}}\}>0,\label{model_based_regularity}\$
%     where $p_{\text{min}}=\inf_{\cP(s'|s,a)>0}{\cP(s'|s,a)},p_{\text{max}}=\sup_{\cP(s'|s,a)<1}{\cP(s'|s,a)}$.
% \end{assumption}

Both Algorithm~\ref{alg:1} and Algorithm~\ref{alg:2} are meta descriptions rather than detailed implementations. We provide more details of the algorithms in Appendix~\ref{algo_describ}.

\section{Theoretical Analysis}
This section characterizes the effects of a lower guidance discount factor in offline RL. We first show that a similar regularization effect exists in offline algorithms as in online algorithms. We quantify this effect by providing a performance bound to analyze how other factors like the data size and the coverage coefficient affect this regularization effect. We then show an equivalence between a lower discount factor and the model-based pessimism. This equivalence leads to a performance guarantee with a lower guidance discount factor without other conservative regularizations. These two effects indicate that the discount factor plays a vital role in offline reinforcement learning.

\subsection{Regularization Effect}
As~\citet{jiang2015dependence} suggests, the discount factor acts as a regularization coefficient to bound the complexity of the potential optimal policy class. However, it is unclear what affects the effectiveness of $\gamma$ regularization in the offline setting, especially when the data coverage is poor and the algorithms have additional pessimism. Empirically, we found that the quality and size of the dataset are the main factors that affect the regularization from $\gamma$. To shed light on this observation, we first derive a performance bound in the linear MDP setting for model-free pessimistic algorithms like Algorithm~\ref{alg:1}. The analysis is analogous to~\citep{jin2021pessimism}, but we focus on the discount setting.
% Specifically, we have the following lemma.

\begin{lemma}
    \label{lemma:2}
    Suppose there exists an absolute constant 
    \begin{align}
        \label{eq:event_opt_explore}
        c^\dagger = &\sup_{x\in \RR^d} \frac{x^{\top}\Sigma_{\pi^*,s}x}{x^{\top}\Sigma_{\cD} x} < \infty,
    \end{align}
    for all $s\in\cS$ with probability $1-\xi/2$, where 
    \$&\Sigma_{\cD}~~~=\frac{1}{N}\sum_{\tau=1}^N{\left[\phi(s_\tau,a_\tau)\phi(s_\tau,a_\tau)^\top\right]},\notag\\
     &\Sigma_{\pi^*,s}=\EE_{{\pi^*}}\bigl[\phi(s_t,a_t)\phi(s_t,a_t)^\top\biggiven s_0=s\bigr].
     \$
    % $c^\dagger >0$ such that with probability $1-\xi/2$,
    % \begin{align}
    %     \label{eq:event_opt_explore}
    %     &\frac{1}{N} \cdot\sum_{\tau=1}^N{\phi(s_\tau,a_\tau)\phi(s_\tau,a_\tau)^\top}\succeq  \nonumber \\
    %     &c^\dagger \cdot  \EE_{\pi^*}\bigl[\phi(s_t,a_t)\phi(s_t,a_t)^\top\biggiven s_0=s\bigr], 
    % \end{align}
    % for all $s\in \cS$.

    In Algorithm \ref{alg:1}, we follow Equation~\eqref{eq:empirical_bellman} and~\eqref{eq:linear_uncertainty_quantifier}, and set 
    \begin{equation*}
        \lambda =1,~ \beta= c \cdot d r_{\text{\rm max}}\sqrt{\zeta}/(1-\gamma), ~ \zeta = \log{(4dN/(1-\gamma)\xi)},
    \end{equation*}
    where $c>0$ is an absolute constant  and $\xi \in (0,1)$ is the confidence parameter. Then with probability $1-\xi$, the policy $\widehat{\pi}$ generated by Algorithm~\ref{alg:1} satisfies
    \begin{align*}\label{eq:event_opt_explore_d}
     \text{\rm SubOpt}\big(\widehat{\pi},s;\gamma \big) 
    \leq  \frac{2c r_{\text{\rm max}}}{(1-\gamma)^2} \sqrt{c^\dagger d^3\zeta /N}, \ \forall s\in \cS.
    \end{align*}
    % for all $s\in \cS$.
\end{lemma}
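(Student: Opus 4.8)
The plan is to transplant the episodic pessimistic-value-iteration analysis of \citet{jin2021pessimism} to the infinite-horizon discounted fixed point, replacing every occurrence of the horizon $H$ by the effective horizon $1/(1-\gamma)$. The whole argument reduces the suboptimality to a single quantity: the discounted sum of the uncertainty quantifier $\Gamma$ evaluated along trajectories of $\pi^*_\gamma$. Let $\hat V,\hat Q,\hat\pi$ denote the fixed point returned by Algorithm~\ref{alg:1} and define the model prediction error $\iota(s,a)=(\BB_\gamma\hat V)(s,a)-\hat Q(s,a)$. Since the algorithm sets $\hat Q=\hat\BB_\gamma\hat V-\Gamma$, we have $\iota=(\BB_\gamma-\hat\BB_\gamma)\hat V+\Gamma$, so on the event of Definition~\ref{def:uncertainty_quantifier} the two-sided control $0\le\iota(s,a)\le 2\Gamma(s,a)$ holds for every $(s,a)$. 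This nonnegativity is what encodes pessimism.

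First I would establish a discounted value-difference decomposition. Because $\hat V(\cdot)=\EE_{\hat\pi}[\hat Q(\cdot,\cdot)]$ and $\hat Q=\BB_\gamma\hat V-\iota$, the function $\hat V$ satisfies the policy Bellman equation of $\hat\pi$ perturbed by $-\EE_{\hat\pi}[\iota]$; inverting $(I-\gamma P^{\hat\pi})$ gives the exact identity $V^{\hat\pi}_{\gamma}(s)-\hat V(s)=\EE_{\hat\pi}[\sum_{t\ge0}\gamma^t\iota(s_t,a_t)\mid s_0=s]$. On the other side, greediness of $\hat\pi$ yields $\hat V(s)=\max_a\hat Q(s,a)\ge\EE_{a\sim\pi^*_\gamma}[\hat Q(s,a)]$; substituting $\hat Q=\BB_\gamma\hat V-\iota$ and recursing through $(I-\gamma P^{\pi^*_\gamma})^{-1}$ (whose entries are nonnegative) produces $V^{\pi^*_\gamma}_{\gamma}(s)-\hat V(s)\le\EE_{\pi^*_\gamma}[\sum_{t\ge0}\gamma^t\iota(s_t,a_t)\mid s_0=s]$. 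Subtracting the two displays and discarding the nonnegative $\hat\pi$-term gives $\text{SubOpt}(\hat\pi,s;\gamma)\le\EE_{\pi^*_\gamma}[\sum_{t\ge0}\gamma^t\iota(s_t,a_t)\mid s_0=s]\le 2\,\EE_{\pi^*_\gamma}[\sum_{t\ge0}\gamma^t\Gamma(s_t,a_t)\mid s_0=s]$.

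The main obstacle is verifying that the linear quantifier of \eqref{eq:linear_uncertainty_quantifier}, namely $\Gamma(s,a)=\beta\sqrt{\phi(s,a)^\top\Lambda^{-1}\phi(s,a)}$ with $\Lambda=\sum_{\tau=1}^N\phi(s_\tau,a_\tau)\phi(s_\tau,a_\tau)^\top+\lambda I$, is a genuine $\xi$-uncertainty quantifier for the stated $\beta$. Writing the ridge-regression error of \eqref{eq:empirical_bellman} as $(\hat\BB_\gamma\hat V-\BB_\gamma\hat V)(s,a)=\phi(s,a)^\top(\hat w-w)$, I would split it into a bias term controlled by $\lambda$ and a self-normalized noise term $\phi(s,a)^\top\Lambda^{-1}\sum_\tau\phi(s_\tau,a_\tau)\epsilon_\tau$, and bound the latter by a self-normalized concentration inequality. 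The one genuine difficulty beyond the bandit case is that $\hat V$ is data-dependent, so the bound must hold uniformly over the class of value functions Algorithm~\ref{alg:1} can output; I would cover this class (parametrized by bounded weight vectors, with $\hat V$ clipped to $[0,r_{\max}/(1-\gamma)]$), which is exactly the source of the factor $\zeta=\log(4dN/((1-\gamma)\xi))$ and of the scaling $\beta= c\,d\,r_{\max}\sqrt{\zeta}/(1-\gamma)$, the extra $1/(1-\gamma)$ coming from the discounted value range. This event holds with probability $1-\xi/2$.

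Finally I would convert the trajectory sum into the coverage coefficient. Applying Jensen's inequality (concavity of the square root) timestep by timestep gives $\EE_{\pi^*_\gamma}[\sqrt{\phi(s_t,a_t)^\top\Lambda^{-1}\phi(s_t,a_t)}\mid s_0=s]\le\sqrt{\mathrm{tr}(\Lambda^{-1}\Sigma_{\pi^*,s})}$, and the coverage hypothesis $\Sigma_{\pi^*,s}\preceq c^\dagger\Sigma_\cD$ together with $\Lambda\succeq N\Sigma_\cD$ yields $\mathrm{tr}(\Lambda^{-1}\Sigma_{\pi^*,s})\le c^\dagger\,\mathrm{tr}(\Lambda^{-1}\Sigma_\cD)\le c^\dagger d/N$, so each per-step term is at most $\sqrt{c^\dagger d/N}$. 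Summing the geometric weights $\sum_{t\ge0}\gamma^t=1/(1-\gamma)$ then gives $\EE_{\pi^*_\gamma}[\sum_t\gamma^t\Gamma(s_t,a_t)\mid s_0=s]\le\frac{\beta}{1-\gamma}\sqrt{c^\dagger d/N}$. Substituting $\beta=c\,d\,r_{\max}\sqrt{\zeta}/(1-\gamma)$ and the factor $2$ from the pessimism step yields $\text{SubOpt}(\hat\pi,s;\gamma)\le\frac{2c\,r_{\max}}{(1-\gamma)^2}\sqrt{c^\dagger d^3\zeta/N}$ for all $s$, where the two powers of $(1-\gamma)^{-1}$ come respectively from $\beta$ and from the geometric sum. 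A union bound over the coverage event ($1-\xi/2$) and the quantifier event ($1-\xi/2$) delivers the claimed overall probability $1-\xi$.
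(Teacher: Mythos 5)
Your proposal is correct and follows essentially the same route as the paper's own proof: the same pessimistic value-difference decomposition with $0\le\iota\le 2\Gamma$, the same verification of the linear uncertainty quantifier via self-normalized concentration plus a covering argument over the data-dependent value class, and the same Jensen/trace/coverage chain yielding $\sqrt{c^\dagger d/N}$ per step before the geometric sum and the choice of $\beta$ assemble the final bound. The only (cosmetic) difference is that you bound $\Tr(\Lambda^{-1}\Sigma_{\pi^*,s})$ directly via $\Sigma_{\pi^*,s}\preceq c^\dagger\Sigma_\cD$ and $\Lambda\succeq N\Sigma_\cD$, whereas the paper passes through the eigenvalues of $\Sigma_{\pi^*,s}$.
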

\begin{proof}
    See Appendix~\ref{proof_lemma_1} for a detailed proof.
\end{proof}

Equation~\eqref{eq:event_opt_explore} defines a finite coverage coefficient, namely $c^\dagger$, which represents the maximum ratio between the density of empirical state-action distribution and the density induced from the optimal policy. Intuitively, it represents the quality of the dataset. For example, the \verb|expert| dataset has a low coverage ratio while the \verb|random| dataset may have a high ratio. The probability $1-\xi/2$ for a finite coverage coefficient is measured concerning the data collection process. That is, we are only making assumptions about the data collection process rather than the specific dataset.
The dependence on $\gamma$ in Lemma~\ref{lemma:2} suggests that the performance bound $\text{SubOpt}\big(\widehat{\pi},s;\gamma \big)$ decreases as the discount factor gets lower. However, a lower discount factor also biases the optimal policy, as characterized by the following lemma.

% Based on the theoretical observation, one may choose a very low $\gamma$ to obtain a better performance bound.

\begin{lemma}[\citet{jiang2015dependence}]
    \label{lemma:1}
     For any MDP $M$ with rewards in $[0,r_{\text{\rm max}}]$, $\forall \pi: \mathcal{S}\times\mathcal{A}\rightarrow \mathbb{R}$ and $\gamma\leq \gamma_{e}$,
    \begin{equation}
        V_{M,\gamma}(\pi) \leq V_{M,\gamma_{e}}(\pi) \leq  V_{M,\gamma}(\pi) + \frac{\gamma_{e}-\gamma}{(1-\gamma)(1-\gamma_{e})}r_{\text{\rm max}},
    \end{equation}
    where $\gamma_{e}$ is the evaluation discount factor.
\end{lemma}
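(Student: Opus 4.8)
The plan is to argue directly from the series definition of the value function in Equation~\eqref{eq:def_value_fct}, comparing the two discount factors term by term along each trajectory. Unfolding the definitions, $V_{M,\gamma}(\pi) = \EE_\pi\bigl[\sum_{t=0}^\infty \gamma^t r(s_t,a_t)\bigr]$ and $V_{M,\gamma_e}(\pi) = \EE_\pi\bigl[\sum_{t=0}^\infty \gamma_e^t r(s_t,a_t)\bigr]$ involve the \emph{same} reward sequence and the \emph{same} trajectory distribution induced by $\pi$; they differ only in the geometric weights $\gamma^t$ versus $\gamma_e^t$. This reduces the entire lemma to controlling the per-step weight difference $\gamma_e^t - \gamma^t$ against the nonnegative, bounded rewards.

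For the lower bound $V_{M,\gamma}(\pi) \le V_{M,\gamma_e}(\pi)$, I would note that $\gamma \le \gamma_e$ gives $\gamma^t \le \gamma_e^t$ for every $t \ge 0$. Multiplying by the nonnegative reward $r(s_t,a_t) \ge 0$ preserves the inequality term by term, and summing and then applying monotonicity of the expectation $\EE_\pi[\cdot]$ yields the claim. This is the only place the left endpoint $r \ge 0$ of the reward range is used.

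For the upper bound I would estimate the gap from above. Since $\gamma_e^t - \gamma^t \ge 0$ and $r(s_t,a_t) \le r_{\text{\rm max}}$, I obtain $V_{M,\gamma_e}(\pi) - V_{M,\gamma}(\pi) \le r_{\text{\rm max}} \sum_{t=0}^\infty (\gamma_e^t - \gamma^t)$, where the trajectory expectation disappears because the bound is uniform over trajectories. The final step is the closed-form evaluation of the difference of geometric series, $\sum_{t=0}^\infty (\gamma_e^t - \gamma^t) = \frac{1}{1-\gamma_e} - \frac{1}{1-\gamma} = \frac{\gamma_e - \gamma}{(1-\gamma)(1-\gamma_e)}$, which reproduces exactly the stated gap.

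I do not expect a genuine obstacle: the whole argument is an elementary term-by-term comparison plus one geometric-series identity, and $\gamma, \gamma_e \in [0,1)$ guarantees absolute convergence so the sums and expectations may be freely interchanged. The only points needing care are keeping the two inequality directions straight (nonnegativity of rewards drives the lower bound, while the upper endpoint $r_{\text{\rm max}}$ drives the gap) and checking the algebra of the geometric-series difference. Notably, the statement holds for an arbitrary fixed $\pi$ and makes no appeal to optimality, so no Bellman-operator or dynamic-programming machinery from Equation~\eqref{eq:def_bellman_op} onward is required.
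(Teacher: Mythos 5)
Your proof is correct and is exactly the standard argument: the paper itself gives no proof of this lemma (it is imported directly from \citet{jiang2015dependence}), and the term-by-term comparison of geometric weights combined with the identity $\sum_{t\ge 0}(\gamma_e^t-\gamma^t)=\frac{\gamma_e-\gamma}{(1-\gamma)(1-\gamma_e)}$ is precisely how the cited source establishes it.
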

% \begin{proof}
%     See Appendix.
% \end{proof}
The bound above is tight in a trivial case, while it is usually very loose in practice.
Together, we have the following bound with a guidance discount factor $\gamma$ different from the true evaluation discount factor $\gamma_e$.
\begin{theorem}
    \label{theorem:1}
    %Suppose there exists an absolute constant %$c^\dagger >0$ such that with probability %$1-\xi/2$,
    %\begin{align}
    %    &\frac{1}{N} \cdot\sum_{\tau=1}^N{\phi(s_\tau,a_\tau)\phi(s_\tau,a_\tau)^\top}\succeq  \nonumber \\
    %    &c^\dagger \cdot  \EE_{\pi^*}\bigl[\phi(s_t,a_t)\phi(s_t,a_t)^\top\biggiven s_0=s\bigr], 
    %\end{align}

    %for all $s\in \cS$. In Algorithm~\ref{alg:1}, we set 
    %\begin{equation}
    %    \lambda =1, \quad \beta= c\cdot d V_{\text{\rm max}}\sqrt{\zeta}, \quad \zeta = \log{(4dN/(1-\gamma)\xi)},
    %\end{equation}
    Based on the same assumption and definition as in Lemma~\ref{lemma:2}, we set $\zeta=\log{(4dN/(1-\gamma)\xi)}$, $\beta=c \cdot d r_{\text{\rm max}}\sqrt{\zeta}/(1-\gamma)$ and $\lambda =1$.
    %where $V_{\text{\rm max}}=r_{\text{\rm max}}/(1-\gamma)$, $c>0$ is an absolute constant and $\xi \in (0,1)$ is the confidence parameter. 
    Then with probability $1-\xi$, the suboptimality bound of the policy $\widehat{\pi}$ generated by Algorithm~\ref{alg:1} satisfies
    \begin{align*}
     \text{\rm SubOpt}\big(\widehat{\pi};\gamma_e \big) 
    \leq &\frac{2c}{(1-\gamma)^2} \sqrt{c^\dagger d^3\zeta/N}\cdot r_{\text{\rm max}} \nonumber\\
    &+ \frac{\gamma_{e}-\gamma}{(1-\gamma)(1-\gamma_{e})}\cdot r_{\text{\rm max}}.
    \end{align*}
\end{theorem}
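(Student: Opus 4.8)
The plan is to reduce the $\gamma_e$-evaluated suboptimality to the $\gamma$-evaluated suboptimality, which Lemma~\ref{lemma:2} already controls, plus the value-bias term supplied by Lemma~\ref{lemma:1}. Concretely, I would start from the definition $\text{SubOpt}(\widehat\pi;\gamma_e) = V_{\gamma_e,M}(\pi^*_{\gamma_e}) - V_{\gamma_e,M}(\widehat\pi)$ and introduce the guidance-optimal policy $\pi^*_\gamma$ as the pivot comparator, since that is precisely the policy against which Lemma~\ref{lemma:2} measures a gap.

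First I would upper bound the $\gamma_e$-value of the true optimum. Applying the right-hand inequality of Lemma~\ref{lemma:1} to $\pi = \pi^*_{\gamma_e}$ gives $V_{\gamma_e,M}(\pi^*_{\gamma_e}) \le V_{\gamma,M}(\pi^*_{\gamma_e}) + \frac{\gamma_e-\gamma}{(1-\gamma)(1-\gamma_e)}r_{\max}$, and then the optimality of $\pi^*_\gamma$ under the guidance discount, namely $V_{\gamma,M}(\pi^*_{\gamma_e}) \le V_{\gamma,M}(\pi^*_\gamma)$, replaces the true optimum by the guidance optimum at the cost of the stated bias. Next I would lower bound the $\gamma_e$-value of $\widehat\pi$: the left-hand inequality of Lemma~\ref{lemma:1} applied to $\widehat\pi$ yields $V_{\gamma,M}(\widehat\pi) \le V_{\gamma_e,M}(\widehat\pi)$, i.e. $-V_{\gamma_e,M}(\widehat\pi) \le -V_{\gamma,M}(\widehat\pi)$. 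Combining the two, every $\gamma_e$-value collapses to a $\gamma$-value and I obtain $\text{SubOpt}(\widehat\pi;\gamma_e) \le \text{SubOpt}(\widehat\pi;\gamma) + \frac{\gamma_e-\gamma}{(1-\gamma)(1-\gamma_e)}r_{\max}$.

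Finally I would control the residual $\text{SubOpt}(\widehat\pi;\gamma)$. Because $\text{SubOpt}(\widehat\pi;\gamma) = \mathbb{E}_{s_0\sim\mu_0}\bigl[\text{SubOpt}(\widehat\pi,s_0;\gamma)\bigr]$ by the definition of $V_{\gamma,M}(\cdot)$ and of the per-state suboptimality, and Lemma~\ref{lemma:2}, under the stated choices of $\lambda$, $\beta$, and $\zeta$, bounds $\text{SubOpt}(\widehat\pi,s;\gamma)$ uniformly over $s\in\mathcal{S}$ by $\frac{2c\,r_{\max}}{(1-\gamma)^2}\sqrt{c^\dagger d^3\zeta/N}$, taking the expectation over $\mu_0$ preserves exactly this bound. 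Adding the bias term then yields the claimed inequality verbatim.

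As for the main obstacle: the argument is a short sandwich, so the only genuine care is in routing the decomposition through $\pi^*_\gamma$ rather than $\pi^*_{\gamma_e}$ and placing the two directions of Lemma~\ref{lemma:1} correctly — the upper bound must hit the optimal policy and the lower bound must hit $\widehat\pi$, so that the bias appears with a single positive sign and the optimality of $\pi^*_\gamma$ (not $\pi^*_{\gamma_e}$) can be invoked to match the comparator used in Lemma~\ref{lemma:2}. Everything else is bookkeeping, and both the $N^{-1/2}$ rate and the $(1-\gamma)^{-2}$ factor are inherited directly from Lemma~\ref{lemma:2}.
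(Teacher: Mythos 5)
Your proposal is correct and is exactly the argument the paper intends: the paper's proof of Theorem~\ref{theorem:1} is just the one-line remark that it follows from Lemma~\ref{lemma:2} and Lemma~\ref{lemma:1}, and your sandwich through the pivot comparator $\pi^*_\gamma$ (upper-bounding $V_{\gamma_e,M}(\pi^*_{\gamma_e})$ via the bias term plus $\gamma$-optimality of $\pi^*_\gamma$, lower-bounding $V_{\gamma_e,M}(\widehat\pi)$ by $V_{\gamma,M}(\widehat\pi)$, then averaging the per-state bound of Lemma~\ref{lemma:2} over $\mu_0$) is the natural and correct way to make that reduction explicit.
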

\begin{proof}
    The result follows immediately from Lemma~\ref{lemma:2} and Lemma~\ref{lemma:1}.
\end{proof}

Theorem~\ref{theorem:1} gives an upper bound for pessimistic offline algorithms with two terms. Both terms monotonically depend on the guidance discount factor $\gamma$ but in an opposite way, which suggests {\em there exists an optimal trade-off $\gamma^*\in [0, \gamma_{e}]$}.
It also suggests that this optimal trade-off $\gamma^*$ is dependent on other factors like the coverage ratio of the dataset and the size of the dataset. A small or poorly covered dataset (i.e., a large coverage coefficient) makes the first term's coefficient larger, requiring a lower discount factor to achieve optimal performance. We empirically observe this effect on both toy examples as well as large D4RL tasks, as shown in Section~\ref{experiments}.

\subsection{Pessimistic Effect}
This section analyses the pessimism effect of a lower discount factor. We show that, perhaps surprisingly, learning with a lower discount factor is equivalent to one type of the model-based pessimism mechanism, as depicted in Algorithm~\ref{alg:2}. This is characterized by the following lemma.
% \begin{lemma}
%     \label{lemma:3}
%     The optimal value function with a smaller discount factor is bounded by the model-based pessimistic value function. Formally, let
%     \begin{equation}
%         \label{model_based_policy_opt}
%         V_{\text{mb}}(\gamma,\varepsilon) = V_\gamma(\pi^*_{M_\varepsilon}), 
%     \end{equation}
%     where $\pi^*_{M_\varepsilon} \in \argmax_{\pi\in\Pi} \argmin_{M\in\mathcal{M}_\varepsilon} V_{M,\gamma}(\pi)$, and 
%     \begin{equation}
%         \label{small_gamma_policy_opt}
%         \mathcal{M}_\varepsilon = \set{M|D_{TV}(\cP_M(\cdot|s,a),\cP_{M_0}(\cdot|s,a))\leq \varepsilon, \forall s,a }, \notag
%     \end{equation}
%     then we have
%     \begin{equation}
%         V_{\text{mb}}(\gamma,2\varepsilon) \leq V^*_{M_0,(1-\varepsilon)\gamma}\leq V_{\text{mb}}(\gamma,2(1-p)\varepsilon),
%     \end{equation}
%     where $p=\sup_{s,a}\sum_{s'\in\cS_{\text{min},(1-\varepsilon)\gamma}}\cP(s'|s,a)$ and $\cS_{\text{min},\gamma}=\{s\in\cS\given V_{\text{min}}(s)=\min_s V^*_\gamma(s)\}$.
% \end{lemma}
\begin{lemma}
    \label{lemma:3}
    %The optimal value function with a smaller discount factor is equivalent to a special class of model-based pessimistic value function up to a constant.
    The optimal value function with a lower discount factor is equivalent to the pessimistic value function over a set of models.
    Formally, let
    \begin{equation}
    \label{model_based_policy_opt}
        \pi^*_{\mathcal{M}_\varepsilon} \in \argmax_{\pi} \min_{M\in\mathcal{M}_\varepsilon} V_{M,\gamma}(\pi),
    \end{equation}
    where
    \begin{equation}
        \label{small_gamma_policy_opt}
        \mathcal{M}_\varepsilon = \set{M| \cP_M(\cdot|s,a)= (1-\varepsilon)\cP_{M_0}(\cdot|s,a)+\varepsilon \cP(\cdot) }, \notag
    \end{equation}
    and $\cP(\cdot)$ is an arbitrary distribution over $\cS$, then we have
    \begin{equation}
        V^*_{M_0,(1-\varepsilon)\gamma}= V_{M_0,\gamma}(\pi^*_{\cM_\epsilon})+\Delta,
    \end{equation}
    where $\Delta$ is an absolute constant.
\end{lemma}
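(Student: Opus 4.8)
The plan is to exploit the special structure of the set $\mathcal{M}_\varepsilon$: every model $M\in\mathcal{M}_\varepsilon$ differs from $M_0$ only by mixing in a state- and action-\emph{independent} distribution $\cP(\cdot)$, so the extra transition mass contributes a term that is constant over $\cS\times\cA$. First I would fix a policy $\pi$ and a model $M\in\mathcal{M}_\varepsilon$ (equivalently, a distribution $\cP$) and write the policy-evaluation Bellman equation
\[
V^\pi_{M,\gamma}(s)=\EE_{a\sim\pi}\Bigl[r(s,a)+\gamma\textstyle\sum_{s'}\cP_M(s'\mid s,a)\,V^\pi_{M,\gamma}(s')\Bigr].
\]
Substituting $\cP_M=(1-\varepsilon)\cP_{M_0}+\varepsilon\cP$ splits the expectation into an $M_0$ part carrying an effective discount $(1-\varepsilon)\gamma$ and a term $\gamma\varepsilon\,\EE_{s'\sim\cP}[V^\pi_{M,\gamma}(s')]$ that does not depend on $(s,a)$.

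The key step is then to solve this fixed-point equation with the ansatz $V^\pi_{M,\gamma}=V^\pi_{M_0,(1-\varepsilon)\gamma}+C$ for a scalar $C$. Since the evaluation operator is a $\gamma$-contraction, its fixed point is unique, so it suffices to verify the ansatz and pin down $C$. Plugging in and using the Bellman equation for $V^\pi_{M_0,(1-\varepsilon)\gamma}$ gives the self-consistency relation $C=\gamma\varepsilon\,\EE_{s'\sim\cP}[V^\pi_{M_0,(1-\varepsilon)\gamma}(s')+C]/(1-(1-\varepsilon)\gamma)$, which simplifies (the $\gamma\varepsilon$ and $(1-\varepsilon)\gamma$ terms combine to $1-\gamma$) to
\[
C=\frac{\gamma\varepsilon}{1-\gamma}\,\EE_{s'\sim\cP}\bigl[V^\pi_{M_0,(1-\varepsilon)\gamma}(s')\bigr]\ \ge 0.
\]
Averaging over $\mu_0$ yields $V_{M,\gamma}(\pi)=V_{M_0,(1-\varepsilon)\gamma}(\pi)+C$, where only the nonnegative additive term $C$ depends on the model through $\cP$.

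Next I would carry out the inner minimization over $M\in\mathcal{M}_\varepsilon$. Because $V_{M_0,(1-\varepsilon)\gamma}(\pi)$ is independent of $\cP$ and $C$ is a nonnegative multiple of $\EE_{s'\sim\cP}[V^\pi_{M_0,(1-\varepsilon)\gamma}(s')]$, the worst-case model concentrates $\cP$ on a state minimizing $V^\pi_{M_0,(1-\varepsilon)\gamma}$, giving
\[
\min_{M\in\mathcal{M}_\varepsilon}V_{M,\gamma}(\pi)=V_{M_0,(1-\varepsilon)\gamma}(\pi)+\frac{\gamma\varepsilon}{1-\gamma}\min_{s'\in\cS}V^\pi_{M_0,(1-\varepsilon)\gamma}(s').
\]
The main obstacle is the outer maximization, since this objective is not simply $V_{M_0,(1-\varepsilon)\gamma}(\pi)$. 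Here I would invoke the fundamental property that the optimal policy $\pi^\star:=\pi^*_{M_0,(1-\varepsilon)\gamma}$ dominates every other policy \emph{pointwise}, i.e.\ $V^{\pi^\star}_{M_0,(1-\varepsilon)\gamma}(s)\ge V^\pi_{M_0,(1-\varepsilon)\gamma}(s)$ for all $s$. This domination implies $\pi^\star$ simultaneously maximizes the $\mu_0$-averaged term and the $\min_{s'}$ term (the latter because pointwise domination is preserved under taking the minimum over states), and since the two are combined with a nonnegative weight, $\pi^\star$ maximizes the whole objective. Hence $\pi^*_{\mathcal{M}_\varepsilon}$ may be taken equal to $\pi^\star$.

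Finally, with $\pi^*_{\mathcal{M}_\varepsilon}=\pi^\star$, the claimed identity holds with $\Delta=V^*_{M_0,(1-\varepsilon)\gamma}-V_{M_0,\gamma}(\pi^\star)$, the gap between evaluating the \emph{same} policy $\pi^\star$ at discounts $(1-\varepsilon)\gamma$ and $\gamma$. This quantity is fixed once $M_0,\gamma,\varepsilon,\mu_0$ are fixed, independent of the optimization, so it is an absolute constant, and \cref{lemma:1} bounds it by $\varepsilon\gamma\, r_{\text{max}}/\bigl((1-(1-\varepsilon)\gamma)(1-\gamma)\bigr)$. I expect the delicate points to be the self-consistent resolution of $C$ (the additive constant itself depends on the value function being solved for) and the pointwise-domination argument, which is what prevents the adversarial $\min_{s'}$ term from altering the optimal policy.
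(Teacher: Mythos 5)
Your proof is correct, and it takes a genuinely different route from the paper's. The paper argues at the level of the Bellman \emph{optimality} operator: it writes down a robust value iteration in which the adversary's choice has already been collapsed into a $\gamma\varepsilon\min_{s'}V(s')$ term, notes that this iteration is a $\gamma$-contraction with a unique fixed point, and verifies that adding the explicit constant $\Delta=\gamma\varepsilon\min_s[\max_a Q(s,a)]/(1-\gamma)$ to the $(1-\varepsilon)\gamma$-optimal $Q$-function produces exactly that fixed point; the coincidence of the greedy (hence optimal) policies then falls out because the two $Q$-functions differ by a constant. You instead argue policy-by-policy at the level of \emph{evaluation}: for each fixed $(\pi,\cP)$ you solve the evaluation fixed point exactly, obtaining $V^\pi_{M,\gamma}=V^\pi_{M_0,(1-\varepsilon)\gamma}+\tfrac{\gamma\varepsilon}{1-\gamma}\,\EE_{s'\sim\cP}\bigl[V^\pi_{M_0,(1-\varepsilon)\gamma}(s')\bigr]$, and then carry out the inner $\min$ and outer $\max$ explicitly, resolving the latter through pointwise domination by the $(1-\varepsilon)\gamma$-optimal policy. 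Your route is longer but buys an exact formula for $\min_{M\in\cM_\varepsilon}V_{M,\gamma}(\pi)$ for \emph{every} policy, not just the optimizer, and it supplies the justification the paper only asserts (``it is easy to see that \dots'') for why the adversary's best response concentrates the $\varepsilon$ mass on the minimal-value state; the paper's route is shorter because the single combined contraction handles the interchange of $\max_\pi$ and $\min_M$ implicitly. Two cosmetic caveats, neither fatal and both shared with the paper's own statement: the inner minimum over $\cP$ is in general an infimum unless the minimizing state is attained, and the final identity with your $\Delta=V^*_{M_0,(1-\varepsilon)\gamma}-V_{M_0,\gamma}(\pi^\star)$ holds only for the $\mu_0$-averaged scalars, since the pointwise difference between evaluating the same policy at the two discounts is not state-independent in general.
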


\begin{proof}
    See Appendix~\ref{proof_lemma_3} for a detailed proof.
\end{proof}

The equality in Lemma~\ref{lemma:3} shows that learning with a lower discount factor itself acts as a kind of model-based pessimism, which allows us to derive a bound without any other additional regularizations. We consider the model-based pessimism, where the model parameters are learned through maximum likelihood estimation (MLE). With the techniques in~\citep{geer2000empirical} that allow us to estimate the concentration rate of the MLE estimator, we have the following theorem. The proof of the following theorem is analogous to the analysis in~\citep{uehara2021pessimistic}.
\begin{theorem}
    \label{theorem:2}
    (\textit{informal}) Suppose there exists an absolute constant 
    \begin{align}
        \label{eq:event_opt_explore_2}
        c^\ddagger=\sup_{x\in\mathbb{R}^d} \frac{x^{\top}  \Sigma_{\pi^*} x}{x^{\top} \Sigma_\rho x } < \infty,
    \end{align} 
    $\Sigma_{\rho}=\EE_{\rho}[\phi(s,a)\phi(s,a)^{\top}],~\Sigma_{\pi^{*}}=\EE_{d^{\pi^{*}} }[\phi(s,a)\phi(s,a)^{\top}].$

    And suppose the underlying MDP follows the regularity condition in Assumption~\ref{assumption_regularity}. Set 
    $$\gamma = (1-\varepsilon)\gamma_e,~\varepsilon=c_1 \sqrt{d\zeta/N},\zeta = \log^2{(c_2Nd/\xi)}.$$
    Then, with probability $1-\xi$, Learning with a guidance discount factor $\gamma$ yields a policy $\hat{\pi}$ such that
    \begin{align}
        \text{\rm SubOpt}\big(\widehat{\pi};\gamma_e \big) \leq 
        \frac{c_3}{(1-\gamma_e)^{2}}\sqrt{c^\ddagger d^2 \zeta / N} \cdot r_{\text{max}},
    \end{align}
    % where $c^\ddagger=\sup_{x\in\mathbb{R}^d} \frac{x^{\top} \Sigma_\rho x}{x^{\top} \Sigma_{\pi^*} x }$, $\Sigma_{\rho}=\EE_{\rho}[\phi(s,a)\phi(s,a)^{\top}]$, $\Sigma_{\pi^{*}}=\EE_{d^{\pi^{*}} }[\phi(s,a)\phi(s,a)^{\top}]$, and $c_1,c_2,c_3$ are universal constants.
    where $c_1,c_2,c_3$ are constants.
\end{theorem}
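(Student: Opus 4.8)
The plan is to combine the equivalence established in Lemma~\ref{lemma:3} with a concentration bound on the MLE model estimate, and then run the MBPPO analysis of Algorithm~\ref{alg:2}. First I would invoke Lemma~\ref{lemma:3} to translate the problem: learning with guidance discount $\gamma=(1-\varepsilon)\gamma_e$ is (up to an additive constant $\Delta$, which cancels in suboptimality differences) equivalent to solving the model-based pessimistic objective $\max_\pi \min_{M\in\mathcal{M}_\varepsilon} V_{M,\gamma}(\pi)$ over the contaminated model class $\mathcal{M}_\varepsilon$. The key point is that $\mathcal{M}_\varepsilon$ mixes the nominal model $M_0$ with an $\varepsilon$ fraction of arbitrary transitions, so if I choose $\varepsilon=c_1\sqrt{d\zeta/N}$ large enough that the true transition kernel lies inside $\mathcal{M}_\varepsilon$ with high probability, then pessimism over $\mathcal{M}_\varepsilon$ becomes a valid conservative surrogate for the true MDP.

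The second step is the statistical one: I would show that the MLE estimate $\hat{M}_0$ of the linear transition parameters concentrates around the truth at rate governed by the regularity condition in Assumption~\ref{assumption_regularity}. Using the empirical-process machinery of~\citet{geer2000empirical}, one obtains a bound on a squared Hellinger (or total-variation) distance between $\cP_{\hat{M}_0}$ and the true kernel of order $d\zeta/N$, with $\zeta=\log^2(c_2 Nd/\xi)$ absorbing the log-covering-number and failure-probability overhead. Translating this density-level bound into a feature-weighted bound (controlling $\|\phi(s,a)^\top(\hat{M}_0-M^\star)\|$ in the appropriate norm) is what lets me certify, with probability $1-\xi$, that the true model is captured by the $\varepsilon$-ball $\mathcal{M}_\varepsilon$ centered at $\hat{M}_0$ for the stated choice of $\varepsilon$.

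The third step is the decomposition of the suboptimality. Because $\hat\pi$ is the pessimistic maximizer over $\mathcal{M}_\varepsilon$ and the true model is admissible, the standard pessimism argument gives $\text{SubOpt}(\hat\pi;\gamma_e) \le 2\sup_{M\in\mathcal{M}_\varepsilon}|V_{M,\gamma_e}(\pi^*) - V_{M_0,\gamma_e}(\pi^*)|$ evaluated along the optimal policy's occupancy. I would then bound this model-misspecification gap by a simulation-lemma-style telescoping argument: the per-step transition discrepancy is controlled by $\varepsilon$ weighted through the feature covariance, and the coverage coefficient $c^\ddagger$ in Equation~\eqref{eq:event_opt_explore_2} converts the behavior-policy covariance $\Sigma_\rho$ (under which the MLE concentrates) into the optimal-policy covariance $\Sigma_{\pi^*}$ (under which the value gap is measured). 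The effective-horizon factor $1/(1-\gamma_e)$ enters once from the value-difference telescoping and once from the reward magnitude $r_{\max}/(1-\gamma_e)$, producing the $(1-\gamma_e)^{-2}$ prefactor. Substituting $\varepsilon=c_1\sqrt{d\zeta/N}$ and collecting the $\sqrt{c^\ddagger}$ and $\sqrt{d}$ factors from the concentration and change-of-measure yields the claimed $\frac{c_3}{(1-\gamma_e)^2}\sqrt{c^\ddagger d^2\zeta/N}\cdot r_{\max}$ rate.

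The main obstacle will be the second step, namely obtaining a \emph{feature-weighted} MLE concentration bound of exactly order $\sqrt{d\zeta/N}$ in the right norm rather than a crude uniform one. The difficulty is that the~\citet{geer2000empirical} bounds are naturally stated in Hellinger distance under the data-generating distribution $\rho$, whereas the value gap must be controlled under $d^{\pi^*}$; bridging these two requires the finite coverage coefficient $c^\ddagger$ and a careful argument that the contamination radius $\varepsilon$ simultaneously dominates the estimation error (so the true model is contained, ensuring validity of pessimism) while remaining small enough that the pessimistic value does not collapse. Balancing these two competing requirements, and verifying that Assumption~\ref{assumption_regularity} supplies the boundedness and well-conditioning needed for the empirical-process entropy bound, is where the real work lies; the remaining telescoping and substitution are routine.
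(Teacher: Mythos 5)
Your proposal follows essentially the same route as the paper's proof: translate the lower discount factor into pessimism over the contaminated model class via Lemma~\ref{lemma:3}, use the \citet{geer2000empirical} MLE concentration to show the true model lies in $\mathcal{M}_\varepsilon$ for $\varepsilon = c_1\sqrt{d\zeta/N}$, and then combine the standard pessimism decomposition with a simulation-lemma argument and the coverage coefficient $c^\ddagger$ to obtain the $(1-\gamma_e)^{-2}\sqrt{c^\ddagger d^2\zeta/N}$ rate. You also correctly identify the two competing requirements on $\varepsilon$ (containment of the truth versus non-collapse of the pessimistic value) and the role of Assumption~\ref{assumption_regularity}, which is exactly where the paper's technical effort is concentrated.
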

\begin{proof}
    See Appendix~\ref{proof_theorem_2} for a detailed description and proof.
\end{proof}

Similar to $c^{\dagger}$ in Equation~\eqref{eq:event_opt_explore}, $c^{\ddagger}$ can be interpreted as another coverage coefficient, and the difference between $c^{\dagger}$ and $c^{\ddagger}$ are  technical. Theorem~\ref{theorem:2} shows that, with a properly chosen $\gamma$, we have a performance guarantee without any other offline techniques. We name this effect of the discount factor the pessimistic effect. 
Note that to make this bound meaningful, we need the dataset size $N$ to be large enough so that $1-\varepsilon >0$. This means that this theorem applies when the data is sufficient, contrary to the condition for the regularization effect. Compared to the bound in Theorem~\ref{theorem:1}, we note that the bound in Theorem~\ref{theorem:2} is smaller with a factor of $\sqrt{d}$ because a lower $\gamma$ resembles the model-based pessimism rather than a model-free one. 

We empirically verify this effect on the tabular MDPs and the D4RL dataset, where simple discount factor regularization is enough to derive a reasonable performance. 
% While the pessimism of lower $\gamma$ is not as good as the state-of-the-art offline algorithms, it serves as a good baseline. It significantly surpasses online algorithms with no discount factor pessimism. 
This pessimistic effect suggests that $\gamma$ affects the performance in offline RL differently compared to online settings.

\begin{figure}[t]
    \centering
    \subfigure{
    \includegraphics[scale=0.35]{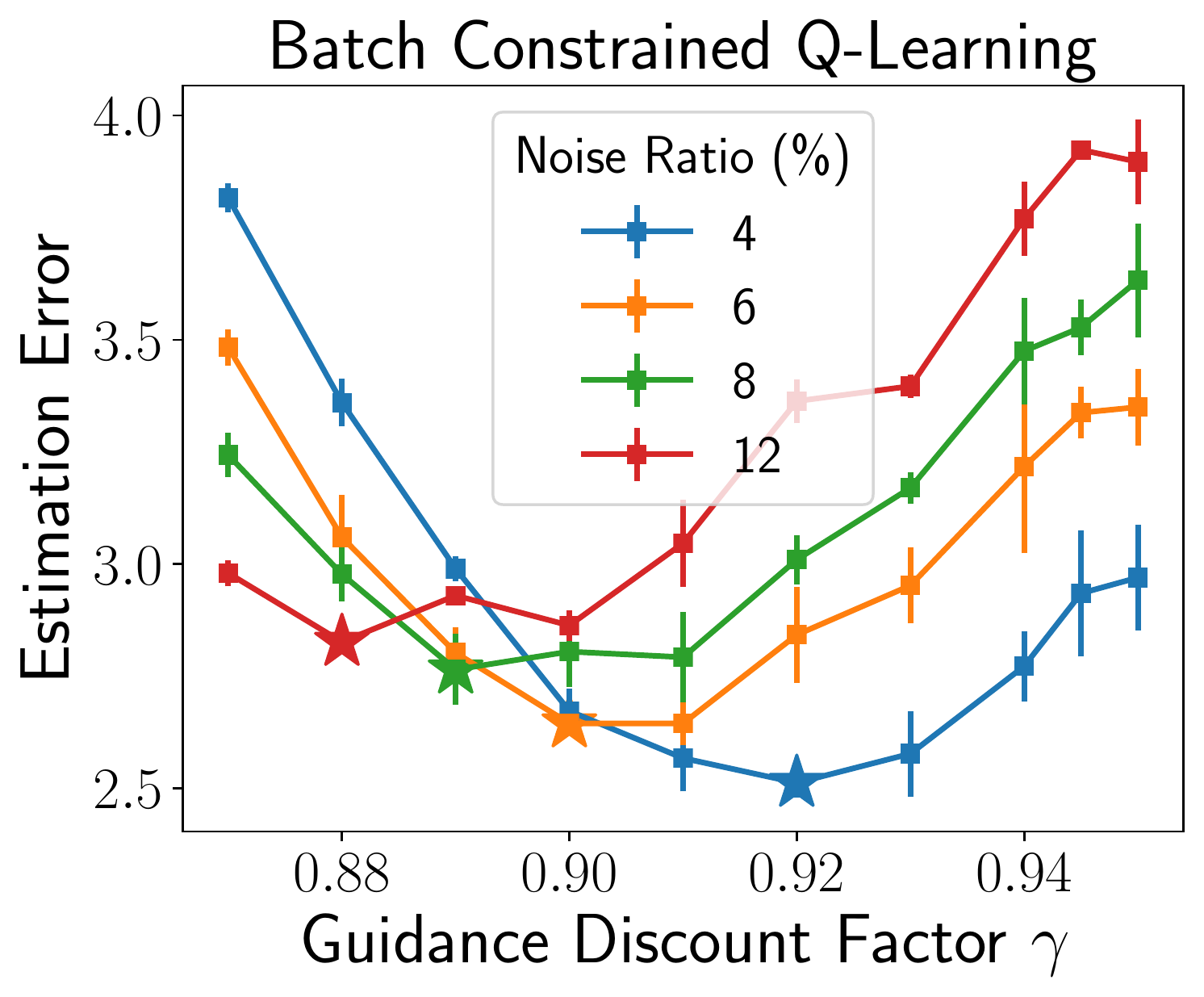}}
     \subfigure{
    \includegraphics[scale=0.35]{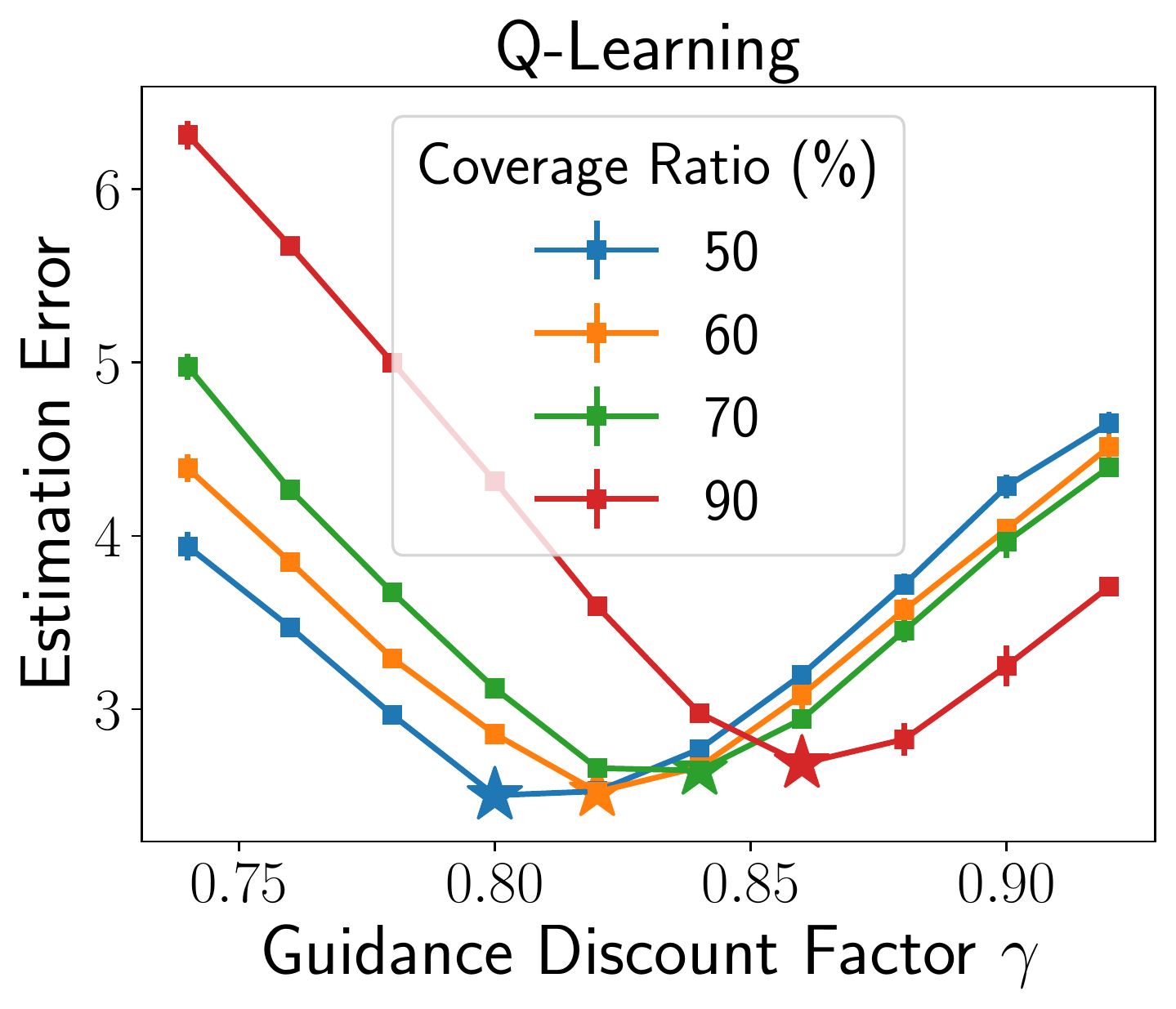}}
    \caption{The estimation error of BCQ and Q-Learning in the random MDP task.
    The star shapes mark the minimum of the curve.}
    \label{fig6: bcq}
\end{figure}

\begin{table*}[t]
    \centering
    \begin{tabular}{l|c|c|c|c|c|c}
        \hline
        Tasks & {\color{blue}BCQ} & {\color{red}BCQ~($\gamma$)} & {\color{blue}TD3+BC} & {\color{red}TD3+BC~($\gamma$)} & {\color{blue}COMBO} & {\color{red}COMBO~($\gamma$)}\\
         \hline
         \hline
        walker2d (0 noised traj) & 59.6$\pm$2.7 & 51.5$\pm$3.6 & \textbf{62.0$\pm$3.2} & 52.2$\pm$1.1  & 26.1$\pm$3.2 & \textbf{65.5$\pm$1.7} \\
        walker2d (10 noised traj) & 53.7$\pm$2.5 & 51.8$\pm$1.3 & \textbf{60.9.$\pm$1.2} & 45.7$\pm$4.2  & 27.9$\pm$2.3 & \textbf{63.1$\pm$1.6} \\
        walker2d (50 noised traj) & 20.3$\pm$3.3 & \textbf{52.4$\pm$3.9} & 4.3$\pm$1.2 & \textbf{46.8$\pm$1.9} & 27.2$\pm$1.6 & \textbf{69.6$\pm$1.9} \\
        walker2d (100 noised traj) & 18.6$\pm$1.9 & \textbf{52.1$\pm$2.2} & 2.1$\pm$0.2 & \textbf{46.6$\pm$1.3} & 13.3$\pm$1.1 & \textbf{70.7$\pm$2.3} \\
        \hline
        hopper (0 noised traj) & 52.8$\pm$2.1 & 40.3$\pm$2.5 & \textbf{52.5$\pm$1.8} & 51.0$\pm$0.9 & 1.5$\pm$0.1 & \textbf{53.5$\pm$3.2} \\
        hopper (10 noised traj) & 47.9$\pm$2.1 & 41.0$\pm$2.7 & 15.4$\pm$0.5 & \textbf{47.9$\pm$0.3}  & 1.2$\pm$0.1 & \textbf{56.5$\pm$2.5} \\
        hopper (50 noised traj) & 12.7$\pm$3.5 & \textbf{44.1$\pm$1.9} & 3.0$\pm$0.2 & \textbf{47.0$\pm$0.5} & 1.0$\pm$0.1 & \textbf{48.6$\pm$4.2}\\
        hopper (100 noised traj) & 1.0$\pm$0.1 & \textbf{41.6$\pm$0.6} & 1.5$\pm$0.4 & \textbf{46.3$\pm$0.7} & 1.3$\pm$0.1 & \textbf{52.3$\pm$1.7}\\
        \hline
        halfcheetah (0 noised traj) & 40.2$\pm$1.3 & \textbf{42.1$\pm$1.1} & 45.3$\pm$1.5 & \textbf{46.9$\pm$1.6} & 32.6$\pm$1.6 & 27.6$\pm$1.5 \\
        halfcheetah (10 noised traj) & 39.5$\pm$0.3 & \textbf{40.2$\pm$3.3} & 45.7$\pm$0.4 & \textbf{47.3$\pm$1.6} & 32.3$\pm$2.8 & 29.7$\pm$2.7 \\
        halfcheetah (50 noised traj) & 36.5$\pm$0.9 & \textbf{37.8$\pm$0.8} & 45.9$\pm$0.3 & \textbf{47.3$\pm$1.3} & 31.1$\pm$4.7 & 28.0$\pm$1.6 \\
        halfcheetah (100 noised traj) & 35.4$\pm$1.1 & \textbf{36.4$\pm$1.7} & 47.3$\pm$1.0 & \textbf{46.1$\pm$1.8} & 30.0$\pm$1.9 & 29.3$\pm$0.6 \\
        \hline
    \end{tabular}
    \caption{Experimental results on noised D4RL tasks, such as walker2d ($x$), hopper ($x$) and halfcheetah ($x$), containing 50 medium and $x$ noised trajectories.
    We conduct a comparison between \textcolor{blue}{original} and \textcolor{red}{lower} guidance discount factor~(e.g., \textcolor{blue}{BCQ} and \textcolor{red}{BCQ ($\gamma$)}).}
    \label{tab1: performance}
\end{table*}

\section{Empirical Results}
\label{experiments}
In this section we examine the role of discount factor through various experiments and aim to investigate the following questions:
1. How effective is the regularization effect of $\gamma$ and how other factors affect its performance?
2. How effective is the pessimistic effect of $\gamma$ and how does it contribute to the performance? 
3. Is a lower guidance discount factor effective and essential in practical offline settings?
We answer the questions above by experiments on both tabular MDPs and D4RL tasks. Each experiment result is averaged over three random seeds with the standard deviation.

\subsection{Regularization Effect}

\subsubsection{Tabular experiments}\label{regular toy example}
We first adopt the BCQ-style offline method in a tabular MDP environment to investigate the effectiveness of the lower discount factor.
We consider a random MDP, where the state space consists of a 30 $\times$ 30 grid, and each state has 10 actions.
The reward and the transition probabilities are generated randomly.
Given the tabular MDP, we compute the true optimal value $Q^*_{\gamma_e}$, where $\gamma_e=0.95$.
Then, we calculate the behavior policy according to $\mu(\cdot\mid s) = \text{softmax}( Q^*_{\gamma_e}(s, \cdot))\cdot \text{mask}(s,\cdot)$, where $\text{mask}(s,\cdot)$ is randomly selected in state-action space to approximate the unseen pairs in offline tasks.
Further, we calculate $\hat{\mu}(\cdot\mid s) = \text{softmax}( Q^*_{\gamma_e}(s, \cdot))\cdot \overline{\text{mask}}(s,\cdot)$ to approximate the generative model in BCQ, where $\|\overline{\text{mask}}\|_1 - \|\text{mask}\|_1 > 0$.
The $\text{Noise Ratio} = \frac{\|\overline{\text{mask}}\|_1 - \|\text{mask}\|_1}{\|\text{mask}\|_1} * 100\%$ represents the inaccuracy of the generative model.
We calculate BCQ-style value function $\hat{Q}_{\gamma}$ by constraining the maximization operator into the finite state-action space:
\begin{equation}
    \hat{Q}(s,a) = r(s,a) + \gamma \max_{a'\ \text{s.t.}\ \hat{\mu}(a'\mid s')>0}\hat{Q}(s', a')
\end{equation}

In the experiments, the proportion of masked state-action pairs is 0.5, and the noise ratio coefficients are \{4\%, 6\%, 8\%, 12\%\} respectively.
We compute the estimation error $\|(\hat{Q}_{\gamma} - Q^*_{\gamma_e})\|_{\infty}$ at seen state-action pairs for different noise ratio coefficients and different discount factors. The results are summarized in Figure~\ref{fig6: bcq}.
Each plot shows the average estimation error across 100 MDP instances.
The experimental results demonstrate that the lower guidance discount factor significantly reduces the estimation error in the tabular offline task.
Moreover, as the noise ratio increases, the optimal discount factor $\gamma^*$ marked by the star shapes becomes smaller. This indicates that discount regularization is more significant when function approximation error is large, which may result from insufficient data or poor data coverage.

%\begin{figure}[t]
%    \centering
%    \subfigure{
%    \includegraphics[scale=0.33]{}}
    %\hspace{-1.5mm}
%    \subfigure{
%    \includegraphics[scale=0.33]{}}
    %\subfigure{
    %\includegraphics[scale=0.23]{}}
%    \caption{Experimental results of online algorithm TD3 on standard D4RL tasks.}
%    \label{fig2: online algorithm}
%    \vspace{-0.5cm}
%\end{figure}
\begin{table*}[t]
    \centering
    \begin{tabular}{|c|c|c|c|c|c|}
        \hline
        Halfcheetah & random-v2 & medium-v2 & medium-expert-v2 & medium-replay-v2 & expert-v2 \\
         \hline
        SAC-N ($\gamma$=0.95) & \textbf{30.0$\pm$1.6} & \textbf{65.1$\pm$0.9} & \textbf{51.4$\pm$2.2} & \textbf{28.1$\pm$1.2} & \textbf{82.7$\pm$0.8} \\
        \hline
        SAC-N ($\gamma$=0.99) & 26.6$\pm$1.5 & 48.7$\pm$1.3 & 26.7$\pm$1.1 & 0.6$\pm$0.1 & 80.2$\pm$0.6 \\
        \hline
        \hline
        Hopper & random-v2 & medium-v2 & medium-expert-v2 & medium-replay-v2 & expert-v2 \\
         \hline
        SAC-N ($\gamma$=0.95) & 8.4$\pm$1.7 & \textbf{22.4$\pm$2.1} & \textbf{23.1$\pm$1.9} & 15.5$\pm$3.2 & \textbf{14.5$\pm$2.6} \\
        \hline
        SAC-N ($\gamma$=0.99) & 14.5$\pm$3.5 & 7.1$\pm$2.0 & 15.4$\pm$1.4 & 100.9$\pm$0.3 & 2.3$\pm$0.3 \\
        \hline
    \end{tabular}
    \caption{Experimental results on Halfcheetah and Hopper tasks in D4RL, where the Q-ensemble size N is 2 in Halfcheetah tasks and N is 50 in Hopper tasks.}
    \label{tab2: performance}
\end{table*}

\begin{table*}[h]
    \centering
    \begin{tabular}{|c|c|c|c|c|}
        \hline
        Adroit & pen-expert-v0 & door-expert-v0 & hammer-expert-v0 & relocate-expert-v0 \\
         \hline
        SAC-N (lower $\gamma$) & \textbf{97.1$\pm$3.2} & \textbf{106.4$\pm$1.9} & \textbf{100.6$\pm$2.3} & 0.5$\pm$0.1 \\
        \hline
        SAC-N ($\gamma$=0.99) & 3.6$\pm$1.1 & 2.2$\pm$0.2 & 65.5$\pm$4.2 & 0.4$\pm$0.1 \\
        \hline
    \end{tabular}
    \caption{Experimental results on Adroit tasks in D4RL, where the Q-ensemble size N is 50. We select $\gamma=0.95$ in pen-expert-v0, hammer-expert-v0 and relocate-expert-v0 tasks. We select lower $\gamma=0.9$ in door-expert-v0 tasks.}
    \label{tab4: performance}
\end{table*}
\subsubsection{Experimental results on D4RL tasks}
This section investigates the regularization effect in complex tasks.
Specifically, we evaluate various offline RL algorithms~(TD3+BC~\citep{fujimoto2021td3}, BCQ~\citep{fujimoto2019off} and COMBO~\citep{yu2021combo}) with a lower guidance discount factor on the limited and noised D4RL benchmark of MuJoCo tasks. To test performance in low data regimes and poor coverage scenarios, the training dataset in our experiments contains 50 medium trajectories and additional noisy trajectories ranging from 0 to 100. The noised trajectories are fragments of the random datasets in D4RL.
We use the author-provided implementation or the recognized code to ensure a fair and identical experimental evaluation across algorithms.
The experimental results are shown from two aspects: coverage ratio and data size.

\textbf{Coverage Ratio.}
% \paragraph{Coverage Ratio}
We report the final performance of TD3+BC with different amounts of noisy data in Table~\ref{tab1: performance} and the detailed training curves in Appendix~\ref{complete results of TD3+BC}.
The training datasets contain 50 medium and $x$ noised trajectories, with $x$ ranging from 0 to 100. The noised trajectories are fragments from the random dataset.
Results show that the performance of current offline RL methods with original discount facotr~($\gamma=0.99$) degrades with more noisy data due to the poor coverage ratio in the dataset.
In contrast, offline RL methods with a lower guidance discount factor~($\gamma=0.95$) achieve stable and robust performance in most scenarios.

Further, the generated data of model-based offline RL is usually noisy since the challenges of the limited data~(please see the performance gap between COMBO and COMBO~($\gamma$) in walker2d and hopper task).

Most scenarios in Table~\ref{tab1: performance} adopt $\gamma=0.95$ as a lower discount factor other than BCQ~($\gamma=0.9$) and COMBO~($\gamma=0.9$) in hopper tasks.

\textbf{Data Size.}
We evaluate TD3+BC on datasets containing $x$ medium and 100 noised trajectories, with $x$ ranging from 0 to 1000. We select the optimal discount factor $\gamma^* \in [0.89, 0.99]$.
Experimental results in Figure~\ref{fig4: data size} show that the optimal discount factor $\gamma^*$ increases with the number of trajectories.
That is, the effect of the discount factor is more significant when the size of the dataset is small, which is consistent with the analysis in Theorem~\ref{theorem:1}.
The experimental results also suggest we prefer a higher discount factor in large datasets~(e.g., the standard datasets in D4RL tasks).

\textbf{Sparse Reward.}
We evaluate EVL~\cite{ma2021offline} with various $\gamma$ on standard Antmaze tasks in D4RL, where the stitching (approximate dynamical programming) is necessary.
Further, Antmaze is a sparse reward task that tightens the bound in Lemma~\ref{lemma:2} up to a $1/(1-\gamma)$ factor, making the optimal $\gamma$ larger.
However, a lower $\gamma$ still works better than the usual $0.99$ even in large tasks~(Please refer to experimental results in Table~\ref{antmaze_IQL}).
These results show that the regularization effects and finetuning $\gamma$ work generally, even in sparse reward tasks.
(In the medium and large tasks, we set the terminal reward $r_T$ as 100 and 50, and the expectile ratio is 0.96.)

\begin{table}[H]
    \centering
    \begin{tabularx}{0.5\textwidth}{|c|c|c|X|}
    \hline
    play-v0 & umaze & medium~($r_T$=50) & large~($r_T$=100) \\
    \hline
     $\gamma$=0.98 & 86.7$\pm$2.5 & 73.6$\pm$2.3 & \textbf{45.7}$\pm$2.2 \\
    \hline
    $\gamma$=0.99 & 87.4$\pm$2.6 & 71.0$\pm$1.8 & 37.8$\pm$3.0 \\
    \hline
    \end{tabularx}
    \caption{Results on Antmaze-play-v0 tasks with EVL.}
    \label{antmaze_IQL}
\end{table}

\subsection{Pessimistic Effect}

We conduct experiments on both tabular MDPs and D4RL tasks with a lower discount factor and no other offline regularization to investigate the pessimistic effect.

\textbf{Tabular MDPs.} We adopt the same setting and evaluation metric as the toy example in Section~\ref{regular toy example}.
In the experiments, we define the coverage ratio as the proportion of masked state-action pairs ranging from 50\% to 90\%.
The experimental results in Figure~\ref{fig6: bcq} show that a lower discount factor promotes offline RL algorithms to have a better estimation, and the effect is more significant when the data coverage is poor. 

\begin{figure}[t]
    \centering
    \subfigure{
    \includegraphics[scale=0.35]{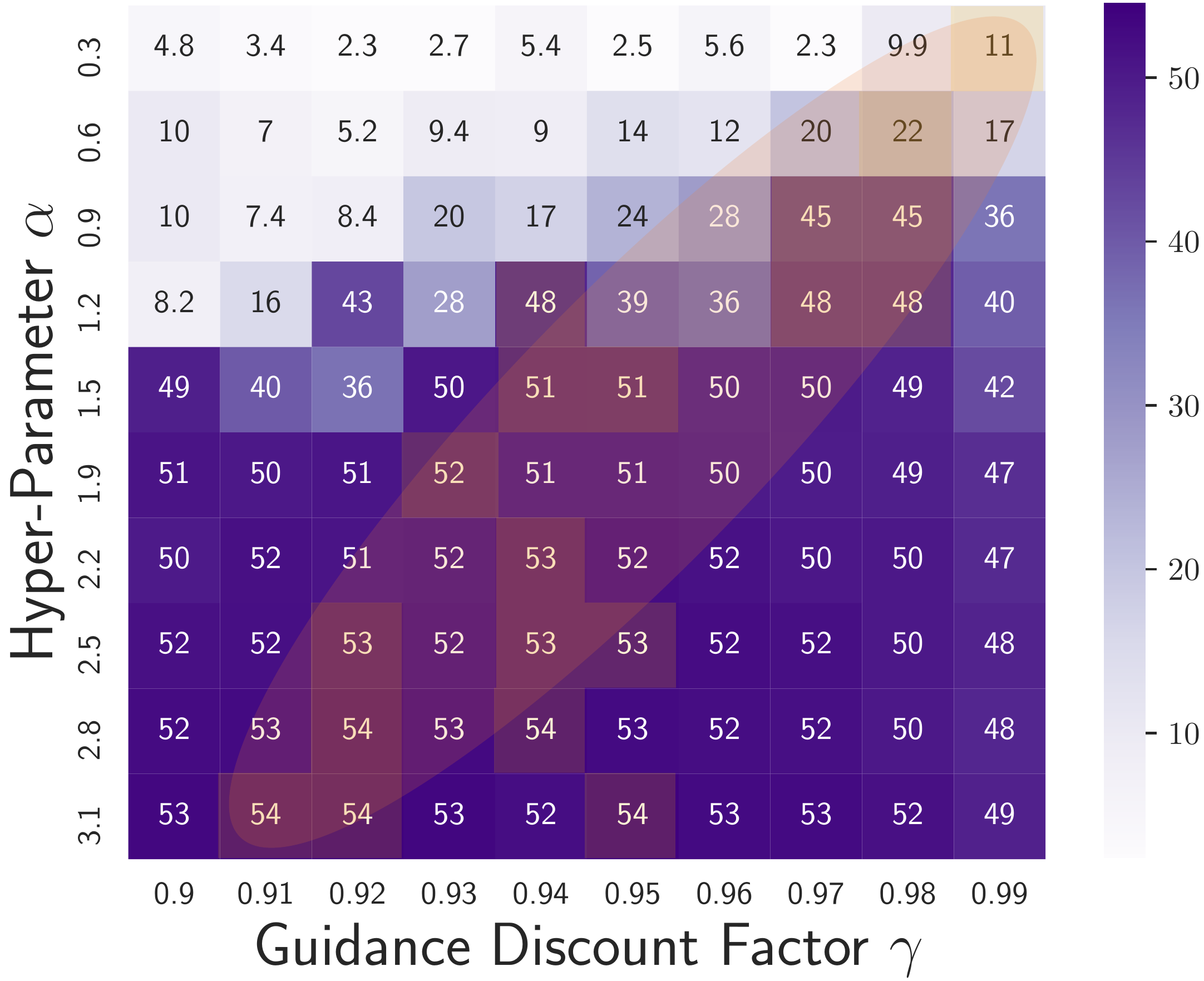}}
    \caption{Relationship between $\gamma$ and $\alpha$ of TD3+BC on halfcheetah task.
    The value is the normalized return metric.
    The optimal $\gamma^*$ is marked with orange color.}
    \label{fig5: heatmap}
    \vspace{-0.5cm}
\end{figure}

\begin{figure*}[h]
    \centering
    \subfigure{
    \includegraphics[scale=0.26]{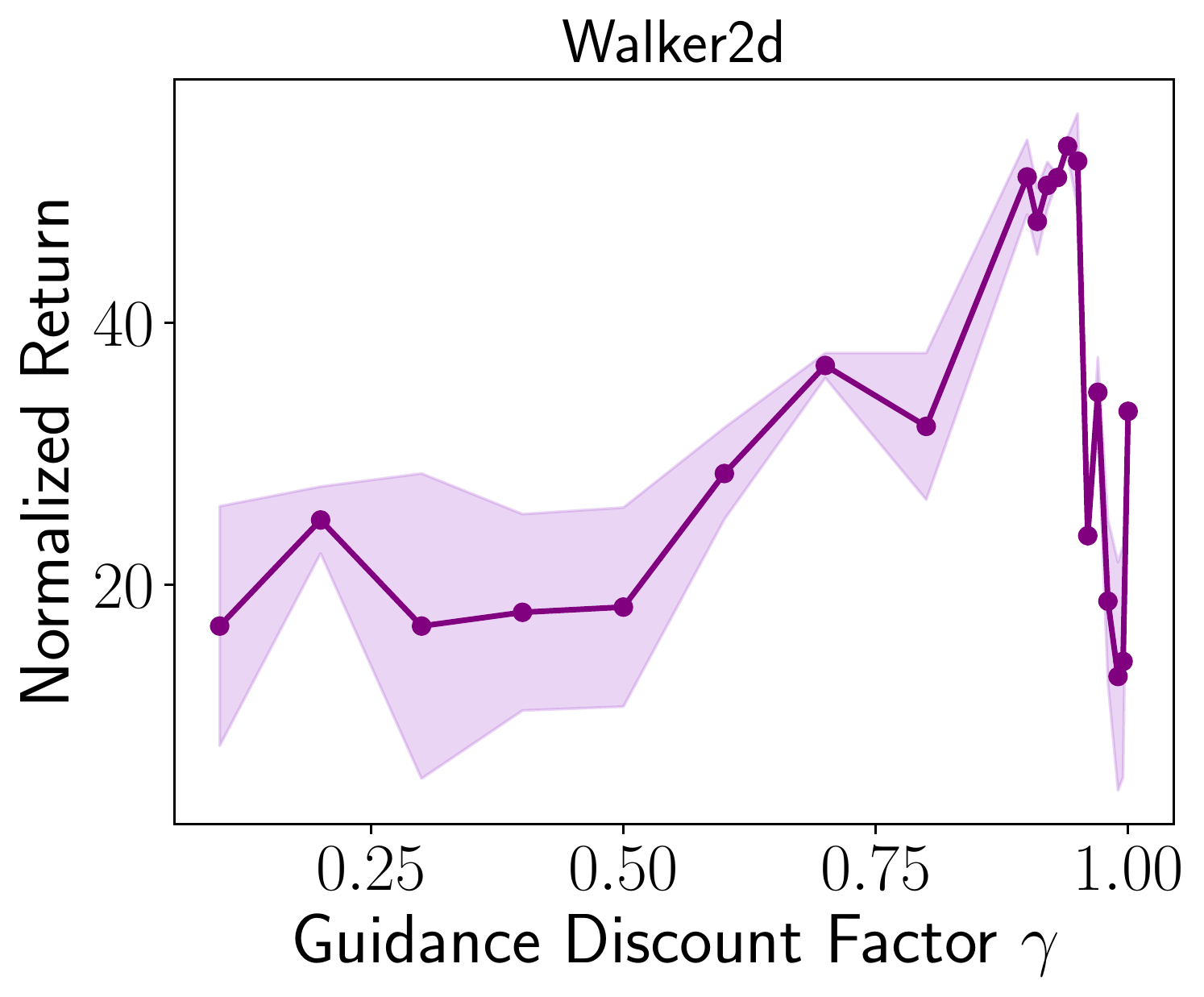}}
    \hspace{2.5mm}
    \subfigure{
    \includegraphics[scale=0.26]{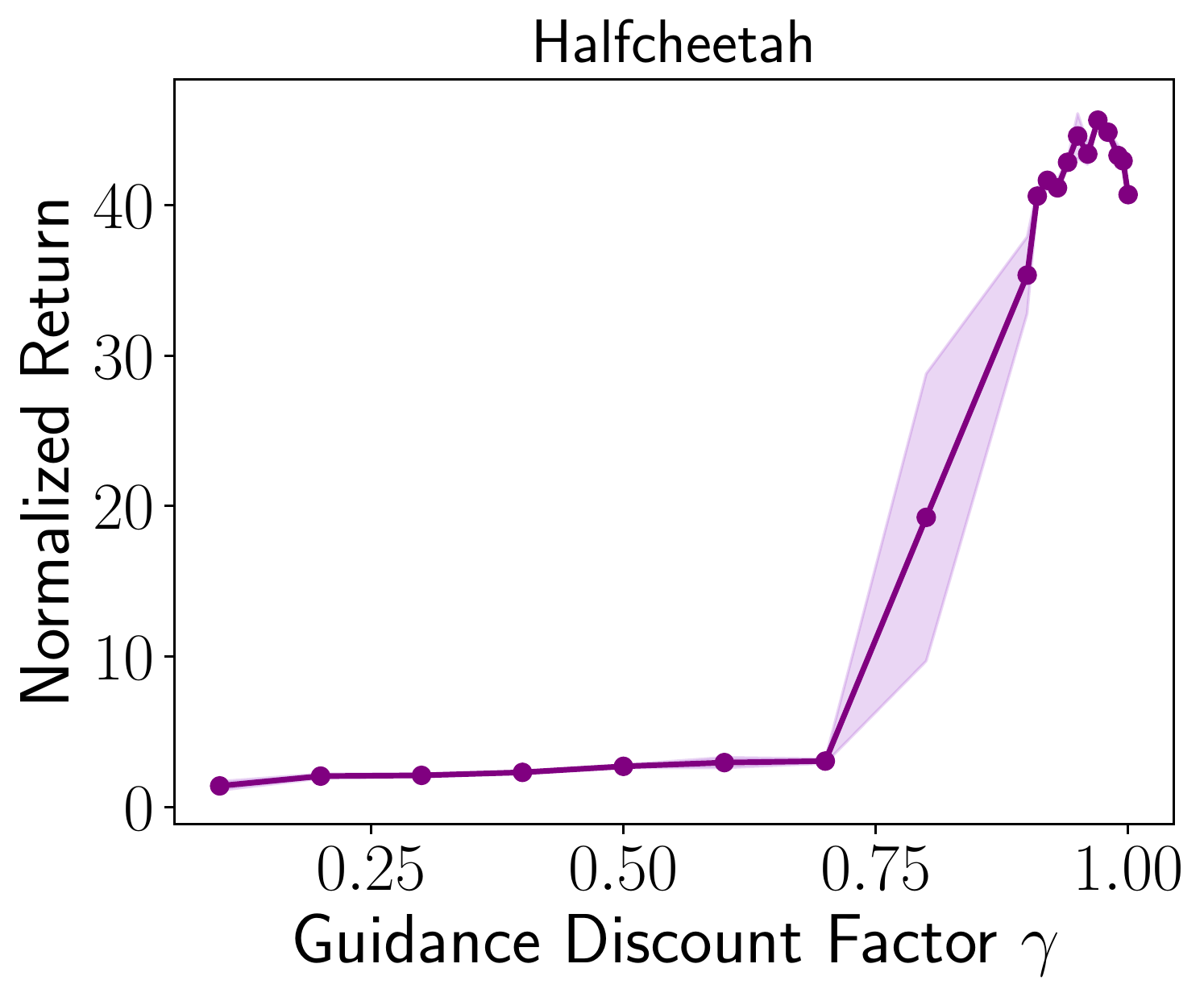}}
    \hspace{2.5mm}
    \subfigure{
    \includegraphics[scale=0.26]{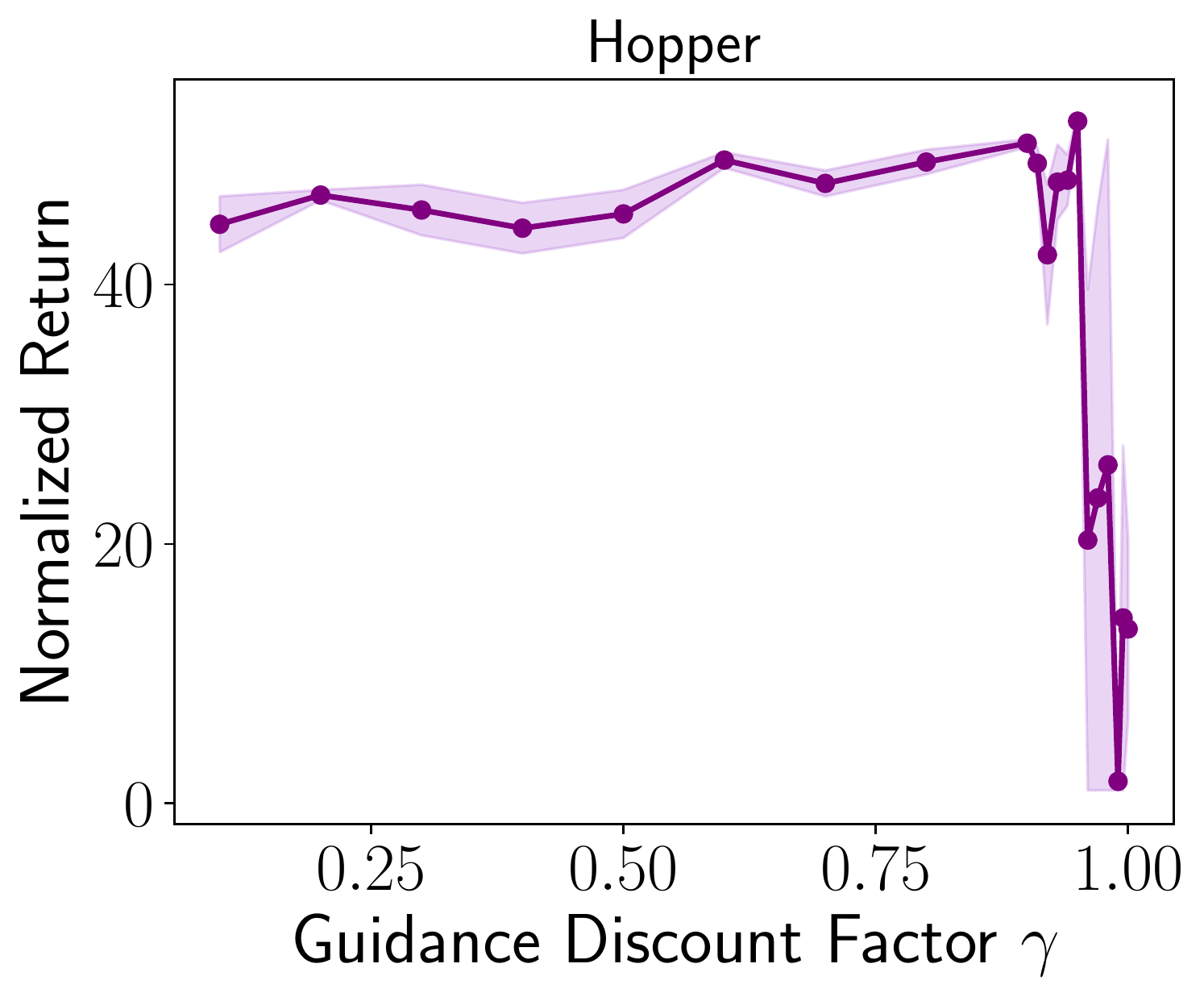}}
    \caption{Performance of TD3+BC on noised D4RL tasks containing 50 medium and 25 noised random trajectories.
    We adopt the normalized return metric proposed by D4RL benchmark~\citep{fu2020d4rl}.
    Scores roughly range from 0 to 100.
    }
    \label{fig3: gamma}
\end{figure*}

\begin{figure*}[h]
    \centering
    \subfigure{
    \includegraphics[scale=0.26]{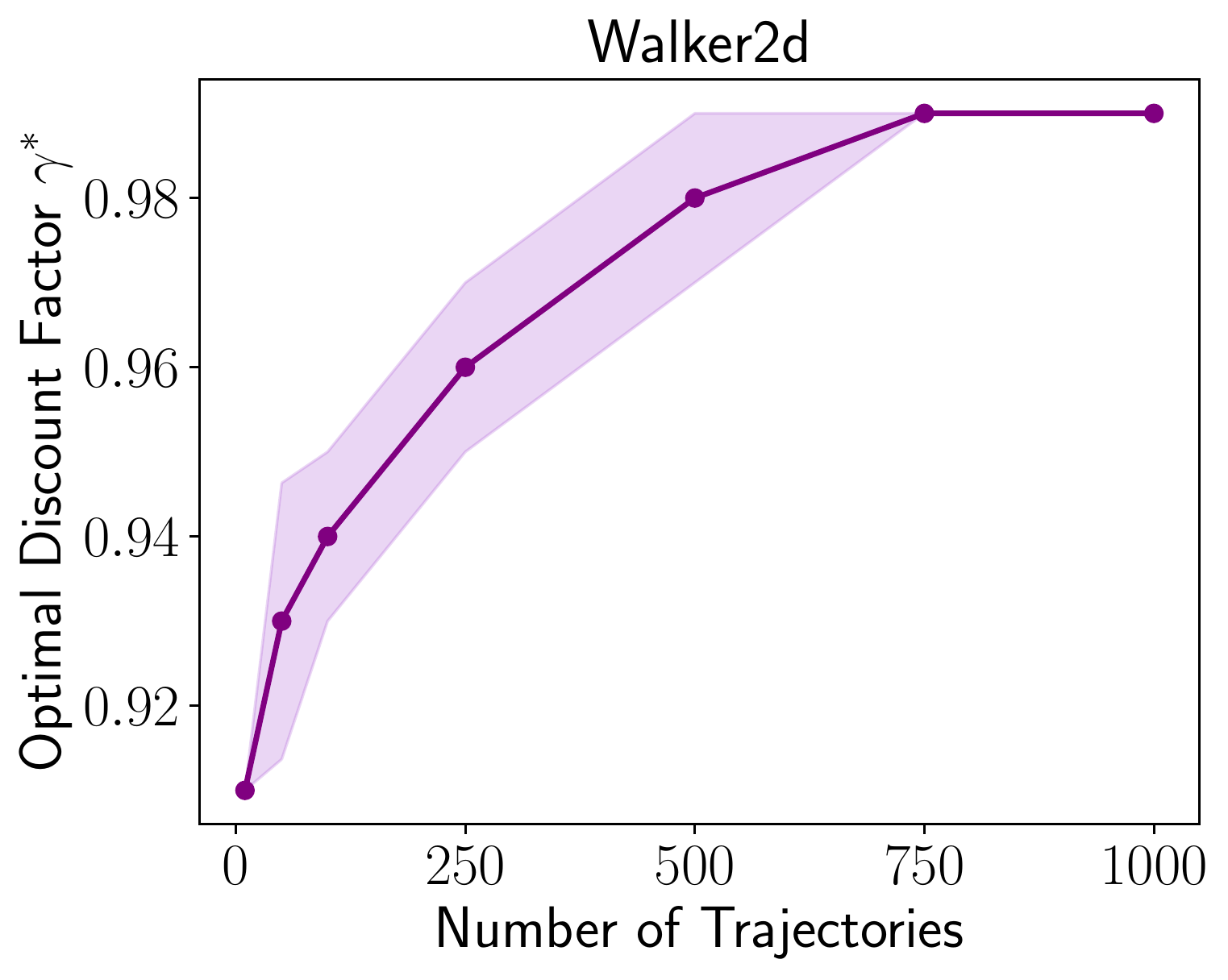}}
    \hspace{2.5mm}
    \subfigure{
    \includegraphics[scale=0.26]{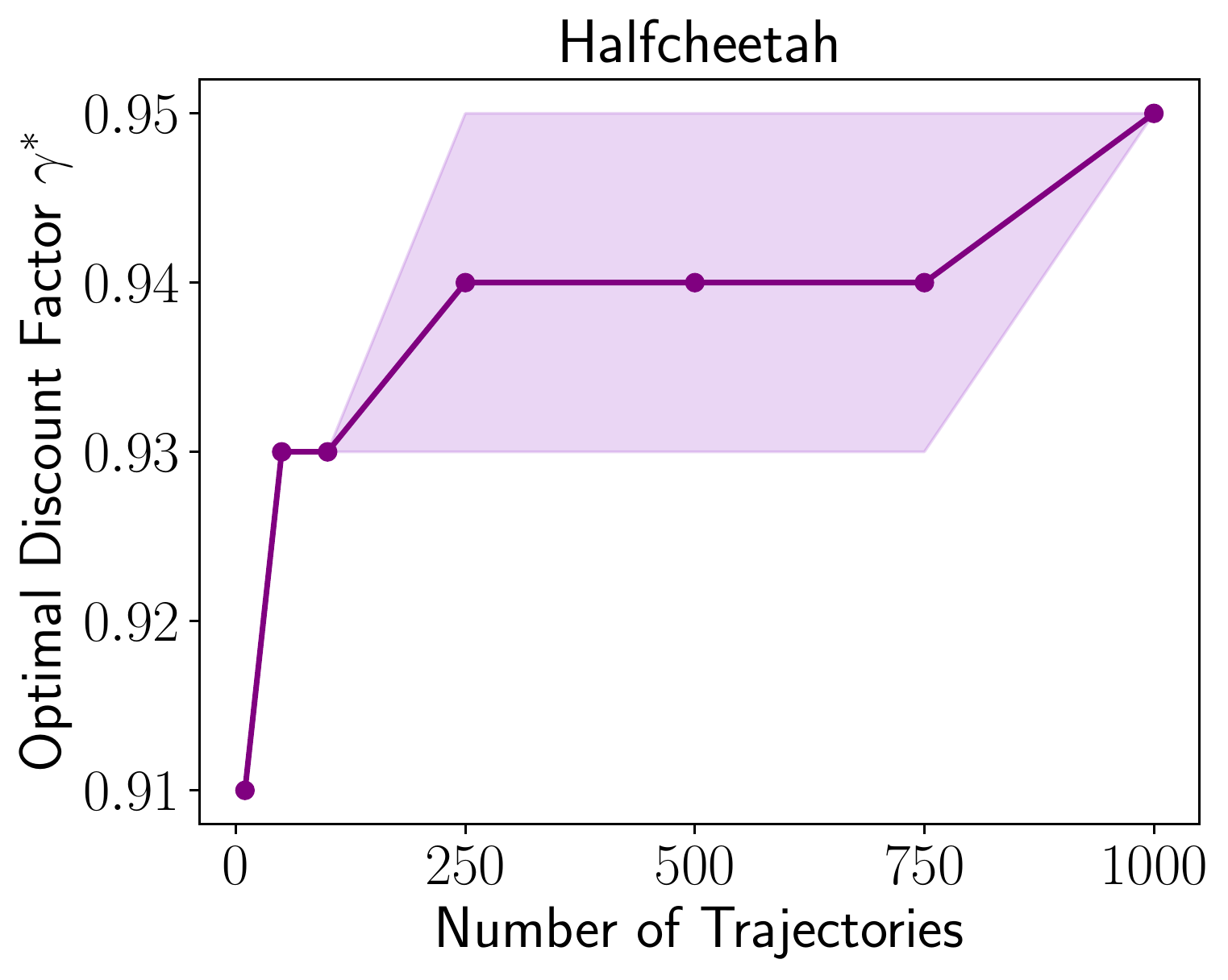}}
    \hspace{2.5mm}
    \subfigure{
    \includegraphics[scale=0.26]{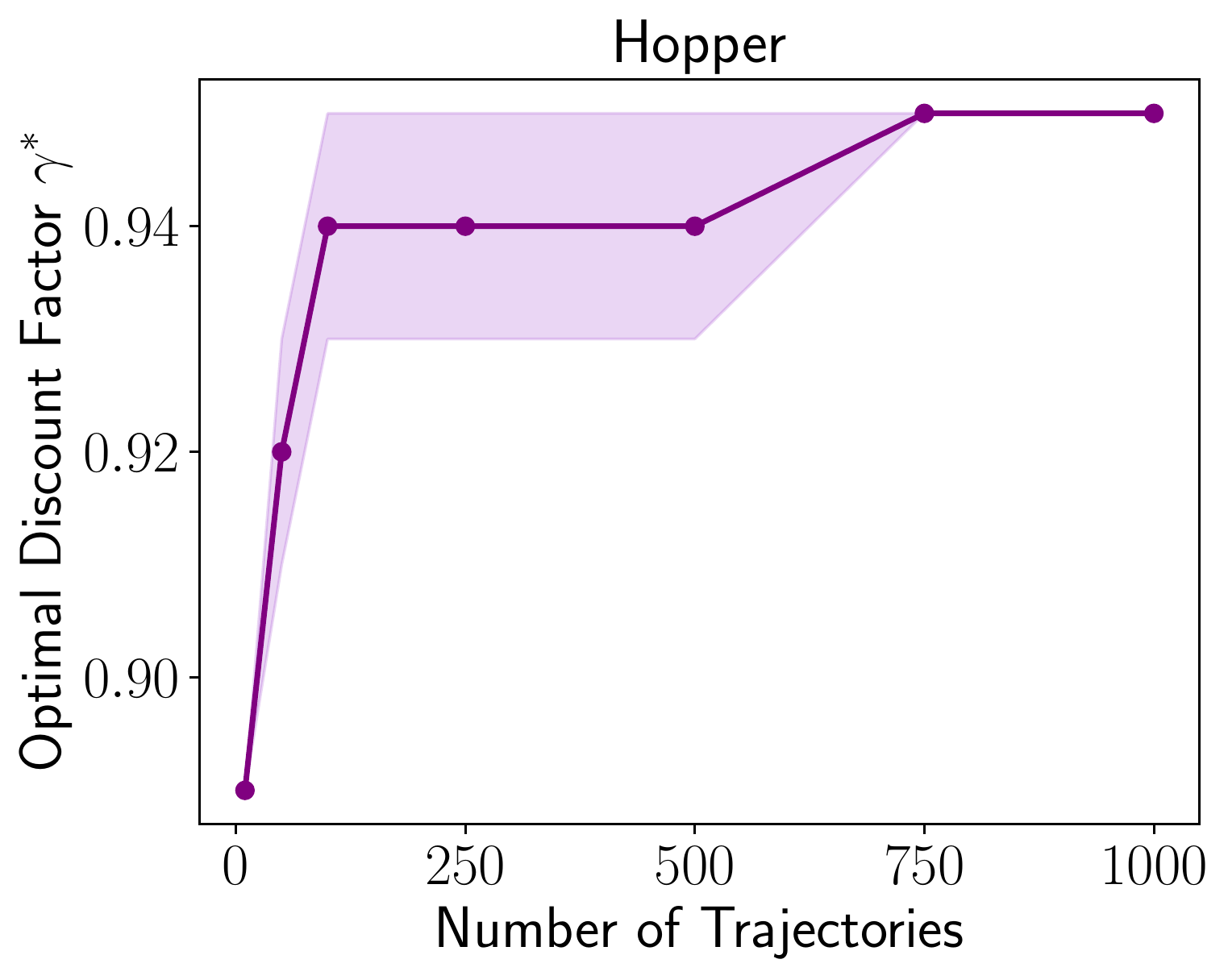}}
    \caption{The relationship between optimal discount factor $\gamma^*$ and datasize on noised D4RL tasks.
    }
    \label{fig4: data size}
\end{figure*}

\textbf{D4RL Task.} 
We evaluate the standard off-policy algorithm SAC with ensemble networks on standard D4RL datasets, which is shown in Table~\ref{tab2: performance} and Table~\ref{tab4: performance}.
Adroit tasks require dynamic programming to infer the complete action sequence for finishing the task, while a lower $\gamma$ works surprisingly well with $Q$-ensemebles.
Moreover, the experimental results on mujoco tasks show that SAC achieves a noticeable performance with a lower guidance discount factor, demonstrating the role of the discount factor as a pessimistic mechanism.
While the pessimism of lower $\gamma$ is not as good as the state-of-the-art offline algorithms, it serves as a good baseline. 
It significantly surpasses online algorithms with no discount factor pessimism. 
The pessimistic mechanism of the discount factor is coarse, in the sense that it is a single parameter that affects all state-action pairs, which explains why the pessimism of lower $\gamma$ is not as good as the state-of-the-art offline algorithms. 
We leave finding methods for more fine-grained discount factor control as interesting future work.

\subsection{Effectiveness of Discount Regularization}
\subsubsection{Sensitivity of the discount factor}
This section tests the performance sensitivity regarding $\gamma$ and whether we need to fine-tune $\gamma$ for each task.
To this end, we evaluate TD3+BC on datasets containing 50 medium and 25 noised trajectories with various $\gamma$.
In Figure~\ref{fig3: gamma}, TD3+BC achieves a high performance by decreasing $\gamma$ from 0.99 to around 0.95.
Different offline RL algorithms have minor differences in selecting $\gamma$ on various tasks~(e.g., the particular case in Table~\ref{tab1: performance}).
The suitable area of $\gamma$ is almost identical [0.95, 0.99].
% The experimental results also suggest that a slight discount factor change has the desired effect rather than an enormous change.

\subsubsection{Discussion between $\gamma$ and $\alpha$ in TD3+BC}
Many offline RL algorithms achieve a trade-off between conservation and generalization by some other parameter (e.g., the regularized hyper-parameter $\alpha$ in TD3+BC). This naturally leads to the following question: Is finetuning $\gamma$ more effective than simple regularizations like behavior cloning?
To answer this question, we evaluate TD3+BC on the halfcheetah task containing 100 medium and random trajectories fragments.
Experimental results in Figure~\ref{fig5: heatmap} show that by properly combining $\alpha$ and $\gamma$, we can solve tasks more effectively. This indicates that $\gamma$ offers more flexibility than the original regularization. Moreover, note that the behavior cloning performance in this task is poor~(the normalized score is 0.0). Therefore, the role of the lower discount factor is not equivalent to behavior cloning.

Further, the experimental results also suggest that other conservative regularization affects the optimal guidance discount factor~(shown in orange in Figure~\ref{fig5: heatmap}). In general, the stronger the other regularization, the higher the optimal discount factor, matching our intuition that there is a trade-off between two regularizations.

\section{Conclusion}
This paper examines the two distinct effects of the discount factor in offline RL, i.e., the regularization effect and the pessimistic effect. On the one hand, the discount factor acts as a regulator to trade-off optimality with sample efficiency upon existing offline techniques like a negative uncertainty-based bonus. On the other hand, we show that a lower guidance discount factor is equivalent to the model-based pessimism, where we optimize the policy's performance in the worst possible models. We quantify the above effects by analyzing the performance bounds with a lower guidance discount factor in linear MDPs. Moreover, we verify the above theoretical observations in tabular MDPs and D4RL tasks. Empirical results show that a lower discount factor can significantly improve performance in two scenarios. The first is when the data size is limited or there is poor data coverage, and we can apply a lower discount factor with additional pessimism. The second is when the data is sufficient, and we can directly use a lower discount factor as a proper pessimistic mechanism. 

Our work suggests that the discount factor plays a vital role in offline RL, promoting current offline RL methods in complex and diverse scenarios. This leaves several interesting future works: (i) Can a lower guidance discount factor be better integrated with current offline algorithms, like fine-grained guidance $\gamma$ at each transition? (ii) Is there a better theoretical explanation for the success of a lower discount factor in offline RL? (iii) How to develop more efficient offline algorithms in limited data size and insufficient coverage scenarios?
% Specifically, we characterize that pessimistic offline algorithms with a small discount factor achieve a trade-off between optimality and sample efficiency for a better performance bound.
% Further, we show an equivalence between a small discount factor and model-based pessimism, leading to a PAC bound.
% We conduct extensive experiments on tabular MDPs and offline benchmark D4RL tasks.
% The experimental results verify 

% the regularization and pessimistic effect of the small discount factor.

% Moreover, we theoretically and empirically give how the impact of a small guidance factor relies on other factors, and our work gives an initial insight into the robust offline RL.
% Due to the importance of offline tasks, we believe our work promotes offline RL methods for more complex and diverse scenarios.
\section{Acknowledgements}
This work is supported in part by Science and Technology Innovation 2030 - ``New Generation Artificial Intelligence'' Major Project (No. 2018AAA0100904) and National Natural Science Foundation of China (62176135).
% Acknowledgements should only appear in the accepted version.

% In the unusual situation where you want a paper to appear in the
% references without citing it in the main text, use \nocite
\nocite{*}

\bibliography{reference}
\bibliographystyle{icml2022}

%%%%%%%%%%%%%%%%%%%%%%%%%%%%%%%%%%%%%%%%%%%%%%%%%%%%%%%%%%%%%%%%%%%%%%%%%%%%%%%
%%%%%%%%%%%%%%%%%%%%%%%%%%%%%%%%%%%%%%%%%%%%%%%%%%%%%%%%%%%%%%%%%%%%%%%%%%%%%%%
% APPENDIX
%%%%%%%%%%%%%%%%%%%%%%%%%%%%%%%%%%%%%%%%%%%%%%%%%%%%%%%%%%%%%%%%%%%%%%%%%%%%%%%
%%%%%%%%%%%%%%%%%%%%%%%%%%%%%%%%%%%%%%%%%%%%%%%%%%%%%%%%%%%%%%%%%%%%%%%%%%%%%%%
\newpage
\appendix
\onecolumn

\section{Algorithm Details}
In this section, we describe some details of the algorithms mentioned in Section~\ref{algo_meta}.
\label{algo_describ}
\subsection{Pessimistic Value Iteration}
\label{algo_describ_pvi}
In linear MDPs, we can construct $\hat\BB_\gamma\hat{V}$ and $\Gamma$ based on $\cD$ as follows, where $\hat\BB_\gamma\hat{V}$ is the empirical estimation for $\BB_\gamma\hat{V}$. For a given dataset $\cD=\{(s_\tau,a_\tau,r_\tau)\}_{\tau=1}^{N}$, we define the empirical mean squared Bellman error (MSBE) as
\begin{equation*}
M(w) = \sum_{\tau=1}^N \bigl(r_\tau + \gamma \widehat{V}(s_{\tau+1}) - \phi (s_\tau,a_\tau)^\top w\bigr)^2 + \lambda \norm{w}_2^2
\end{equation*}
Here $\lambda>0$ is the regularization parameter. Note that $\hat{w}$ has the closed form
\#\label{eq:w18}
&\hat{w} =  \Lambda ^{-1} \Big( \sum_{\tau=1}^{N} \phi(s_\tau,a_\tau) \cdot \bigl(r_\tau + \gamma\hat{V}(s_{\tau+1})\bigr) \Bigr ) , \notag\\
&\text{where~~} \Lambda = \lambda I+\sum_{\tau=1}^N \phi(s_\tau,a_\tau)  \phi(s_\tau,a_\tau) ^\top. 
\#
Then we simply let 
\#
\label{eq:empirical_bellman}
\hat\BB_\gamma\hat{V}=\langle\phi,\hat w  \rangle .
\#
Meanwhile, we construct $\Gamma$ based on $\cD$ as 
\#\label{eq:linear_uncertainty_quantifier}
\Gamma(s, a) = \beta\cdot \big( \phi(s, a)^\top  \Lambda ^{-1} \phi(s, a)  \big)^{1/2}.
\#
Here $\beta>0$ is the scaling parameter.
\subsection{Model-based Pessimistic Policy Optimization}
\label{algo_describ_mpo}
To give a proper performance bound, we consider the following model set
\begin{equation}
    \label{MLE_model_ellipsoid}
    \cM_{\cD}=\left\{P(\cdot|s,a) \in \cM \biggiven \EE_{\cD}\left[\TV (\widehat{P}(\cdot | s,a), P(\cdot | s,a))^2\right] \leq \xi\right\},
\end{equation}
where $\widehat{P}=\argmax_{P}\EE_{\cD}[\ln P(s'\mid s,a)]$ and $\cM$ is the set of linear models. In practice, we can parameterize the model $P_\theta(\cdot|s,a)$ and train the model via the following loss
\begin{equation}
    \label{MLE_loss_ellipsoid}
    \cL(\theta,\cD)=\frac{1}{N}\sum_{\tau=1}^N \ln P_\theta (s_{\tau+1}|s_\tau,a_\tau).
\end{equation}
When assuming the transitions are Gaussian, the MLE objective can be reduced to prediction loss as follows
\begin{equation}
    \label{prediction_loss_ellipsoid}
    \cL(\theta,\cD)=\frac{1}{N}\sum_{\tau=1}^N \| f_\theta(s_\tau,a_\tau) - s_{\tau+1}\|_2.
\end{equation}
As to the minimax optimization in \eqref{alg:2_1}, we can use techniques like bi-level optimization~\citep{hong2020two} to get the approximate solution.
\section{Addtional Lemmas and Missing Proofs}
\subsection{Proof of Lemma~\ref{lemma:2}}
\label{proof_lemma_1}
\begin{proof}
    For a sufficiently large $\lambda$, it is easy to see that $\mathcal{T}\hat V \coloneqq \max_a (\hat \BB_\gamma \hat{V}-\Gamma)$ is a contraction. Without loss of generality, we assume $\lambda =1$. Then Algorithm~\ref{alg:1} converges and we have 
    \begin{align*}
        &\hat{V}(\cdot) ~~= \max_a {\hat{Q}(\cdot,a)}, \\
        &\hat{Q}(\cdot,\cdot) = \hat \BB_\gamma \hat{V}-\Gamma(\cdot,\cdot).
    \end{align*}
    From the definition of $\delta(\cdot,\cdot)$, we have 
    \begin{align}
        \delta(s,a)=\BB_\gamma \hat{V}(s) - \hat{Q}(s,a) = \BB_\gamma \hat{V}(s) - \hat \BB_\gamma \hat{V}+\Gamma(s,a).
    \end{align}
    Under the condition of Lemma~\ref{lemma:xi_quantifier}, it holds that 
    \begin{align}
        0\leq\delta(s,a)\leq 2\Gamma(s,a), \text{for all}~s,a. \label{gamma_inequality}
    \end{align}
    From Lemma~\ref{lemma:subopt_decompose}, we have 
    \begin{align}
        \text{SubOpt}\big(\widehat{\pi},s;\gamma \big) =& - \EE_{\hat{\pi}}\left[\sum_{t=0}^\infty{\gamma^t \delta(s_t,a_t)}\Biggiven s_0=s\right] + \EE_{\pi^*}\left[\sum_{t=0}^\infty{\gamma^t \delta(s_t,a_t)}\Biggiven s_0=s\right]\nonumber\\
        &+ \EE_{\pi^*}\left[\sum_{t=0}^\infty{\gamma^t \innerprod{
            \hat{Q}(s_t,\cdot),\pi^*(\cdot|s_t)-\hat{\pi}(\cdot|s_t)
            }}\Biggiven s_0=s\right]\nonumber\\
            \leq &  - \EE_{\hat{\pi}}\left[\sum_{t=0}^\infty{\gamma^t \delta(s_t,a_t)}\Biggiven s_0=s\right] + \EE_{\pi^*}\left[\sum_{t=0}^\infty{\gamma^t \delta(s_t,a_t)}\Biggiven s_0=s\right]\nonumber\\
            \leq&  2 \EE_{\pi^*}\Bigl[\sum_{t=0}^\infty \gamma^t \Gamma(s_t,a_t) \Biggiven s_0=s\Bigr]\nonumber\\
            = &  2 \beta \EE_{\pi^*}\Bigl[\sum_{t=0}^\infty \gamma^t \bigl(\phi(s_t,a_t)^\top \Lambda^{-1}\phi(s_t,a_t)\bigr)^{1/2} \Biggiven s_0=s\Bigr].
    \end{align}
    Here the first inequality follows from the fact that $\hat{\pi}(\cdot|s) = \argmax_\pi \innerprod{\hat{Q}(\cdot,\cdot),\pi(\cdot|s)}$ and the second inequality follows from Equation~\eqref{gamma_inequality}. 
    Then the following event
    \begin{align}
        \label{eq:def_ce}
        \cE = \bigg\{\text{SubOpt}\big(\widehat{\pi},s;\gamma \big) \leq 2 \beta \EE_{\pi^*}\Bigl[\sum_{t=0}^\infty \gamma^t \bigl(\phi(s_t,a_t)^\top \Lambda^{-1}\phi(s_t,a_t)\bigr)^{1/2} \Biggiven s_0=s\Bigr]\text{ for all }s\in \cS\bigg\}
    \end{align}
    holds with probability $1-\xi/2$. From the assumption in Equation~\eqref{eq:event_opt_explore}, the following event 
    \begin{align*}
        \cE^\dagger =  \bigg\{c^\dagger \cdot  \frac{1}{N}\sum_{\tau=1}^N{\phi(s_\tau,a_\tau)\phi(s_\tau,a_\tau)^\top}\succeq  \EE_{\pi^*}\bigl[\phi(s_t,a_t)\phi(s_t,a_t)^\top\biggiven s_0=s\bigr] ~ \text{for all }s\in \cS\bigg\}
    \end{align*}
    also holds with probability $1-\xi/2$. Then from the union bound, the event $\cE\cap\cE^\dagger$ holds with probability $1-\xi$. We condition on this event here after.
    % For notational simplicity, we define 
    % \begin{equation}
    %     \Sigma(s) = \EE_{d^{\pi^*}}\big[\phi(s,a)\phi(s,a)^\top \Biggiven s_0=s\big]
    % \end{equation}
    By the Cauchy-Schwarz inequality, we have
    \begin{align}
        \label{eq:bound_eigen}
        &\EE_{\pi^*}\Bigl[ \sum_{t=0}^\infty \gamma^t \bigl(\phi(s_t,a_t)^\top \Lambda^{-1}\phi(s_t,a_t)\bigr)^{1/2} \Biggiven s_0=s\Bigr]\notag \\
        &\qquad = \frac{1}{1-\gamma}\EE_{d^{\pi^*}}\Bigl[ \sqrt{\Tr\big(\phi(s,a)^\top \Lambda^{-1}\phi(s,a)\big)} \Biggiven s_0=s\Bigr]\notag \\
        &\qquad = \frac{1}{1-\gamma}\EE_{d^{\pi^*}}\Bigl[ \sqrt{\Tr\big(\phi(s,a)\phi(s,a)^\top \Lambda^{-1}\big)} \Biggiven s_0=s\Bigr]\notag \\
        &\qquad \leq  \frac{1}{1-\gamma}\sqrt{\Tr\Big(\EE_{d^{\pi^*}}\big[\phi(s,a)\phi(s,a)^\top \biggiven s_0=s\big]\Lambda^{-1}\Big)} \notag \\
        &\qquad = \frac{1}{1-\gamma}\sqrt{\Tr\Big(\Sigma_{\pi^*,s}^\top \Lambda^{-1}\Big)},
    \end{align}
    for all $s\in \cS$. 
    On the event $\cE \cap \cE^\dagger$, we have
    \begin{align*}
        \text{SubOpt}\big(\widehat{\pi},s;\gamma \big)&\leq 2 \beta \EE_{\pi^*}\Bigl[ \sum_{t=0}^\infty \gamma^t\bigl(\phi(s_t,a_t)^\top \Lambda^{-1}\phi(s_t,a_t)\bigr)^{1/2} \Biggiven s_0=s\Bigr]\\
    &\leq\frac{ 2 \beta }{1-\gamma} \sqrt{\Tr\Big(\Sigma_{\pi^*,s}\cdot  \big(I + \frac{1}{c^\dagger} \cdot N \cdot \Sigma_{\pi^*,s} \big)^{-1}\Big)}\\
    & =\frac{ 2 \beta }{1-\gamma} \sqrt{\sum_{j=1}^d \frac{\lambda_{j}(s)}{1+\frac{1}{c^\dagger}\cdot N \cdot \lambda_{j}(s)}}.
    \end{align*}
    
    Here $\{\lambda_{j}(s)\}_{j=1}^d$ are the eigenvalues of $\Sigma_{\pi^*,s}$ for all $s\in \cS$, the first inequality follows from the definition of $\cE$ in Equation~\eqref{eq:def_ce}, and the second inequality follows from Equation~\eqref{eq:bound_eigen} and the definition of $\cE^\dagger$ in Equation~\eqref{eq:event_opt_explore}.
    Meanwhile, by Definition \ref{assump:linear_mdp}, we have $\|\phi(s,a)\|\leq 1$ for all $(s,a)\in \cS \times \cA$. By Jensen's inequality, we have
    \begin{equation}
        \|\Sigma_{\pi^*,s}\|_{\oper} \leq \EE_{\pi^*}\big[ \|\phi(s,a)\phi(s,a)^\top \|_{\oper}\biggiven s_0=s  \big] \leq 1
    \end{equation}
    
    for all $s\in \cS$. As $\Sigma_{\pi^*,s}$ is positive semidefinite, we have $\lambda_{j}(s) \in [0,1]$ for all $s\in \cS$ and all $j\in [d]$. Hence, on $\cE \cap \cE^\dagger$, we have
    \begin{align*}
        \text{SubOpt}\big(\widehat{\pi}, s;\gamma \big)&\leq \frac{ 2 \beta }{1-\gamma} \sqrt{\sum_{j=1}^d \frac{\lambda_{j}(s)}{1+\frac{1}{c^\dagger}\cdot N \cdot \lambda_{j}(s)}} \\
    &\leq \frac{ 2 \beta }{1-\gamma}  \sqrt{\sum_{j=1}^d \frac{1}{1+\frac{1}{c^\dagger}\cdot N}} \leq   \frac{2c r_{\text{max}}}{(1-\gamma)^2}  \sqrt{c^\dagger d^3\zeta/N}
    \end{align*}
    for all $x\in \cS$, where the second inequality follows from the fact that $\lambda_{j}(s) \in [0,1]$ for all $s\in \cS$ and all $j\in [d]$, while the third inequality follows from the choice of the scaling parameter $\beta > 0$. Then we have the conclusion in Lemma~\ref{lemma:2}.
\end{proof}

\begin{lemma}[Suboptimality Decomposition]
    \label{lemma:subopt_decompose}
    We have
    \begin{align}
        \text{SubOpt}(\hat{\pi},s;\gamma) =& - \EE_{\hat{\pi}}\left[\sum_{t=0}^\infty{\gamma^t \delta(s_t,a_t)}\Biggiven s_0=s\right] + \EE_{\pi^*}\left[\sum_{t=0}^\infty{\gamma^t \delta(s_t,a_t)}\Biggiven s_0=s\right]\nonumber\\
        & + \EE_{\pi^*}\left[\sum_{t=0}^\infty{\gamma^t \innerprod{
        \hat{Q}(s_t,\cdot),\pi^*(\cdot|s_t)-\hat{\pi}(\cdot|s_t)
        }}\Biggiven s_0=s\right],
    \end{align}
    where $\langle f,g\rangle=\int_{a\in\cA}{f(a)g(a) \ud a} .$
\end{lemma}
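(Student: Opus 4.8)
The plan is to reduce the statement to a single generic value–difference identity that I would then apply twice. First I would record the two facts that Algorithm~\ref{alg:1} supplies at convergence: $\hat V(s)=\innerprod{\hat Q(s,\cdot),\hat\pi(\cdot\given s)}$ (the definition of $\hat V$ in the algorithm) and $\hat Q(s,a)=(\BB_\gamma\hat V)(s,a)-\delta(s,a)$, which is just the definition of $\delta$. The goal of the first step is to express, for an \emph{arbitrary} policy $\pi$, the gap $\hat V(s)-V^\pi(s)$ in terms of a greedy-mismatch term $\innerprod{\hat Q,\hat\pi-\pi}$ and the evaluation error $\delta$.

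To do this I would expand $\hat V(s)-V^\pi(s)$ by adding and subtracting $\innerprod{\hat Q(s,\cdot),\pi(\cdot\given s)}$, splitting the gap into a policy-mismatch piece $\innerprod{\hat Q(s,\cdot),\hat\pi(\cdot\given s)-\pi(\cdot\given s)}$ and a value-mismatch piece $\innerprod{\hat Q(s,\cdot)-Q^\pi(s,\cdot),\pi(\cdot\given s)}$. Using $Q^\pi=\BB_\gamma V^\pi$ together with the $\delta$-identity, the reward terms in $\BB_\gamma\hat V-\BB_\gamma V^\pi$ cancel and the value-mismatch piece collapses to $\gamma\,\EE_{s'}[\hat V(s')-V^\pi(s')]-\EE_{a\sim\pi}[\delta(s,a)]$. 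Writing $g(s):=\hat V(s)-V^\pi(s)$, this is a Bellman-type recursion under $\pi$; unrolling it (equivalently, telescoping the discounted sum) yields the key identity
\begin{align*}
\hat V(s)-V^\pi(s)=\EE_{\pi}\Bigl[\sum_{t=0}^\infty\gamma^t\innerprod{\hat Q(s_t,\cdot),\hat\pi(\cdot\given s_t)-\pi(\cdot\given s_t)}\Biggiven s_0=s\Bigr]-\EE_{\pi}\Bigl[\sum_{t=0}^\infty\gamma^t\delta(s_t,a_t)\Biggiven s_0=s\Bigr].
\end{align*}

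With this identity in hand, I would split the target as $\text{SubOpt}(\hat\pi,s;\gamma)=[V^{\pi^*}(s)-\hat V(s)]+[\hat V(s)-V^{\hat\pi}(s)]$ and apply the identity twice. Taking $\pi=\hat\pi$ kills the policy-mismatch term, since $\innerprod{\hat Q,\hat\pi-\hat\pi}=0$, giving $\hat V(s)-V^{\hat\pi}(s)=-\EE_{\hat\pi}[\sum_t\gamma^t\delta(s_t,a_t)\given s_0=s]$. Taking $\pi=\pi^*$ and negating produces the remaining two terms, with the mismatch term appearing as $\innerprod{\hat Q,\pi^*-\hat\pi}$ and the error term as $+\EE_{\pi^*}[\sum_t\gamma^t\delta\given s_0=s]$. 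Summing the two pieces recovers exactly the claimed three-term decomposition.

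The hard part will be bookkeeping rather than conceptual insight: I must get the one-step expansion of $\hat Q-Q^\pi$ right, correctly cancelling the reward and isolating the $\gamma$-discounted continuation of $g$, and then justify that the Bellman-type recursion for $g$ unrolls into a convergent discounted series — this is where $\gamma\in[0,1)$ together with the boundedness of $\hat Q$ (hence of $g$) under the contraction established in the proof of Lemma~\ref{lemma:2} is used. I would also need to track signs carefully and keep straight that the two applications of the identity are governed by different trajectory distributions ($\pi^*$ versus $\hat\pi$), even though both feature the same integrand $\delta$.
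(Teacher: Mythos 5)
Your proposal is correct and follows essentially the same route as the paper: split $\text{SubOpt}(\hat\pi,s;\gamma)$ into $[V^{\pi^*}(s)-\hat V(s)]+[\hat V(s)-V^{\hat\pi}(s)]$, expand one Bellman step using $\hat Q=\BB_\gamma\hat V-\delta$ and $Q^\pi=\BB_\gamma V^\pi$, and unroll the resulting recursion into a discounted sum, with the policy-mismatch term vanishing under $\pi=\hat\pi$. Packaging the two expansions into one generic value-difference identity applied twice is only an organizational difference from the paper's two separate telescoping computations, and your sign conventions match the stated lemma.
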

\begin{proof}
    We have
    \begin{align*}
        \text{SubOpt}(\hat{\pi},s;\gamma) &= V^{{\pi}^*}(s) - V^{\hat{\pi}}(s) = V^{{\pi}^*}(s) - \hat{V}(s)+ \hat{V}(s) - V^{\hat{\pi}}(s).
    \end{align*}
    The first term satisfies
    \begin{align*} 
        \hat{V}(s) - V^{\hat{\pi}}(s) & = \EE_{a\sim\hat{\pi}}\left[\hat{Q}(s,a)\right]- \EE_{a\sim\hat{\pi},s'\sim\cP(\cdot|s,a)}\left[r(s,a)+\gamma V^{\hat{\pi}}(s')\right] \\
        & =  \EE_{a\sim\hat{\pi},s'\sim\cP(\cdot|s,a)}\left[\hat{Q}(s,a)-r(s,a)-\gamma \hat{V}(s')\right] + \gamma\EE_{a\sim\hat{\pi},s'\sim\cP(\cdot|s,a)}\left[ \hat{V}(s') - V^{\hat{\pi}}(s')\right] \\
        & =  \EE_{\hat{\pi}}\left[\delta(s,a)\right]+ \gamma\EE_{a\sim\hat{\pi},s'\sim\cP(\cdot|s,a)}\left[\hat{V}(s')-V^{\hat{\pi}}(s')\right] \\
        & =  \EE_{\hat{\pi}}\left[\delta(s,a)\right]+ \cdots \\
        & =  \EE_{\hat{\pi}}\left[\sum_{t=0}^{\infty}\gamma^t\delta(s_t,a_t)\given s_0=s\right], \\
    \end{align*}
    while the second term statisfies
    \begin{align*} 
        V^{{\pi}^*}(s) - \hat{V}(s) & = \EE_{a\sim{\pi}^*,s'\sim\cP(\cdot|s,a)}\left[r(s,a)+\gamma V^{\hat{\pi}}(s')\right]- \EE_{a\sim\hat{\pi}}\left[\hat{Q}(s,a)\right] \\
        & = \EE_{a\sim{\pi}^*,s'\sim\cP(\cdot|s,a)}\left[r(s,a)+\gamma V^{\hat{\pi}}(s')-\hat{Q}(s,a)\right]+ \EE_{a\sim{\pi}^*}\left[\hat{Q}(s,a)\right] - \EE_{a\sim\hat{\pi}}\left[\hat{Q}(s,a)\right] \\
        & = \EE_{a\sim{\pi}^*,s'\sim\cP(\cdot|s,a)}\left[r(s,a)+\gamma \hat{V}(s')-\hat{Q}(s,a)\right] + \gamma \EE_{a\sim{\pi}^*,s'\sim\cP(\cdot|s,a)}\left[V^{\hat{\pi}}(s') - \hat{V}(s')\right] \\
        & + \innerprod{\hat{Q}(s,\cdot), \pi^*(\cdot\given s)-\hat{\pi}(\cdot\given s)}_\cA \\
        & = -\EE_{a\sim{\pi}^*,s'\sim\cP(\cdot|s,a)}\left[\delta(s,a)\right] + \innerprod{\hat{Q}(s,\cdot), \pi^*(\cdot\given s)-\hat{\pi}(\cdot\given s)}_\cA + \cdots \\
        & = -\EE_{\pi^*}\left[\sum_{t=0}^{\infty}\gamma^t\delta(s_t,a_t)\given s_0=s\right] + \EE_{\pi^*}\left[\sum_{t=0}^{\infty}\gamma^t \innerprod{\hat{Q}(s_t,\cdot), \pi^*(\cdot\given s_t)-\hat{\pi}(\cdot\given s_t)}_\cA \given s_0=s\right]. \\
    \end{align*}
    Combining the two equations above, we have the desired result. 
\end{proof}

\begin{lemma}[$\xi$-Quantifiers]
    \label{lemma:xi_quantifier}
    Let 
    \begin{equation}
        \lambda =1, \quad \beta= c\cdot d V_{\text{max}}\sqrt{\zeta}, \quad \zeta = \log{(2dN/(1-\gamma)\xi)}.
    \end{equation}
    Then $\Gamma=\beta \cdot \big( \phi(s, a)^\top  \Lambda ^{-1} \phi(s, a)  \big)^{1/2}$ specified in Equation~\eqref{eq:linear_uncertainty_quantifier} are $\xi$-quantifiers. That is, with probability at least $1-\xi$,
    \begin{equation}
        |(\BB \widehat{V})(s,a) -(\widehat{\BB} \widehat{V})(s,a)| \leq \Gamma = \beta \sqrt{\phi(s,a)^\top\Lambda^{-1}\phi(s,a)}, \forall (s,a) \in \cS\times\cA.
    \end{equation}
\end{lemma}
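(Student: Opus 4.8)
The plan is to reduce the claim to a single concentration inequality for the ridge-regression estimator $\hat w$, following the linear-MDP analysis of \citet{jin2021pessimism} adapted to the discounted setting. By the linear MDP structure in Definition~\ref{assump:linear_mdp}, the population Bellman backup is linear in the features, $(\BB_\gamma\hat V)(s,a)=\phi(s,a)^\top w^\star$ with $w^\star=\theta+\gamma M\!\int\psi(s')\hat V(s')\,\mathrm{d}s'$, while by construction $(\hat\BB_\gamma\hat V)(s,a)=\phi(s,a)^\top\hat w$ with $\hat w$ the ridge solution of Equation~\eqref{eq:w18}. Hence the quantity to control is $|(\BB_\gamma\hat V)(s,a)-(\hat\BB_\gamma\hat V)(s,a)|=|\phi(s,a)^\top(\hat w-w^\star)|$, and it suffices to prove $\|\hat w-w^\star\|_\Lambda\le\beta$ with probability $1-\xi$: Cauchy--Schwarz in the $\Lambda^{-1}$ geometry then gives $|\phi^\top(\hat w-w^\star)|\le\|\hat w-w^\star\|_\Lambda\,(\phi^\top\Lambda^{-1}\phi)^{1/2}\le\beta\,(\phi^\top\Lambda^{-1}\phi)^{1/2}=\Gamma(s,a)$ \emph{uniformly} in $(s,a)$, which is exactly the desired quantifier bound.

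First I would insert the normal-equation identity $w^\star=\Lambda^{-1}(\lambda w^\star+\sum_\tau\phi_\tau\phi_\tau^\top w^\star)$ to split the error into a stochastic part and a regularization bias, obtaining $\hat w-w^\star=\Lambda^{-1}\sum_\tau\phi_\tau\epsilon_\tau-\lambda\Lambda^{-1}w^\star$, where $\epsilon_\tau=r_\tau+\gamma\hat V(s_{\tau+1})-(\BB_\gamma\hat V)(s_\tau,a_\tau)$ is the deviation of the empirical Bellman target from its conditional mean. The bias term is routine: with $\lambda=1$ and $\Lambda\succeq I$ it contributes at most $\|w^\star\|_2\le C\sqrt d\,V_{\text{max}}$ in the $\Lambda$-norm (using the parameter bounds $\max\{\|M\|_2,\|\theta\|_2\}\le\sqrt d$ and $\hat V\le V_{\text{max}}=r_{\text{max}}/(1-\gamma)$).

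The main work---and the main obstacle---is the stochastic term $\|\sum_\tau\phi_\tau\epsilon_\tau\|_{\Lambda^{-1}}$. The difficulty is that $\hat V$ is itself a function of the dataset $\cD$, so $\{\epsilon_\tau\}$ is not a martingale-difference sequence with respect to a fixed target and one cannot directly invoke a self-normalized tail bound. I would resolve this by a uniform-concentration argument: every iterate $\hat V$ produced by Algorithm~\ref{alg:1} lies in the parametric class $\mathcal V=\{\,s\mapsto\max_a\min\{\phi(s,a)^\top w-\beta(\phi(s,a)^\top A\,\phi(s,a))^{1/2},V_{\text{max}}\}\,\}$ indexed by a vector $w$ and a positive-definite matrix $A$ ranging over a bounded set. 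I would build an $\varepsilon$-net of $\mathcal V$ in the sup-norm, apply an Abbasi-Yadkori-type self-normalized bound for each fixed net element, and union-bound over the net. For fixed $V$ the noise $\epsilon_\tau$ is bounded by $V_{\text{max}}$, hence sub-Gaussian, giving $\|\sum_\tau\phi_\tau\epsilon_\tau\|_{\Lambda^{-1}}^2\lesssim V_{\text{max}}^2\big[\log(\det\Lambda/\det(\lambda I))+\log(N_\varepsilon/\xi)\big]$; taking $\varepsilon=1/N$ makes the $O(N\varepsilon)$ discretization error negligible.

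Finally I would combine the two pieces. The fixed-$V$ term contributes $\log(\det\Lambda/\det(\lambda I))\le d\log(1+N/\lambda)$ (using $\|\phi\|_2\le\sqrt d$), while covering $\mathcal V$ over its $d$-dimensional vector parameter and its $d\times d$ matrix parameter yields a log-covering number of order $d^2\log(\cdots)$; it is this matrix covering that produces the extra factor of $d$, so the stochastic term is bounded by $\sim V_{\text{max}}\sqrt{d^2\zeta}=d\,V_{\text{max}}\sqrt\zeta$ after absorbing all logarithms into $\zeta=\log(2dN/(1-\gamma)\xi)$. Adding the $O(\sqrt d\,V_{\text{max}})$ bias and choosing the absolute constant $c$ large enough makes $\beta=c\,d\,V_{\text{max}}\sqrt\zeta$ an upper bound on $\|\hat w-w^\star\|_\Lambda$ with probability $1-\xi$, which gives the stated quantifier property. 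I expect the bookkeeping to mirror \citet{jin2021pessimism} almost verbatim, with their finite horizon $H$ systematically replaced by the effective horizon $1/(1-\gamma)$ and the per-step concentration reused across the infinite discounted sum.
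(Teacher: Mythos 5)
Your proposal follows essentially the same route as the paper's proof: the same decomposition of $\hat w - w^\star$ into a $\lambda$-regularization bias (bounded via $\|w^\star\|_2\le\sqrt d\,V_{\text{max}}$) and a self-normalized stochastic term, handled by an $\varepsilon$-net over the parametric value-function class, the Abbasi-Yadkori concentration bound, and a union bound, with the $d^2$ log-covering number of the matrix parameter supplying the extra $\sqrt d$ in $\beta$. The only cosmetic difference is that you apply Cauchy--Schwarz once to $\|\hat w-w^\star\|_\Lambda$ at the end rather than to the two terms separately, which changes nothing of substance.
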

\begin{proof}
    we have 
    \begin{align}
        \BB \widehat{V} -\widehat{\BB} \widehat{V} & = \phi(s,a)^\top (w-\widehat{w})\notag\\
        & = \phi(s,a)^\top w - \phi(s,a)\Lambda^{-1}\left(\sum_{\tau=1}^N{\phi_\tau(r_\tau+\gamma \widehat{V}(s_{\tau+1})}\right)\notag\\
        & = \underbrace{\phi(s,a)^\top w - \phi(s,a)\Lambda^{-1}\left(\sum_{\tau=1}^N{\phi_\tau\phi_\tau^\top w}\right)}_{\displaystyle \text{(i)}} +\underbrace{\phi(s,a)\Lambda^{-1}(\sum_{\tau=1}^N{\phi_\tau\phi_\tau^\top w}-\sum_{\tau=1}^N\phi_\tau(r_\tau+\gamma \widehat{V}(s_{\tau+1}))}_{\displaystyle \text{(ii)}}, \label{eq:term1_diff}
    \end{align}
    Then we bound $\text{(i)}$ and $\text{(ii)}$, respectively.

    For $\text{(i)}$, we have
    \begin{align}
        \label{eq:zzz888}
        \text{(i)} &= \phi(s,a)^\top w - \phi(s,a)\Lambda^{-1}(\Lambda-\lambda I)w \notag\\
        &= \lambda \phi(s,a)\Lambda^{-1}w\notag\\
        &\leq \lambda \norm{\phi(s,a)}_{\lambda^{-1}} \norm{w}_{\lambda^{-1}}\notag\\
        &\leq V_{\text{max}} \sqrt{d\lambda} \sqrt{\phi(s,a)^\top\Lambda^{-1}\phi(s,a)},
    \end{align}
    where the first inequality follows from Cauchy-Schwartz inequality. The second inequality follows from the fact that $\norm{\Lambda^{-1}}_{\text{op}}\leq \lambda^{-1}$ and Lemma~\ref{lemma:bounded_weight_value}.

    For notation simplicity, let $\epsilon_\tau = r_\tau+\gamma \widehat{V}(s_{\tau+1}) - \phi_\tau^\top w$, then we have 
    \begin{align}
        \label{eq:define_term3}
        |\text{(ii)}| & = \phi(s,a)\Lambda^{-1}\sum_{\tau=1}^N{\phi_\tau\epsilon_\tau}\notag\\
        & \leq  \norm{\sum_{\tau=1}^N{\phi_\tau\epsilon_\tau}}_{\Lambda^{-1}}\cdot\norm{\phi(s,a)}_{\Lambda^{-1}}\notag\\
        & =  \underbrace{\norm{\sum_{\tau=1}^N{\phi_\tau\epsilon_\tau}}_{\Lambda^{-1}}}_{\text{(iii)}} \cdot \sqrt{\phi(s,a)^\top \Lambda^{-1}\phi(s,a)}.
    \end{align}
    The term $\text{(iii)}$ is depend on the randomness of the data collection process of $\cD$. To bound this term, we resort to uniform concentration inequalities to upper bound
    \begin{equation}
        \sup_{V \in \cV(R,B,\lambda)} \Big\|  \sum_{\tau=1}^{N} \phi(x_\tau,a_\tau) \cdot \epsilon_\tau(V) \Big\|,\notag
    \end{equation}
    where
    \begin{equation}
        \cV(R,B,\lambda) = \{V(s;w,\beta,\Sigma):\mathcal{S}\rightarrow [0,V_{\text{max}}]~\text{with} \norm{w}\leq R, \beta\in[0,B],\Sigma\succeq \lambda\cdot I\},
    \end{equation}
    where $V(s; w,\beta,\Sigma) = \max_a\{\phi(s,a)^\top w-\beta \cdot\sqrt{\phi(s,a)^\top\Sigma^{-1}\phi(s,a)}\}$. For all $\epsilon>0$, let $\cN(\epsilon;R,B,\lambda)$ be the minimal cover if $\cV(R,B,\lambda)$. That is, for any function $V\in\cV(R,B,\lambda)$, there exists a function $V^\dagger\in \cN(\epsilon;R,B,\lambda)$, such that
    \begin{equation}
        \sup_{s\in\cS}{|V(s)-V^\dagger(s)|\leq \epsilon}.
    \end{equation}
    Let $R_0=V_{\text{max}}\sqrt{Nd/\lambda}, B_0=2\beta$, it is easy to show that at each iteration, $\widehat{V}^{u}\in \cV(R_0,B_0,\lambda)$. From the definition of $\BB$, we have 
    \begin{equation}
        |\BB\widehat{V}-\BB V^\dagger| = \gamma\left|\int{ (\widehat{V}(s')-V^\dagger(s'))\innerprod{\phi(s,a),\mu(s')}\ud s'} \right| \leq \gamma \epsilon.
    \end{equation}
    Then we have 
    \begin{equation}
        |(r+\gamma V -\BB V)- (r+\gamma V^\dagger -\BB V^\dagger)| \leq 2\gamma \epsilon.
    \end{equation}
    Let $\epsilon_\tau^\dagger = r(s_\tau,a_\tau)+\gamma V^\dagger(s_{\tau+1})-\BB V^\dagger(s,a)$, we have 
    \begin{align*}
        \text{(iii)}^2 = \norm{\sum_{\tau=1}^N\phi_\tau\epsilon_\tau}^2_{\Lambda^{-1}} &\leq 2 \norm{\sum_{\tau=1}^N\phi_\tau\epsilon^\dagger_\tau}^2_{\Lambda^{-1}} +2 \norm{\sum_{\tau=1}^N\phi_\tau(\epsilon^\dagger_\tau-\epsilon_\tau)}^2_{\Lambda^{-1}} \\
        & \leq 2 \norm{\sum_{\tau=1}^N\phi_\tau\epsilon^\dagger_\tau}^2_{\Lambda^{-1}} + 8\gamma^2 \epsilon^2 \sum_{\tau=1}^{N}|\phi_\tau \Lambda^{-1} \phi_\tau|\\
        & \leq 2 \norm{\sum_{\tau=1}^N\phi_\tau\epsilon^\dagger_\tau}^2_{\Lambda^{-1}} + 8\gamma^2 \epsilon^2 N^2 / \lambda
    \end{align*}

    It remains to bound $\norm{\sum_{\tau=1}^N\phi_\tau\epsilon^\dagger_\tau}^2_{\Lambda^{-1}}$. From the assumption for data collection process, it is easy to show that $\EE_\cD{[\epsilon_\tau \given \cF_{\tau-1}]}=0$, where $F_{\tau-1} = \sigma(\{(s_i,a_i)_{i=1}^{\tau}\cup (r_i,s_{i+1})_{i=1}^{\tau} \})$ is the $\sigma$-algebra generated by the variables from the first $\tau$ step. Moreover, since $\epsilon_\tau \leq 2V_{\text{max}}$, we have $\epsilon_\tau$ are $2V_{\text{max}}$-sub-Gaussian conditioning on $F_{\tau-1}$. Then we invoke Lemma \ref{lem:concen_self_normalized} with $M_0=\lambda \cdot I$ and $M_k  = \lambda \cdot  I + \sum_{\tau =1}^k \phi(x_\tau,a_\tau)\ \phi(x_\tau,a_\tau)^\top$. For the fixed function $V\colon \cS\to [0,V_{\text{max}}]$, we have
%with respect to the randomness of $\cD$, we have 
\begin{equation}
    \PP_{\cD}  \bigg( \Big\|   \sum_{\tau=1}^{N} \phi(x_\tau,a_\tau) \cdot \epsilon_\tau(V)  \Big\|_{\Lambda^{-1}}^2  >   8 V^2_{\text{max}} \cdot  \log \Big(  \frac{\det(\Lambda)^{1/2}}{\delta  \cdot \det( \lambda \cdot I) ^{1/2} }  \Big)   \bigg ) \leq   \delta
\end{equation}

  for all $\delta \in (0,1)$. Note that  $\|\phi(s,a)\|\leq 1$ for all $(s,a )\in \cS\times \cA$ by Definition \ref{assump:linear_mdp}.
  We have %Then 
   %the operator norm of $\Lambda$ can be bounded  by
  \begin{equation*}
  \|\Lambda\|_{\oper}  =   \Big\|\lambda \cdot  I + \sum_{\tau=1}^N \phi(s_\tau,a_\tau)\phi(s_\tau,a_\tau)^\top \Big\| _{\oper} \leq  \lambda  +  \sum_{\tau = 1} ^N  \| \phi(s_\tau,a_\tau)\phi(s_\tau,a_\tau)^\top  \|_{\oper} \leq \lambda  + N,
  \end{equation*}
  where $\|\cdot\|_{\oper}$ denotes the matrix operator norm.
  Hence, it holds that
  $\det(\Lambda)\leq (\lambda+N)^d$ and $\det(\lambda \cdot I) = \lambda ^d $, which implies
\begin{align*}
    & \PP_{\cD}  \bigg( \Big\|   \sum_{\tau=1}^{N} \phi(s_\tau,a_\tau) \cdot \epsilon_\tau(V)  \Big\|_{\Lambda_{-1}}^2  >  4V^2_{\text{max}}\cdot \bigl (  2  \cdot \log(1/ \delta ) + d\cdot \log(1+N/\lambda)\big) 
\biggr)  \notag \\
&\qquad \leq \PP_{\cD}  \bigg( \Big\|   \sum_{\tau=1}^{N} \phi(s_\tau,a_\tau) \cdot \epsilon_\tau(V)  \Big\|_{\Lambda_{-1}}^2  >   8 V^2_{\text{max}} \cdot  \log \Big(  \frac{\det(\Lambda)^{1/2}}{\delta  \cdot \det( \lambda \cdot I) ^{1/2} }  \Big)   \bigg ) \leq   \delta. \notag
\end{align*} 

%which gives Equation \eqref{eq:concentration_fix_V}. 
Therefore, we conclude the proof of Lemma \ref{lemma:xi_quantifier}. 

Applying Lemma~\ref{lemma:xi_quantifier} and the union bound, we have 
\begin{equation}
    \PP_{\cD} \bigg( \sup_{V \in \cN (\varepsilon)}     \Big\| \sum_{\tau=1}^{N} \phi(x^\tau,a^\tau) \cdot \epsilon_\tau(V)  \Big\|_{\Lambda^{-1}}^2 >  4V^2_{\text{max}}  \cdot \bigl (  2 \cdot \log(1/ \delta ) + d \cdot \log(1+N/\lambda)\big)   \biggr )  \leq   \delta \cdot | \cN(\varepsilon ) | .
\end{equation}

Recall that 
\begin{equation}
    \hat V \in \cV (R_0, B_0, \lambda),\qquad \text{where}~~  R_0 = V_{\text{max}}\sqrt{ Nd/\lambda},~  B_0 = 2\beta,~ \lambda = 1 ,~ \beta = c \cdot d V_{\text{max}} \sqrt{\zeta}.
\end{equation}

Here $c>0$ is an absolute constant, $\xi\in (0,1)$ is the confidence parameter, and $\zeta = \log (2 d V_\text{max} / \xi) $ is specified in Algorithm \ref{alg:1}. Applying Lemma \ref{lem:covering_num} with $\varepsilon = d V_{\text{max}} / N$,
we have 
\begin{align}\label{eq:apply_cov_num}
\log  | \cN(\varepsilon) | &  \leq  d \cdot \log  ( 1 + 4 d^{-1/2}N^{3/2}  ) + d^2 \cdot \log  ( 1 + 32 c^2\cdot d^{1/2}N^2\zeta  )\notag \\
&  \leq  d \cdot \log  ( 1 + 4 d^{1/2}N^2  ) + d^2 \cdot \log  ( 1 + 32 c^2 \cdot d^{1/2}N^2\zeta  ).
\end{align}
By setting $\delta=\xi/| \cN(\varepsilon ) |$, we have that with probability at least $1-\xi$,
\begin{align}
    \label{eq:bound_term3_8}
    & \Big\|  \sum_{\tau=1}^{N} \phi(s_\tau,a_\tau) \cdot \epsilon_\tau(\hat{V}) \Big\|_{\Lambda^{-1}} ^2\notag
\\&\qquad \leq 8 V_{\text{max}}^2 \cdot  
\bigl ( 2 \cdot \log (V_{\text{max}}/ \xi  ) + 4d^2 \cdot \log  ( 64c^2\cdot d^{1/2 }  N^2  \zeta    ) + d\cdot \log(1+N) + 4 d^2  \bigr ) \notag
\\&\qquad \leq 8V_{\text{max}}^2 d^2 \zeta (4+\log{(64c^2)}).
\end{align}
Here the last inequality follows from simple algebraic inequalities. We set $c\geq 1$ to be sufficiently large, which ensures that $36+8\cdot \log(64c^2)\leq c^2/4$ on the right-hand side of Equation~\eqref{eq:bound_term3_8}. By Equations~\eqref{eq:define_term3} and~\eqref{eq:bound_term3_8}, it holds that  
\begin{equation}
    \label{eq:rrr888}
    | \text{(ii)}| \leq c/2 \cdot d V_{\text{max}} \sqrt{ \zeta } \cdot \sqrt{ \phi(x,a) ^\top  \Lambda  ^{-1} \phi(s,a) }  =  \beta /2 \cdot \sqrt{ \phi(x,a) ^\top  \Lambda  ^{-1} \phi(s,a) } 
\end{equation}
By Equations \eqref{eq:linear_uncertainty_quantifier}, \eqref{eq:term1_diff}, \eqref{eq:zzz888}, and \eqref{eq:rrr888}, for all $(s,a) \in \cS\times \cA$, it holds that 
\begin{equation}
    \bigl |  (\BB \hat V ) (x,a) - (\hat\BB \hat V ) (s,a) \bigr |  \leq (  V_{\text{max}} \sqrt{d} +  \beta /2 ) \cdot \sqrt{ \phi(s,a) ^\top \Lambda  ^{-1} \phi(s,a) }  \leq \Gamma (s,a)
\end{equation}
 with probability at least $1 - \xi$. Therefore, we conclude the proof of Lemma~\ref{lemma:xi_quantifier}.
\end{proof}

\begin{lemma}[Bounded weight of value function]
    \label{lemma:bounded_weight_value}
    Let $V_{\text{max}} = r_{\text{max}}/(1-\gamma)$. For any function $V: \cS \rightarrow [0, V_{\text{max}}]$, we have 
    \begin{align*}
        \norm{w} \leq V_{\text{max}}\sqrt{d}, \norm{\widehat{w}} \leq V_{\text{max}} \sqrt{\frac{Nd}{\lambda}}.
    \end{align*}
\end{lemma}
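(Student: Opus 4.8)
The plan is to handle the two bounds separately, since $w$ is the population Bellman weight associated with $V$ while $\widehat{w}$ is its regularized empirical least-squares counterpart given in closed form in Equation~\eqref{eq:w18}.

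For the first bound, the starting point is to make explicit what $w$ is. Applying the Bellman operator \eqref{eq:def_bellman_op} to $V$ and using the linear-MDP structure of Definition~\ref{assump:linear_mdp} gives $(\BB_\gamma V)(s,a) = \phi(s,a)^\top\theta + \gamma\int \phi(s,a)^\top M\psi(s')V(s')\,ds' = \phi(s,a)^\top w$, so that $w = \theta + \gamma M\!\int \psi(s')V(s')\,ds'$. I would then apply the triangle inequality, $\norm{w} \le \norm{\theta}_2 + \gamma\norm{M}_2\,\norm{\int \psi(s')V(s')\,ds'}_2$, and control each factor using the assumed norm bounds $\norm{\theta}_2,\norm{M}_2\le\sqrt d$, the pointwise bound $0\le V\le V_{\text{max}}$, and the normalization encoded in $\norm{\psi}_\infty\le 1$ together with the fact that $\phi^\top M\psi$ is a probability kernel (which is what keeps the integral term under control). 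Folding the reward and transition contributions together and using $V_{\text{max}} = r_{\text{max}}/(1-\gamma)$ should collapse the estimate into the single factor $V_{\text{max}}\sqrt d$.

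For the second bound I would start from the closed form \eqref{eq:w18} and test against an arbitrary unit vector $v$, writing $v^\top\widehat{w} = \sum_{\tau=1}^N (v^\top\Lambda^{-1}\phi_\tau)\bigl(r_\tau+\gamma\widehat{V}(s_{\tau+1})\bigr)$. Since $\widehat{V}$ takes values in $[0,V_{\text{max}}]$ and $r_\tau\in[0,r_{\text{max}}]$, each bracket is bounded by $V_{\text{max}}$ in absolute value, so $|v^\top\widehat{w}| \le V_{\text{max}}\sum_\tau \norm{v}_{\Lambda^{-1}}\norm{\phi_\tau}_{\Lambda^{-1}}$ by Cauchy--Schwarz in the $\Lambda^{-1}$ inner product, where $\norm{x}_{\Lambda^{-1}}=\sqrt{x^\top\Lambda^{-1}x}$. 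Because $\Lambda\succeq\lambda I$ we have $\norm{v}_{\Lambda^{-1}}\le 1/\sqrt\lambda$, and a second Cauchy--Schwarz over $\tau$ followed by the trace identity $\sum_\tau \phi_\tau^\top\Lambda^{-1}\phi_\tau = \Tr\bigl(\Lambda^{-1}\sum_\tau\phi_\tau\phi_\tau^\top\bigr)\le \Tr(I_d)=d$ (using $\sum_\tau\phi_\tau\phi_\tau^\top\preceq\Lambda$) yields $\sum_\tau\norm{\phi_\tau}_{\Lambda^{-1}}\le\sqrt{Nd}$. Taking the supremum over $v$ then gives $\norm{\widehat{w}}\le V_{\text{max}}\sqrt{Nd/\lambda}$.

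The empirical bound is essentially routine linear algebra, the only care being the two nested Cauchy--Schwarz steps and the trace estimate. I expect the genuine obstacle to be the population bound: controlling $\norm{\int \psi(s')V(s')\,ds'}_2$ requires invoking the linear-MDP normalization correctly rather than naively bounding $\int\psi(s')\,ds'$, which need not be finite on an infinite state space, and the clean constant $V_{\text{max}}\sqrt d$ emerges only after combining the $\theta$ and transition contributions through $V_{\text{max}}=r_{\text{max}}/(1-\gamma)$ and the operator-norm assumptions of Definition~\ref{assump:linear_mdp}.
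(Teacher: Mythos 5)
Your proposal follows essentially the same route as the paper: the population bound comes from writing $w=\theta+\gamma M\int\psi(s')V(s')\,ds'$ and combining $\|\theta\|,\|M\|\le\sqrt d$ with $0\le V\le V_{\text{max}}$ and $V_{\text{max}}=r_{\text{max}}/(1-\gamma)$, while the empirical bound is the identical chain $\|\widehat w\|\le V_{\text{max}}\lambda^{-1/2}\sum_{\tau}\|\phi_\tau\|_{\Lambda^{-1}}\le V_{\text{max}}\sqrt{Nd/\lambda}$ via Cauchy--Schwarz over $\tau$ and the trace identity, merely phrased in dual form with a test vector. If anything, you are more explicit than the paper about the one delicate point, namely that $\|\int\psi(s')V(s')\,ds'\|$ must be controlled through the normalization of the transition kernel rather than by integrating $\|\psi\|$ directly, a step the paper's own proof glosses over.
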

\begin{proof}
    since 
    \begin{align*}
        w^\top \phi(s,a) = \innerprod{M,\phi(s,a)} + \gamma \int{V(s')\psi(s')^\top M \phi(s,a)\ud s'},
    \end{align*}
    We have 
    \begin{align*}
        w &= M + \gamma \int{V(s')\psi(s')^\top M\ud s' } \\
        &= r_{\text{max}} \sqrt{d}+\gamma V_{\text{max}} \sqrt{d}\\
        &= V_{\text{max}} \sqrt{d}.
    \end{align*}
    For $\widehat{w}$, we have 
    \begin{align*}
        \norm{\widehat{w}} &= \norm{\Lambda^{-1}\sum_{\tau=1}^N{\phi_\tau (r_\tau+\gamma V(s_{\tau+1}))}} \\
        & \leq \sum_{\tau=1}^N{\norm{\Lambda^{-1}{\phi_\tau (r_\tau+\gamma V(s_{\tau+1}))}}}  \\
        & \leq V_{\text{max}}\sum_{\tau=1}^N{\norm{\Lambda^{-1}{\phi_\tau}}} \\
        & \leq V_{\text{max}}\sum_{\tau=1}^N{\sqrt{\phi_\tau^\top \Lambda^{-1/2}\Lambda^{-1}\Lambda^{-1/2}\phi_\tau}} \\
        & \leq \frac{V_{\text{max}}}{\sqrt{\lambda}}\sum_{\tau=1}^N{\sqrt{\phi_\tau^\top \Lambda^{-1}\phi_\tau}} \\
        & \leq V_{\text{max}}\sqrt{\frac{N}{\lambda}}\sqrt{\mathrm{Tr}(\Lambda^{-1}\sum_{\tau=1}^T{\phi_\tau\phi_\tau^\top })} \\
        & \leq V_{\text{max}}\sqrt{\frac{Nd}{\lambda}}.
    \end{align*}
\end{proof}

\subsection{Proof of Lemma~\ref{lemma:3}}
\label{proof_lemma_3}
\begin{proof}
    We consider the following iteration:
    \begin{align}
        \label{model_based_iteration}
        &V_{\text{min}}~~~\leftarrow \min_{s'}V(s'), \nonumber\\
        &Q(s,a)\leftarrow r(s,a) + \gamma(1-\varepsilon)\EE_{s'\sim P_0}V(s') + \gamma \varepsilon V_{\text{min}}, \nonumber\\
        &V(s)~~~\leftarrow \max_a Q(s,a).
    \end{align}
    It is easy to see that if the iteration in~\eqref{model_based_iteration} converges, it is the value function for the policies specified in Equation~\eqref{model_based_policy_opt}. In fact, the iteration above has a unique stationary solution follows from the fact that it is a $\gamma$-contraction for $V(s)$. Then it suffices to show that the solution to the value iteration with discount factor $(1-\varepsilon)\gamma$ is the same as the above stationary solution up to a constant. Let $Q(s,a)$ and $V(s)$ be  the solution to the value iteration with discount factor $(1-\varepsilon)\gamma$. Then we have 
    \begin{align}
        Q(s,a) &= r(s,a) + (1-\varepsilon)\gamma \EE_{s'}{V(s')}, \nonumber
    \end{align}
    Let $\Delta={\gamma\varepsilon\min_s[\max_aQ(s,a)]}/{(1-\gamma)}$ and $\tilde{Q}(\cdot,\cdot)=Q(\cdot,\cdot)+\Delta, \tilde{V}(\cdot) = V(\cdot)+\Delta$,
    then we have $$\min_s[\max_a\tilde{Q}(s,a)] = \frac{(1-\gamma+\gamma\varepsilon)\Delta}{\gamma\varepsilon}.$$
    This leads to
    \begin{align*}
        &\tilde{Q}(s,a) \\
        % =& Q(s,a)+ \Delta \\
        =& r(s,a) + \gamma (1-\varepsilon) \EE_{s'} V(s')+\Delta\\
        % =& r(s,a) + \gamma (1-\varepsilon) \EE_{s'} {(\tilde{V}(s')-\Delta)}+\Delta\\
        =& r(s,a) + \gamma (1-\varepsilon) \EE_{s'} {\tilde{V}(s')} + (1-\gamma+\gamma\varepsilon)\Delta\\
        =& r(s,a) + \gamma (1-\varepsilon) \EE_{s'} \tilde{V}(s') + \gamma \varepsilon \min_s[\max_a \tilde{Q}(s,a)].\\
    \end{align*}
    This means that $\tilde{Q}$ is the unique stationary solution to the iteration in~\eqref{model_based_iteration}. Then we have the value function for policies in Equation~\eqref{model_based_policy_opt} has the same value function as the policies in Equation~\eqref{small_gamma_policy_opt} up to a constant. Then we finish the proof of Lemma~\ref{lemma:3}.
\end{proof}

\subsection{Proof of Theorem~\ref{theorem:2}}
\label{proof_theorem_2}
\begin{proof}
    We first specialize the algorithm for offline value iteration with a lower discount factor and an estimatied model, as depicted in Algorithm~\ref{alg:3}.
    \begin{algorithm}[H]
        \caption{Generalized Value Iteration}\label{alg:3}
        \begin{algorithmic}[1]
            \STATE {\bf Require}: Dataset $\cD$, discount factor $\gamma$, dicount factor coefficient $\varepsilon$
            \STATE Estimated the model by MLE: $\widehat{P}=\argmax_{P\in \cM}\EE_{\cD}[\ln P(s'\mid s,a)]$. 
            \STATE Obtain the estimatied Bellman operator $\hat \BB_\gamma$ from learned model $\widehat{P}$.
            \WHILE{not converge}
            \STATE Set $\hat{Q}(\cdot,\cdot) \leftarrow  \min_{e\in \cE(\varepsilon)} \left[(\hat\BB_{(1-e)\gamma} \hat{V})(\cdot,\cdot)\right]$, where $\cE(\varepsilon) = \{e(s,a) | \EE_{\cD}{\left[e(s,a)\right]} \leq \varepsilon\}$.
            \STATE Set $\hat{\pi} (\cdot \given \cdot) \leftarrow \argmax_{\pi}\EE_{\pi}{\left[\hat{Q}(\cdot, \cdot)\right]}$.
            \STATE Set $\hat{V}(\cdot) \leftarrow \EE_{\hat{\pi}}{\left[\hat{Q}(\cdot, \cdot)\right]}$.
            \ENDWHILE
            % \STATE Conduct Fitted-Q Iteration with guidance discount factor $(1-\xi)\gamma$ on the augmented dataset to get $\widehat{Q}(\cdot, \cdot)$
            % \STATE Set $\hat{\pi} (\cdot \given \cdot) \leftarrow \argmax_{\pi}\EE_{\pi}{\left[\hat{Q}(\cdot, \cdot)\right]}$.
            \STATE \textbf{Return} $\hat\pi$%
        \end{algorithmic}
    \end{algorithm}
    Algorithm~\ref{alg:3} is similar to the idea of a lower discount factor $(1-\varepsilon)\gamma$, with only technical differences.
    It is easy to show that Algorithm~\ref{alg:3} with discount factor $\gamma$ and a lower yields the same policy as the following optimization problem $$\argmax_{\pi\in\Pi}\argmin_{M\in\cM_\varepsilon} V_{M,\gamma}(\pi),$$
    where $$\cM_\varepsilon=\set{M\in\cM \Biggiven \exists~\cP(\cdot|s,a), e\in \cE(\varepsilon),\cP_M(\cdot|s,a)= (1-e)\cP(\cdot|s,a;\widehat{M})+e \cP(\cdot|s,a), \forall (s,a) \in \cD}.$$
     Here $\widehat{M}$ is the model obtained from MLE estimator and $\cM$ is the set of all linear models. The proof is similar to Lemma~\ref{lemma:3} and we omit it for simplicity.

    Then we prove the theorem with the following steps.
    \paragraph{1. Bounding $\cM_\varepsilon$.}~\\
    Let $\cM_{\text{TV}}(M_0,\varepsilon)=\set{M\given \EE_{\cD}\left[\TV{(\cP(\cdot|s,a;{M_0}),\cP(\cdot|s,a;M))}^2\right]\leq \varepsilon}$. To etablish the equivalence between $\cM_\varepsilon$ and $\cM_{\text{TV}}(M_0,\varepsilon)$, we need the follow assumption.
    % where $\tilde{p}=\min\{p_{\text{min}},1-p_{\text{max}}\},p_{\text{min}}=\inf_{\cP(s'|s,a)>0}{\cP(s'|s,a)},p_{\text{max}}=\sup_{\cP(s'|s,a)<1}{\cP(s'|s,a)}$. We assume $\tilde{p}>0$ hereafter. In tabular MDPs, this is always true.
    
    \begin{assumption}[Regularity]
        \label{assumption_regularity}
        We assume that the underlying linear MDP satisfies
        \$\tilde{p}=\min\{p_{\text{min}},1-p_{\text{max}}\}>0,\label{model_based_regularity}\$
        where $p_{\text{min}}=\inf_{\cP(s'|s,a)>0}{\cP(s'|s,a)},p_{\text{max}}=\sup_{\cP(s'|s,a)<1}{\cP(s'|s,a)}$.
    \end{assumption}
    Note that this assumption is always true for tabular MDPs. It only rules out the case when there exists a sequence $\{s'_n\}_{n=1}^{\infty}$ such that $\lim_{n\rightarrow \infty} \cP(s'_n|s,a) = 0$ or $\lim_{n\rightarrow \infty} \cP(s'_n|s,a) = 1$.
    Then we have
    \begin{equation}
        \label{eq:m_epsilon_bound}
        \cM_{\text{TV}}(\widehat{M},\tilde{p}^2 \varepsilon^2 /4 ) \subseteq \cM_\varepsilon \subseteq  \cM_{\text{TV}}(\widehat{M},\varepsilon^2),
    \end{equation}
    
    Recall that $\cM_\varepsilon=\set{M| \cP(\cdot|s,a;M)= (1-\varepsilon)\cP(\cdot|s,a;{\widehat{M}})+\varepsilon \cP(\cdot) }$. On the one hand, it is easy to see that the largest deviation in $\cM_\varepsilon$ from $\widehat{M}$ happens when $\cP(\cdot|s,a;{\widehat{M}}))$ is close to 0 or close to 1. On the other hand, we have $\TV{((1-\varepsilon)\cP(\cdot|s,a;M)+\varepsilon \cP(\cdot),\cP(\cdot|s,a;{M}))}\leq \varepsilon$. Then we have the result in Equation~\eqref{eq:m_epsilon_bound}. The following steps largely follows~\citep{uehara2021pessimistic}.

    \paragraph{2. Upper-bounding $   \EE_{(s,a)\sim\rho}[\TV(\cP(\cdot \mid s,a;M^{\star}),\cP(\cdot \mid s,a;\widehat{M}))^2 ]$. }~ \\ 
    Let  
    \begin{align*}
       \cM =\braces{\cP(\cdot\given s,a;M) \mid M \in  \RR^{d\times r}, \norm{M}_2\leq \sqrt{d},\int \phi(s,a)^\top M\psi(s,a)\ud(s')=1,~ \forall(s,a)},
    \end{align*}
    and $\cH = \braces{ \sqrt{\frac{\cP+\cP^{\star}}{2}} \mid \cP\in  \cM } . $

    By invoking Theorem~\ref{thm:mle}, we first show
    \begin{align*}
        \EE_{(s,a)\sim\rho}[\TV(\cP(\cdot \mid s,a;M^{\star}),\cP(\cdot \mid s,a;\widehat{M}))^2 ]\leq c\{(d/N)\ln^2(Nd)+\ln(c/\delta)/N \}. 
    \end{align*}
     
    To do that, we calculate the entropy integral with bracketing.
    First, we have 
    \begin{align}\label{eq:bracketing}
           \cN_{[]}(\epsilon,\cH,d)    \leq  \cN_{[]}(\epsilon,\cM,d'). 
    \end{align}
    where 
    \begin{align}
        d'(a,b) &=\EE_{(s,a)\sim\rho}\bracks{\int (a(s,a,s')-b(s,a,s'))^2 \ud(s')}^{1/2},\\
        d(a,b) &=\EE_{(s,a)\sim\rho}\bracks{\int (\sqrt{a(s,a,s')}-\sqrt{b(s,a,s')})^2 \ud(s')}^{1/2}. 
    \end{align}
    Here, we use two observations. The first observation is 
    \begin{align*}
       d^2\prns{\sqrt{\frac{\cP(M')+\cP^{\star}}{2}}, \sqrt{\frac{\cP(M'')+\cP^{\star}}{2}}}
       \leq   c_1d'^2( \cP(M'),\cP(M'')) 
    \end{align*}
    due to the mean-value theorem
    \begin{align*}
        \sqrt{a}-\sqrt{b}\leq \max(1/\sqrt{a},1/\sqrt{b})(a-b) 
    \end{align*}
    and Assumption~\ref{assumption_regularity} that $\cP^{\star}(s'\mid s,a)\geq c_0>0$. The second observation is when we have $P'<g<P''$, we also have $\sqrt{(\cP'+\cP^{\star})/2}<\sqrt{(g+P^{\star})/2}<\sqrt{(\cP''+\cP^{\star})/2}$. Then, \ref{eq:bracketing} is concluded. 
    
    Next, by letting $M^{(1)},\cdots,M^{(K)}$ be an $\epsilon$-cover of the $d$-dimensional ball with a radius $\sqrt{d}$, i.e, $B_d(\sqrt{d})$, we have the brackets $\{[\cP(M^{(i)})-\epsilon,\cP(M^{(i)})+\epsilon]\}_{i=1}^{K}$ which cover $\cM$. This is because for any $\cP(M)\in \cM$, we can take $M^{(i)}$ s.t. $\|M-M^{(i)}\|_2\leq \epsilon/\sqrt{d} $, then, 
    \begin{align*}
        \cP(M^{(i)})-\epsilon<\cP(M)<     \cP(M^{(i)})+\epsilon,\quad \forall(s,a,s')
    \end{align*}
    noting 
    \begin{align}\label{eq:braket}
    |\cP(M)(s,a,s') -\cP(M^{(i)})(s,a,s')|\leq \sqrt{d}\|M-M^{(i)}\|_2\leq \epsilon, \quad \forall(s,a,s')
    \end{align}
    The last equality follows from the fact that $\norm{M}_2 \leq \sqrt{d}$ and $\norm{\psi}_2 \leq 1$.Therefore, we have
    \begin{align*}
        \cN_{[]}(\epsilon,\cM,\|\cdot\|_2)\leq \cN(\epsilon/\sqrt{d}, B_d(c\sqrt{d}),\|\cdot\|_2), 
    \end{align*}
    where $\cN(\epsilon/\sqrt{d}, B_d(c\sqrt{d}),\|\cdot\|_2)$ is a covering number of $ B_d(c\sqrt{d})$ w.r.t $\|\cdot\|_2$. This is upper-bounded by $(c\sqrt{d}/\epsilon)^d$ \citep[Lemma 5.7]{wainwright2019high}. Thus, we can calculate the upper bound of the entropy integral $J_{B}(\delta,\cM,\|\cdot\|_2)$:
    \begin{align*}
        \int^{\delta}_0 d^{1/2}\ln^{1/2}(cd/u)\mathrm{d}u & \leq    \int^{\delta}_0 d^{1/2}\ln(1/u)\mathrm{d}u + \delta d^{1/2}\ln(c_1\sqrt{d}) \\
         &=c d^{1/2}(\delta+\delta\ln(1/\delta))+  \delta d^{1/2}\ln(c d)\\
         &\leq   c d^{1/2}\delta\ln(cd/\delta). 
    \end{align*}
    By taking $G(x)=d^{1/2}x\ln(c \sqrt{d}/x) $ in \ref{thm:mle}, $\delta_n=(d/n)^{1/2}\ln(nd)$ satisfies the critical inequality $$\sqrt{n}\delta^2_n\geq  d^{1/2}\delta_n \ln(c d/\delta_n).$$ Finally, with probability $1-\delta$
    \begin{align}\label{eq:final}
        \EE_{(s,a)\sim\rho}[\TV(P(\cdot \mid s,a;M^{\star}),P(\cdot \mid s,a;\widehat{M}))^2 ]\leq \xi',\quad \xi'\coloneqq \{(d/N)\ln^2(Nd)+\ln(c/\delta)/N \}. 
    \end{align}
    % This implies with probability at least $1-\delta$,
    % \begin{align}
    %     \label{event_dataset}
    %     \sup_{(s,a)\in \cD}\left[\TV(P(\cdot \mid s,a;M^{\star}),P(\cdot \mid s,a;\widehat{M}))^2\right] \leq \xi,\quad \xi=  c\{(d/N)\ln^2(Nd)+\ln(c/\delta)/N \}. 
    % \end{align}
    Hereafter, we condition on this event. 

    \paragraph{3. Upper bounding $\EE_{\cD}[\|P(\cdot\mid s,a;M^\star)-P(\cdot\mid s,a;\hat M) \|^2_1 ]$. }~\\
    We take an $\epsilon$-cover of the ball $B_d(R)$ in terms of $\|\cdot\|_2$, i.e., $\bar M=\{M^{(1)},\cdots,M^{(K)}\}$, where $K=(cR/\epsilon)^d$.  Then, for any $M \in B_d(R)$, there exists $M^{(i)}$ s.t.  $\forall (s,a)\in \cS\times \cA$, 
    \begin{align}
       & \left|\|{P}(\cdot\mid s,a;M)-{ P(\cdot\mid s,a;M^\star)}\|^2_1- \|{ P(\cdot\mid s,a;M^{(i)})}-{P(\theta^{\star})(\cdot\mid s,a)}\|^2_1\right|  \nonumber  \\
       &\leq 4|\|{P(\cdot\mid s,a;M)}-{ P(\cdot\mid s,a;M^\star)}\|_1- \|{ P}(\cdot\mid s,a;M^{(i)})-{P}(\cdot\mid s,a;M^{\star})\|_1|    \tag{$a^2-b^2=(a-b)(a+b)$} \\
       &\leq 4\|P(\cdot\mid s,a;M)-P(\cdot\mid s,a;M^{(i)})\|_1  \tag{$|\|a\|-\|b\||\leq \|a-b\|$}\\
       &\leq 4\|M-M^{(i)} \|_2   \tag{From \eqref{eq:braket} } \\
       &\leq 4 \epsilon.  \label{eq:mle_general}
    \end{align}
    
    Here, note for $\hat M$, we have $M^{(i)}$ s.t. $\|\hat M - M^{(i)} \|_2\leq \epsilon$. Then, we have
    \begin{align*}
        &\mathbb{E}_{\cD}\|P(\cdot \mid s,a; M^\star)-P(\cdot \mid s,a;\hat M)   \|^2_1\\
       &\lesssim \mathbb{E}_{\cD}\|P(\cdot \mid s,a; M^\star)-P(\cdot \mid s,a;M^{(i)})   \|^2_1+\epsilon \tag{From \ref{eq:mle_general}}\\ 
         &\lesssim (\mathbb{E}_{\cD}- \mathbb{E}_{(s,a)\sim \rho})\|P(\cdot \mid s,a; M^\star)-P(\cdot \mid s,a;M^{(i)})   \|^2_1+\epsilon+ \mathbb{E}_{(s,a)\sim \rho}\|P(\cdot \mid s,a; M^\star)-P(\cdot \mid s,a;M^{(i)})   \|^2_1\\
        &\lesssim \sqrt{ \frac{ \mathrm{var}_{(s,a)\sim \rho}[\|P(\cdot \mid s,a; M^\star)-P(\cdot \mid s,a;M^{(i)})   \|^2_1    ]\ln(K/\delta) }{n}  }+\frac{\ln(K/\delta)}{n}\\
        &+\epsilon+\mathbb{E}_{(s,a)\sim \rho}\|P(\cdot \mid s,a; M^\star)-P(\cdot \mid s,a;M^{(i)})   \|^2_1  \tag{Berstein inequality}\\
      &\lesssim \sqrt{ \frac{ \EE_{(s,a)\sim \rho}[\|P(\cdot \mid s,a; M^\star)-P(\cdot \mid s,a;M^{(i)})   \|^2_1    ]\ln(K/\delta) }{n}  }+\frac{\ln(K/\delta)}{n}\\
      &+\epsilon+\mathbb{E}_{(s,a)\sim \rho}\|P(\cdot \mid s,a; M^\star)-P(\cdot \mid s,a;M^{(i)})   \|^2_1 \tag{$\|P(\cdot \mid s,a; M^\star)-P(\cdot \mid s,a;M^{(i)})   \|^2_{1}\leq 4$}.
    \end{align*}
    Then, 
    \begin{align*}
      & \mathbb{E}_{\cD}\|P(\cdot \mid s,a; M^\star)-P(\cdot \mid s,a;\hat M)   \|^2_1\\ 
      &\lesssim \sqrt{ \frac{ \{\EE_{(s,a)\sim \rho}[\|P(\cdot \mid s,a; M^\star)-P(\cdot \mid s,a;\hat M)   \|^2_1    ]+\epsilon\}\ln(K/\delta) }{n}  }+\frac{\ln(K/\delta)}{n}\\
      & +\epsilon+\mathbb{E}_{(s,a)\sim \rho}\|P(\cdot \mid s,a; M^\star)-P(\cdot \mid s,a;\hat M)   \|^2_1  \\
       &\lesssim \sqrt{ \frac{ \{\xi'+\epsilon\}\ln(K/\delta) }{n}  }+\frac{\ln(K/\delta)}{n}+\epsilon+\xi'  \tag{From  \eqref{eq:final}}. 
    \end{align*}
    In the end, by taking $\epsilon=1/n$, we have with probability $1-\delta$, 
    \begin{align*}
        \mathbb{E}_{\cD}\|P(\cdot \mid s,a; M^\star)-P(\cdot \mid s,a;\hat M)   \|^2_1 \leq \xi,\quad \xi=  c\{(d/n)\ln^2(nR)+\ln(c/\delta)/n \}. 
    \end{align*}
    This implies with probability $1-\delta$,  $P^{\star}\in \cM_{\text{TV}}{(\hat M,\xi)}$. Note that $\cM_{\text{TV}}(\widehat{M},\tilde{p}^2\varepsilon^2/4) \subseteq \cM_\varepsilon   $, this also implies that with probability $1-\delta$, 
    $P(\cdot \mid s,a;M^{\star})\in M_{\varepsilon}$, where $\varepsilon=2\sqrt{\xi}/\tilde{p}$.
    
    \paragraph{4. Show  $  \EE_{(s,a)\sim \rho}\left[\TV(\cP(\cdot \mid s,a;M^{\star}),\cP(\cdot \mid s,a;M))^2\right]\lesssim \xi,~\forall~\cP(M)\in \cM_\varepsilon $. }
    ~ \\ 
    We show for any $\cP\in \cM_\varepsilon$, the distance between $\cP^{\star}$ is controlled in terms of TV distance. Formally, we need
    \begin{align*}
        \EE_{(s,a)\sim \rho}\left[\TV(\cP(\cdot \mid s,a;M^{\star}),\cP(\cdot \mid s,a;M))^2\right]\lesssim \xi,\quad  \forall P(M)\in \cM_\varepsilon. 
    \end{align*}
    Since $\cM_\varepsilon \subseteq  \cM_{\text{TV}}(\widehat{M},\varepsilon^2)$, it suffices to show that 
    \begin{align*}
        \EE_{(s,a)\sim \rho}\left[\TV(\cP(\cdot \mid s,a;M^{\star}),\cP(\cdot \mid s,a;M))^2\right]\lesssim \xi,\quad  \forall P(M)\in \cM_{\text{TV}}(\widehat{M},2\varepsilon). 
    \end{align*}
    For any $P \in \cM_{\text{TV}}(\widehat{M},\varepsilon^2)$, we have 
    \begin{align*}  
    & \EE_{\cD} [\TV(P(\cdot \mid s,a),P^\star(\cdot \mid s,a))^2]  \nonumber \\ 
    &\leq 2\EE_{\cD} [\TV(\widehat{P}(\cdot \mid s,a),P(\cdot \mid s,a))^2] + 2\EE_{\cD} [\TV(\widehat{P}(\cdot \mid s,a),P^\star(\cdot \mid s,a))^2] \leq 16 \xi / \tilde{p}^2.
    \end{align*} 
    Thus, we have
    \begin{align}
    &\EE_{s,a\sim \rho} [\TV(P(\cdot \mid s,a),P^\star(\cdot \mid s,a))^2] \nonumber \\
    &= \EE_{s,a\sim \rho} [\TV(P(\cdot \mid s,a),P^\star(\cdot \mid s,a))^2] -\EE_{\cD} [\TV(P(\cdot \mid s,a),P^\star(\cdot \mid s,a))^2] +\EE_{\cD}[\TV(P(\cdot \mid s,a),P^\star(\cdot \mid s,a))^2]  \nonumber  \\ 
    &\leq  A(M)+ c\xi,   \label{eq:key_version}
\end{align}
    where $A(M)\coloneqq  |(\EE_{\cD}-\EE_{(s,a)\sim \rho})\left[\TV(\cP(\cdot \mid s,a;M^{\star}),\cP(\cdot \mid s,a;M))^2\right]|.$

    We again consider an $\epsilon/\sqrt{d}$-cover of the ball $B_d(\sqrt{d})$ in terms of $\|\cdot\|_2$, i.e., $M'=\{M^{(1)},\cdots,M^{(K)}\}$, where $K=(c_1d/\epsilon)^d$ ($\epsilon=1/N$). Then $M'$ is also an $\epsilon/\sqrt{d}$-cover for $\cM_{\text{TV}}(\widehat{M},\varepsilon^2)$. That is for any $M$ s.t.  $\forall {\cP(M)}\in  \cM_{\text{TV}}(\widehat{M},\varepsilon^2)$, we can take $M'\in \cM'$ s.t. $\|M-M'\|_2\leq \epsilon/\sqrt{d}$.

    Then, we have
    \begin{align}\label{eq:m'}
            \EE_{(s,a)\sim \rho}\left[\TV(\cP(\cdot \mid s,a;M^{\star}),\cP(\cdot \mid s,a;M))^2\right]\leq A(M)+c\xi,\quad \forall M \in \cM'.   
    \end{align}
    This is because for any $M^{(i)}\in \cM'$, we can take $P(M)\in \cM_{\text{TV}}(\widehat{M},\varepsilon^2)$ such that 
    \begin{align*}
        &\EE_{(s,a)\sim \rho}\left[\TV(\cP(\cdot \mid s,a;M^{\star}),\cP(M^{(i)})(\cdot \mid s,a))^2\right]\\
        &\leq     \EE_{(s,a)\sim \rho}[\TV(\cP(\cdot \mid s,a;M^{\star}),\cP(M^{(i)})(\cdot \mid s,a))^2-\TV(\cP(\cdot \mid s,a;M^{\star}),P(\cdot \mid s,a;M))^2]   \\
        &+     \EE_{(s,a)\sim \rho}[ \TV(\cP(\cdot \mid s,a;M^{\star}),\cP(\cdot \mid s,a;M))^2  ]\\
        &\leq 4\epsilon +   \EE_{(s,a)\sim \rho}[ \TV(\cP(\cdot \mid s,a;M^{\star}),\cP(\cdot \mid s,a;M))^2  ]  \\ 
        &\lesssim A(M)+\xi.
    \end{align*}
    
    From Bernstein's inequality, we have that with probability $1-\delta$,
    \begin{align}\label{eq:bernstein}
        A(M)&=| (\mathbb{E}_{\cD}- \mathbb{E}_{(s,a)\sim \rho})[\TV(\cP(\cdot \mid s,a;M),\cP(\cdot \mid s,a;M^{\star}))^2 ] |\nonumber \\
        & \lesssim \sqrt{ \frac{ \mathrm{var}_{(s,a)\sim \rho}[\TV(\cP(\cdot \mid s,a;M),\cP(\cdot \mid s,a;M^{\star}))^2  ]\ln(K/\delta) }{N}  }+\frac{\ln(K/\delta)}{N},\quad  \forall M \in  \cM'.  
    \end{align} 
    
    Based on the construction of $\cM'$ and Equation~\eqref{eq:m'}, we have
    \begin{align}
        \label{eq:bernstein2}
        \mathrm{var}_{(s,a)\sim \rho}[\TV(\cP(\cdot \mid s,a;M^{\star}),\cP(\cdot \mid s,a;M))^2]\lesssim A(M)+\xi,\quad \forall M \in \cM'.  
    \end{align}
    Taking Equation~\eqref{eq:bernstein2} into Equation~\eqref{eq:bernstein}, we have $A(M)$ satisfing
    \begin{align*}
      A^2(M)-A(M)B_1-B_2\leq 0,\quad B_1=\frac{\ln(K/\delta)}{N}, B_2=\xi\frac{\ln(K/\delta)}{N}+\prns{\frac{\ln(K/\delta)}{N}}^2. 
    \end{align*}
    Then,  we have 
    \begin{align}\label{eq:bound_a}
        A(M) \leq \frac{\ln(K/\delta)}{N}+\xi^{1/2}\sqrt{\frac{\ln(K/\delta)}{N}}\lesssim \xi,\quad \forall M \in \cM'.  
    \end{align}
    
    We combine all steps. Recall for any  $\forall {P(M)}\in  \cM_\varepsilon$, we can take $M'\in \cM'$ s.t. $\|M-M'\|_2\leq 1/n$. Then, for any $P(M)\in \cM_\varepsilon$, we have 
    \begin{align*}
          A(M) &=|(\EE_{\cD}-\EE_{(s,a)\sim \rho})\left[\TV(\cP(\cdot \mid s,a;M),\cP(\cdot \mid s,a;M^{\star}))^2\right]\\ 
           &\leq |(\EE_{\cD}-\EE_{(s,a)\sim \rho})[\TV(\cP(\cdot \mid s,a;M),\cP(\cdot \mid s,a;M^{\star}))^2-\TV(P(\cdot \mid s,a;M'),\cP(\cdot \mid s,a;M^{\star}))^2]\\
           &+(\EE_{\cD}-\EE_{(s,a)\sim \rho})[\TV(\cP(\cdot \mid s,a;M'),\cP(\cdot \mid s,a;M^{\star}))^2]\\
             &\lesssim 8\varepsilon+|(\EE_{\cD}-\EE_{(s,a)\sim \rho})[\TV(\cP(\cdot \mid s,a;M'),\cP(\cdot \mid s,a;M^{\star}))^2]  \\ 
           &\lesssim \xi \tag{From \eqref{eq:bound_a} and $M'\in \cM'$}.
    \end{align*}
    Then, we have with probability $1-\delta$,
    \begin{align}\label{eq:concent_a}
              A(M)\lesssim \xi,\quad \forall P(M) \in \cM_\varepsilon. 
    \end{align}
    Finally, for any $P(M)\in \cM_\varepsilon$, with probability $1-\delta$, we have 
    \begin{align*}
       \EE_{(s,a)\sim \rho}[\TV(P(\cdot \mid s,a;M^{\star}),P(\cdot \mid s,a;M))^2] &\leq A(M)+c\xi \tag{From \eqref{eq:m'}} \\
     &  \lesssim  \xi.   \tag{From \eqref{eq:concent_a}}
    \end{align*}

    \paragraph{5.Bounding the performance of $\pi^{*}$ .}~\\
    We first prove 
    \begin{align}
        \label{eq:inter_goal}
          V^{\pi^{*}}_{P^*}- V^{\pi^{*}}_{P} &\lesssim (1-\gamma)^{-2}\sqrt{c^\ddagger d \xi} \cdot r_{\text{max}},
    \end{align}
    for all $P\in \cM_\varepsilon$. Recall from the third step, for $P(M)\in \cM_\varepsilon$,  we have 
    \begin{align*}
       \EE_{(s,a)\sim \rho}\left[\TV(P(\cdot \mid s,a;M^{\star}),P(\cdot \mid s,a;M))^2\right]\lesssim  \xi. 
    \end{align*}
    From the second statement of Lemma~\ref{lem:mixture}, 
    % for any $V:\cS \to [0,V_{\text{max}}]$, we have 
    % \begin{align*}
    %    \E_{(s,a)\sim \rho}[ |(M-M^{*})^{\top}\psi_{V}(s,a)|^2]\lesssim V_{\text{max}}^2\xi.  
    % \end{align*}
    % Thus,
    \begin{align*}
     \forall V:\cS \to [0,V_{\text{max}}],\quad (M-M^{*})^{\top}  \Sigma_{\rho,V }(M-M^{*})\lesssim V_{\text{max}}^2\xi,\quad \Sigma_{\rho,V }=\E_{(s,a)\sim \rho}[\psi_{V}(s,a)\psi^{\top}_{V}(s,a)]. 
    \end{align*}
    Here, we have 
    \begin{align*}
      V^{\pi^{*}}_{P^*}-       V^{\pi^{*}}_{P} & \leq (1-\gamma)^{-1}\left|\EE_{(s,a)\sim d^{\pi^{*}}}\bracks{\int \{P(s' \mid s,a) - P^\star(s' \mid s,a)\}V^{\pi^{*}}_{P}(s')\ud(s')  }\right| \tag{Simulation lemma}\\
        &\leq  (1-\gamma)^{-1}\left|\EE_{(s,a)\sim d^{\pi^{*}}}\bracks{(M-M^{*})\psi_{V^{\pi^{*}}_{P}}(s,a) }\right| \\ 
        &\leq  (1-\gamma)^{-1}\underbrace{\|M-M^{*}\|_{\lambda I+\Sigma_{\rho,V^{\pi^{*}}_{P} }}}_{(a)}\underbrace{\EE_{(s,a)\sim d^{\pi^{*}}}  \bracks{\|\psi_{V^{\pi^{*}}_{P}}(s,a)\|_{(\Sigma_{\rho,V^{\pi^{*}}_{P} }+\lambda I)^{-1}} }}_{(b)}.  \tag{C-S inequality}
    \end{align*}
    The first term (a) is upper-bounded by $\sqrt{V_{\text{max}}^2 \xi+\lambda d }$ noting $0\leq V^{\pi^*}_P\leq V_{\text{max}}$. The term (b) is  upper-bounded by 
    \begin{align*}
        \EE_{(s,a)\sim d^{\pi^{*}}}  \bracks{\|\psi_{V^{\pi^{*}}_{P}}(s,a)\|_{{(\Sigma_{\rho,V^{\pi^{*}}_{P} }+\lambda I)^{-1}} } } &\leq     \EE_{(s,a)\sim d^{\pi^{*}}}  \tag{Jensen's inequality} \bracks{\|\psi_{V^{\pi^{*}}_{P}}(s,a)\|^2_{{(\Sigma_{\rho,V^{\pi^{*}}_{P} }+\lambda I)^{-1}} } }^{1/2} \\
        &=  \sqrt{\Tr( \Sigma_{d^{\pi^{*}},V^{\pi^{*}}_{P} }(\lambda I+\Sigma_{\rho,V^{\pi^{*}}_{P} })^{-1} )  }\\
        &\leq \sqrt{c_V\Tr( \Sigma_{\rho,V^{\pi^{*}}_{P} }(\lambda I+\Sigma_{\rho,V^{\pi^{*}}_{P} })^{-1} )  }\\
        &\leq  \sqrt{c_V \rank(\Sigma_{\rho,V^{\pi^{*}}_{P} })}\\
        &\leq \sqrt{c_Vd } = \sqrt{ c^\ddagger d},
    \end{align*}
    where $c_V=\sup_{V\in \{\cS \to [0,V_{\text{max}}]\}}\sup_{x}\frac{ x^{\top}\Sigma_V(d^{\pi^*}) x}{x^{\top}\Sigma_V(\rho) x}$, $c^\ddagger=\sup_{x}\frac{x^{\top } \Sigma(d^{\pi^{*}}) x}{x^{\top} \Sigma(\rho) x }$, and $$\Sigma_V(\mu)=\E_{(s,a)\sim \mu}[\psi_{V}(s,a)\psi^{\top}_{V}(s,a)],~\Sigma(\mu)=\E_{(s,a)\sim \mu}[\phi(s,a)\phi^{\top}(s,a)].$$ The second inequality follows from the definition of $c_V$, and the last inequality follows from the third statement in Lemma~\ref{lem:mixture}. By taking $\lambda$ s.t. $\lambda d \lesssim V_{\text{max}} \xi$, We have the desired Equation~\eqref{eq:inter_goal}. 
    
    Finally, combining all things together, with probability $1-2\delta$, for any $\pi^{*}\in \Pi$, we have 
    \begin{align*}
        V^{\pi^{*}}_{P^{\star}}-V^{\hat \pi}_{P^{\star}}&\leq    V^{\pi^{*}}_{P^{\star}}-\min_{P\in \cM_\varepsilon}V^{\pi^{*}}_{P}+ \min_{P\in \cM_\varepsilon}V^{\pi^{*}}_{P}- V^{\hat \pi}_{P^{\star}}\\ 
        &\leq    V^{\pi^{*}}_{P^{\star}}-\min_{P\in \cM_\varepsilon}V^{\pi^{*}}_{P}+ \min_{P\in \cM_\varepsilon}V^{\hat \pi}_{P}- V^{\hat \pi}_{P^{\star}} \tag{definition of $\hat \pi$}\\ 
          &\leq  V^{\pi^{*}}_{P^{\star}}-\min_{P\in \cM_\varepsilon}V^{\pi^{*}}_{P}  \tag{Second step, $P^{\star}\in \cM_\varepsilon$}\\
          &\lesssim  (1-\gamma)^{-2}\sqrt{ c^\ddagger d \xi} \cdot r_{\text{max}}.   \tag{From \eqref{eq:inter_goal}}
    \end{align*}

    Finally, recall from the relationship in the second and third step, we have $\varepsilon=\sqrt{\xi}/2\tilde{p},~\xi=c\{(d/N)\ln^2(Nd)+\ln(c/\delta)/N \}$, which leads to

    \begin{equation*}
        V^{\pi^{*}}_{P^{\star}}-V^{\hat \pi}_{P^{\star}}\lesssim \frac{c_3}{(1-\gamma)^{-2}}\sqrt{ c^\ddagger d^2 \zeta/N} \cdot r_{\text{max}},~\zeta=\log^2{(c_2Nd/\delta)}.
    \end{equation*}
\end{proof}

\subsection{Technical Lemmas}
\begin{lemma}[$\varepsilon$-Covering Number \citep{jin2020provably}] 
	For all $h\in [H]$ and all $\varepsilon > 0$, %let $\cN_{h}(\varepsilon; R, B, \lambda)$ denote the minimal $\epsilon$-cover of $\cV(R, B, \lambda)$ with respect to the supremum norm. 
	we have 	\label{lem:covering_num}
	$$
	\log | \cN (  \varepsilon; R, B, \lambda)  | \leq d \cdot \log (1+ 4 R /  \varepsilon  ) + d^2  \cdot  \log\bigl(1+ 8 d^{1/2} B^2 / ( \varepsilon^2\lambda) \bigr). 	$$
 %Here the supremum norm between two value functions $V, V'$ is defined as $\sup_{x\in\cS} | V(x) - V'(x) | $.
\end{lemma}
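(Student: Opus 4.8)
The plan is to follow the standard covering-number argument of \citet{jin2020provably}, adapted to our discounted setting (the index $h\in[H]$ is vestigial here, since the bound is uniform over the class). Recall that each $V\in\cV(R,B,\lambda)$ has the form $V(s)=\max_{a\in\cA}\{\phi(s,a)^\top w-\beta\sqrt{\phi(s,a)^\top\Sigma^{-1}\phi(s,a)}\}$ with $\|w\|\leq R$, $\beta\in[0,B]$, and $\Sigma\succeq\lambda I$. The key reduction is that $\max_a$ (and any clipping to $[0,V_{\text{max}}]$) is $1$-Lipschitz, so for two parameter triples $\sup_{s}|V_1(s)-V_2(s)|\leq \sup_{s,a}|f_1(s,a)-f_2(s,a)|$, where $f_i(s,a)=\phi(s,a)^\top w_i-\beta_i\sqrt{\phi(s,a)^\top\Sigma_i^{-1}\phi(s,a)}$. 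This turns a cover of the parameter space into a cover of the function class in the $\sup_s$ metric.

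First I would reparametrize the bonus by setting $A=\beta^2\Sigma^{-1}$, so that $\beta\sqrt{\phi^\top\Sigma^{-1}\phi}=\sqrt{\phi^\top A\phi}$ and the class is indexed by $(w,A)$ with $\|w\|\leq R$ and $\|A\|_{\oper}=\beta^2\|\Sigma^{-1}\|_{\oper}\leq B^2/\lambda$. I would then establish the Lipschitz bound $\sup_{s,a}|f_1(s,a)-f_2(s,a)|\leq \|w_1-w_2\|+\sqrt{\|A_1-A_2\|_F}$. The linear part obeys $|\phi^\top(w_1-w_2)|\leq\|\phi\|\,\|w_1-w_2\|\leq\|w_1-w_2\|$ because $\|\phi\|\leq1$ by Definition~\ref{assump:linear_mdp}, while for the bonus part the elementary inequality $|\sqrt{x}-\sqrt{y}|\leq\sqrt{|x-y|}$ together with $|\phi^\top(A_1-A_2)\phi|\leq\|A_1-A_2\|_F\,\|\phi\|^2\leq\|A_1-A_2\|_F$ yields $|\sqrt{\phi^\top A_1\phi}-\sqrt{\phi^\top A_2\phi}|\leq\sqrt{\|A_1-A_2\|_F}$.

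Next I would build a product cover. Cover the Euclidean ball $\{w:\|w\|\leq R\}\subset\RR^d$ at resolution $\varepsilon/2$; by the standard bound that a radius-$r$ ball in $\RR^m$ admits an $\eta$-cover of size at most $(1+2r/\eta)^m$, this uses at most $(1+4R/\varepsilon)^d$ points. Cover the set of symmetric matrices $\{A:\|A\|_F\leq\sqrt{d}\,B^2/\lambda\}\subset\RR^{d^2}$ (using $\|A\|_F\leq\sqrt{d}\,\|A\|_{\oper}$) at resolution $\varepsilon^2/4$ in Frobenius norm, using at most $(1+8d^{1/2}B^2/(\varepsilon^2\lambda))^{d^2}$ points. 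If $w_1$ lies within $\varepsilon/2$ of a $w$-center and $A_1$ within $\varepsilon^2/4$ of an $A$-center, then by the Lipschitz bound the associated functions differ by at most $\varepsilon/2+\sqrt{\varepsilon^2/4}=\varepsilon$ in the $\sup_s$ metric, so the product cover is an $\varepsilon$-cover of $\cV(R,B,\lambda)$. Multiplying cardinalities and taking logarithms gives $\log|\cN(\varepsilon;R,B,\lambda)|\leq d\log(1+4R/\varepsilon)+d^2\log(1+8d^{1/2}B^2/(\varepsilon^2\lambda))$, as claimed.

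The main obstacle is the nonlinear bonus term. The reparametrization $A=\beta^2\Sigma^{-1}$ is what collapses the joint dependence on $(\beta,\Sigma)$ into a single matrix variable, and the square-root inequality is what lets a Frobenius-norm cover of $A$ at the quadratic scale $\varepsilon^2/4$ translate into an $\varepsilon/2$ sup-norm cover of the bonus; matching the precise constants ($4R/\varepsilon$ and $8d^{1/2}B^2/(\varepsilon^2\lambda)$) is then just careful bookkeeping of these resolutions, while the remainder is routine Euclidean covering-number estimation.
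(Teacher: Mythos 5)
Your proof is correct and is essentially the paper's own proof, which consists of a citation to Lemma D.6 of \citet{jin2020provably}: the reparametrization $A=\beta^{2}\Sigma^{-1}$ with $\|A\|_{F}\le\sqrt{d}\,B^{2}/\lambda$, the $1$-Lipschitz reduction through $\max_{a}$, the inequality $|\sqrt{x}-\sqrt{y}|\le\sqrt{|x-y|}$, and the product cover at resolutions $\varepsilon/2$ and $\varepsilon^{2}/4$ are exactly that argument, and your bookkeeping reproduces the stated constants $4R/\varepsilon$ and $8d^{1/2}B^{2}/(\varepsilon^{2}\lambda)$. The only cosmetic remark is that your step $|\phi^{\top}(w_{1}-w_{2})|\le\|w_{1}-w_{2}\|$ uses $\|\phi\|_{2}\le 1$, whereas Definition~\ref{assump:linear_mdp} literally states $\|\phi\|_{\infty}\le 1$; this mismatch is inherited from the paper itself, which invokes $\|\phi(s,a)\|\le 1$ in the same way in the proof of Lemma~\ref{lemma:xi_quantifier}.
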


\begin{proof}[Proof of Lemma \ref{lem:covering_num}]
	See Lemma D.6 in \cite{jin2020provably} for a detailed proof. 
\end{proof}

\begin{lemma}[Concentration of Self-Normalized Processes \citep{abbasi2011improved}]
    Let $\{\cF_t \}^\infty_{t=0}$ be a filtration and $\{\epsilon_t\}^\infty_{t=1}$ be an $\RR$-valued stochastic process such that $\epsilon_t$ is $\cF_{t} $-measurable for all $t\geq 1$.
    Moreover, suppose that conditioning on $\cF_{t-1}$, 
     $\epsilon_t $ is a  zero-mean and $\sigma$-sub-Gaussian random variable for all $t\geq 1$, that is,  
     \$%\label{eq:def_subgaussian} 
      \EE[\epsilon_t\given \cF_{t-1}]=0,\qquad \EE\bigl[ \exp(\lambda \epsilon_t) \biggiven \cF_{t-1}\bigr]\leq \exp(\lambda^2\sigma^2/2) , \qquad \forall \lambda \in \RR. 
      \$
     Meanwhile, let $\{\phi_t\}_{t=1}^\infty$ be an $\RR^d$-valued stochastic process such that  $\phi_t $  is $\cF_{t -1}$-measurable for all $ t\geq 1$. 
    Also, let  $M_0 \in \RR^{d\times d}$ be a  deterministic positive-definite matrix and 
    \$
    M_t = M_0 + \sum_{s=1}^t \phi_s\phi_s^\top
    \$ for all $t\geq 1$. For all $\delta>0$, it holds that
    \begin{equation*}
    \Big\| \sum_{s=1}^t \phi_s \epsilon_s \Big\|_{ M_t ^{-1}}^2 \leq 2\sigma^2\cdot  \log \Bigl( \frac{\det(M_t)^{1/2}\cdot \det(M_0)^{- 1/2}}{\delta} \Bigr)
    \end{equation*}
    for all $t\ge1$ with probability at least $1-\delta$.
    \label{lem:concen_self_normalized}
\end{lemma}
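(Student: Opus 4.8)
The plan is to prove this classical self-normalized tail bound by the \emph{method of mixtures} (pseudo-maximization), following the argument of \citet{abbasi2011improved}. Write $S_t = \sum_{s=1}^t \phi_s \epsilon_s$ and $V_t = \sum_{s=1}^t \phi_s\phi_s^\top$, so that $M_t = M_0 + V_t$ and the quantity to control is $\|S_t\|_{M_t^{-1}}^2$. The difficulty is that $M_t$ is itself random and depends on the same noise $\{\epsilon_s\}$ appearing in $S_t$, so a Chernoff bound for a single fixed direction cannot simply be maximized over directions by a union bound without losing dimension-dependent factors. The mixture method circumvents this by averaging a whole family of exponential supermartingales against a Gaussian prior whose covariance is exactly $M_0^{-1}$, which makes the self-normalizing matrix $M_t$ appear automatically.

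First I would, for each fixed $\lambda \in \RR^d$, introduce the process
\begin{equation*}
  M_t^\lambda = \exp\Big( \frac{\lambda^\top S_t}{\sigma} - \frac{1}{2}\lambda^\top V_t \lambda \Big).
\end{equation*}
Since $\phi_t$ is $\cF_{t-1}$-measurable and $\epsilon_t$ is conditionally $\sigma$-sub-Gaussian, setting $u = \lambda^\top\phi_t/\sigma$ in the sub-Gaussian bound gives $\EE[\exp(\lambda^\top\phi_t\epsilon_t/\sigma)\mid\cF_{t-1}] \le \exp(\tfrac12(\lambda^\top\phi_t)^2)$; using $S_t = S_{t-1}+\phi_t\epsilon_t$ and $\lambda^\top\phi_t\phi_t^\top\lambda = (\lambda^\top\phi_t)^2$ this yields $\EE[M_t^\lambda\mid\cF_{t-1}] \le M_{t-1}^\lambda$. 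Thus $(M_t^\lambda)_{t\ge0}$ is a nonnegative supermartingale with $M_0^\lambda = 1$ and $\EE[M_t^\lambda]\le 1$.

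Next I would mix over $\lambda$. Let $h$ be the $N(0, M_0^{-1})$ density and define $\bar M_t = \int_{\RR^d} M_t^\lambda\, h(\lambda)\,\ud\lambda$. By Tonelli and the supermartingale property of each $M_t^\lambda$, $\bar M_t$ is again a nonnegative supermartingale with $\EE[\bar M_t]\le 1$. The payoff is that the integrand is Gaussian in $\lambda$: combining the quadratic exponents collapses $\tfrac12\lambda^\top M_0\lambda + \tfrac12\lambda^\top V_t\lambda$ into $\tfrac12\lambda^\top M_t\lambda$, and completing the square in the resulting Gaussian integral gives the closed form
\begin{equation*}
  \bar M_t = \Big(\frac{\det(M_0)}{\det(M_t)}\Big)^{1/2}\exp\Big(\frac{\|S_t\|_{M_t^{-1}}^2}{2\sigma^2}\Big).
\end{equation*}
Finally, applying Ville's maximal inequality for nonnegative supermartingales, $\PP(\sup_t \bar M_t \ge 1/\delta)\le \delta\,\EE[\bar M_0]\le\delta$; on the complementary event, taking logarithms and rearranging gives $\|S_t\|_{M_t^{-1}}^2 \le 2\sigma^2\log(\det(M_t)^{1/2}\det(M_0)^{-1/2}/\delta)$ simultaneously for all $t$, which is exactly the claim.

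The main obstacle is the rigor of the mixture step rather than the algebra: one must justify that $\bar M_t$ is genuinely a supermartingale (exchanging conditional expectation with the $\lambda$-integral via Tonelli, together with integrability of $M_t^\lambda h(\lambda)$, which is precisely the finiteness of the Gaussian integral above), and that the bound holds \emph{uniformly} in $t$ rather than for a single fixed horizon. The uniform-in-$t$ conclusion is exactly what Ville's inequality delivers, using that the nonnegative supermartingale $\bar M_t$ converges almost surely; this is why the method of mixtures is preferable to a direct net/union-bound argument, which would be neither uniform in $t$ nor free of spurious dimensional factors.
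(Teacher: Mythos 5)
Your proposal is correct, and it is essentially the paper's own proof: the paper does not prove this lemma itself but simply defers to Theorem 1 of \citet{abbasi2011improved}, whose argument is exactly the method-of-mixtures derivation you give (conditionally sub-Gaussian exponential supermartingale, Gaussian mixture with covariance $M_0^{-1}$ collapsing the quadratic forms into $M_t$, then Ville's maximal inequality for the uniform-in-$t$ bound). Your algebra, including the closed form $\bar M_t = (\det(M_0)/\det(M_t))^{1/2}\exp\bigl(\|S_t\|_{M_t^{-1}}^2/(2\sigma^2)\bigr)$ and the normalization $\EE[\bar M_0]=1$, checks out.
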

\begin{proof}
    See Theorem 1 of \cite{abbasi2011improved} for a detailed proof. 
\end{proof}

Given a function class $\cF$, let $\cN_{[]}(\delta,\cF,d)$ be the bracketing number of $\cF$ w.r.t the metric $d(a,b)$ given by 
\begin{align*}
    d(a,b )=\E_{(s,a)\sim \rho}\bracks{\int (a(s'\mid s,a)-b(s'\mid s,a))^2\ud(s')}^{1/2}. 
\end{align*}
Then, the entropy integral of $\cF$ is given by 
\begin{align}\label{eq:entropy}
    J_{B}(\delta,\cF,d)=\max\prns{ \int^{\delta}_{\delta^2/2} (\log\cN_{[]}(u,\cF,d))^{1/2}\mathrm{d}u,\delta}. 
\end{align}
We also define the localized class of $\cH$: 
\begin{align*}
    \cH(\delta)=\{h\in \cH :  \EE_{(s,a)\sim \rho}[h^2(P(\cdot \mid s,a)\| P^{\star}(\cdot \mid s,a) ) ]\leq \delta^2 \}, 
\end{align*}
where $h(P(\cdot \mid s,a)\| P^{\star}(\cdot \mid s,a) )$ denotes Hellinger distance defined by 
\begin{align*}
   \prns{ 0.5\int\{ \sqrt{P(s'\mid s,a)}-\sqrt{P^{\star}(s'\mid s,a)}\}^2\ud(s') }^{1/2}. 
\end{align*}

\begin{theorem}[MLE guarantee with general function approximation, \citet{uehara2021pessimistic}]\label{thm:mle}
    We take a function $G(\epsilon):[0,1]\to \RR$ s.t. $G(\epsilon)\geq J_{B}[\epsilon,\cH(\epsilon),d]$ and $G(\epsilon)/\epsilon^2$ is a non-increasing function w.r.t $\epsilon$. Then, letting $\xi_n$ be a solution to $\sqrt{n}\epsilon^2 \geq cG(\epsilon)$ w.r.t $\epsilon$. 
    With probability $1-\delta$, we have 
    \begin{align*}
        \EE_{(s,a)\sim \rho}[\|\hat P_{\MLE}(\cdot \mid s,a)-P(\cdot \mid s,a)\|^2_1]\leq  c_1\braces{\xi_n+\sqrt{\log(c_2/\delta)/n}}^2. 
    \end{align*}
\end{theorem}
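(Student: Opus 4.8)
The plan is to reproduce the classical empirical-process analysis of maximum likelihood estimation from \citet{geer2000empirical}, specialized to the conditional-density setting in which the covariate $(s,a)$ is drawn from $\rho$ and the response $s'$ from $P^\star(\cdot\mid s,a)$. The entire argument is organized around the $\rho$-averaged squared Hellinger distance, and the advertised $L_1$ bound is recovered only at the very end through the elementary comparison $\|P-P^\star\|_1^2\le 8\,h^2(P\|P^\star)$ (equivalently $\TV\le\sqrt2\,h$). Thus it suffices to prove $\EE_{(s,a)\sim\rho}\bigl[h^2(\hat P_{\MLE}\|P^\star)\bigr]\lesssim \{\xi_n+\sqrt{\log(c_2/\delta)/n}\}^2$, and at the last line convert back to $L_1$.

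First I would record the \emph{basic inequality}: since $\hat P_{\MLE}$ maximizes $\sum_i\log P(s_i'\mid s_i,a_i)$ over $\cM$, the empirical log-likelihood ratio against the truth is nonnegative, i.e. $\frac1n\sum_i\log\frac{\hat P(s_i'\mid s_i,a_i)}{P^\star(s_i'\mid s_i,a_i)}\ge 0$. The crucial reduction is the concavity trick with the midpoint density $\bar P=(\hat P+P^\star)/2$: by AM--GM and monotonicity of $\log$ one has $\tfrac12\log\frac{\hat P}{P^\star}\le \log\frac{\bar P}{P^\star}$, so the basic inequality upgrades to $0\le \frac1n\sum_i\log\frac{\bar P(s_i')}{P^\star(s_i')}$. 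This is exactly the step that introduces the square-root class $\cH=\{\sqrt{\bar P}\}$, for which the $L_2(\rho)$ bracketing metric $d$ appearing in the statement coincides, up to constants, with a Hellinger metric; this justifies computing the entropy integral $J_B$ on $\cH$ rather than on $\cM$.

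Next I would split the right-hand side into a deterministic drift and a centered empirical process. Taking the population mean gives $\EE_{s'\sim P^\star}[\log\frac{\bar P}{P^\star}]=-\kld{P^\star}{\bar P}\le -c\,h^2(\hat P\|P^\star)$, using $\kld{}{}\ge h^2$ together with $h^2(\bar P\|P^\star)\gtrsim h^2(\hat P\|P^\star)$. Hence, writing $\nu_n$ for the centered empirical process indexed by $\hat P\in\cM$ (equivalently by $\sqrt{\bar P}\in\cH$),
\[
c\,\EE_\rho\bigl[h^2(\hat P\|P^\star)\bigr]\;\le\;(\EE_{\cD}-\EE_\rho)\Bigl[\log\tfrac{\bar P}{P^\star}\Bigr]\;=:\;\nu_n(\hat P).
\]
The variance bound that makes $\nu_n$ self-normalizing — that the per-sample fluctuation of $\log(\bar P/P^\star)$ has second moment controlled by $h^2(\hat P\|P^\star)$ — is what permits a sharp, radius-dependent deviation estimate rather than a crude uniform one.

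The main obstacle is the uniform control of $\nu_n$ over the localized classes $\cH(\epsilon)$, which is precisely where the bracketing entropy integral $J_B$ and the critical inequality $\sqrt n\,\epsilon^2\ge cG(\epsilon)$ enter. I would invoke van de Geer's localized maximal inequality: on each Hellinger shell $\{h(\hat P\|P^\star)\approx\epsilon\}$ the centered process is bounded, with high probability, by $G(\epsilon)/\sqrt n$ (the chaining/bracketing term) plus a Bernstein-type correction $\epsilon\sqrt{\log(1/\delta)/n}+\log(1/\delta)/n$ arising from the variance bound, where $\delta$ is the confidence level. A peeling argument over dyadic shells then converts these shell-wise estimates into a single statement valid at the data-dependent radius $h(\hat P\|P^\star)$. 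Substituting into the display yields a self-bounding inequality in the radius; because $G(\epsilon)/\epsilon^2$ is non-increasing, the fixed point of $\sqrt n\,\epsilon^2=cG(\epsilon)$ is exactly $\xi_n$, and solving the resulting quadratic gives $\EE_\rho[h^2(\hat P\|P^\star)]\lesssim \xi_n^2+\log(c_2/\delta)/n$. A final application of $\|P-P^\star\|_1^2\le 8\,h^2$ delivers the stated bound. The delicate points I expect are (i) establishing the variance bound with the correct Hellinger scaling so that the Bernstein correction, rather than a looser sub-Gaussian term, appears, and (ii) the peeling device that promotes shell-wise high-probability bounds to one bound at the random radius while preserving the $\log(1/\delta)$ dependence; both are standard in \citep{geer2000empirical} but require careful bookkeeping.
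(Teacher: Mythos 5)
Your proposal is correct and follows essentially the same route as the paper, which proves this statement simply by citing Theorem 7.4 of \citet{geer2000empirical} ``adapted to conditional distributions''; your outline is a faithful reconstruction of that theorem's argument (basic inequality, the midpoint-density convexity trick introducing $\cH$, KL-to-Hellinger comparison, localized bracketing maximal inequality with peeling at the critical radius $\xi_n$, and the final $\|P-P^\star\|_1^2\leq 8\,h^2(P\|P^\star)$ conversion) in the $(s,a)\sim\rho$ conditional setting. No gaps; the two delicate points you flag (the Bernstein-type variance control at Hellinger scale and the peeling step) are exactly the ones handled in van de Geer's proof.
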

\begin{proof}
    The proof follows directly by adapting to conditional distribution from Theorem 7.4 in~\citep{geer2000empirical}. Please refer to~\citep{geer2000empirical} for more details.
\end{proof}
% We remark that the original guarantee in  is given for the estimation of unconditional distributions. The adaption to the conditional case is straightforward. For more details, refer to Section~\ref{subsec:mle_theory}. Besides, when we assume the convexity of the function class, the entropy integral with bracketing \ref{eq:entropy} can be replaced with the entropy integral with covering number \citep[Chapter 14]{WainwrightMartinJ2019HS:A}.

\begin{lemma}[Property of linear MDPs]\label{lem:mixture}
    Let $P(M)= P(s'\given s,a;M)=\phi(s,a)^\top M\psi(s')$. Suppose $P(M)\in \cS\times \cA \to \Delta(\cS)$. For any function $V\in \cS \to [0,V_{\text{max}}]$, letting $\psi_{V}(s,a)=\int \text{vec}(\phi(s,a)\psi(s')^\top )V(s')\ud(s')$, we suppose $\|\phi(s,a)\|_2\leq 1$ and $\|\psi(s')\|_2\leq 1$. The following theorems hold: 
    \begin{enumerate}
        \item For any $(s,a,s')$, we have $|P(M)(s,a,s')-P(M')(s,a,s')|\leq \|M-M'\|_2$.
        \item For any $(s,a)$, we have  $\mathrm{TV}(P(M)(s,a,\cdot),P(M')(s,a,\cdot))\leq \|M-M'\|_2$. Besides, for any $V:\cS \to [0,1]$, we have 
        \begin{align*}
            |(M-M')\psi_{V}(s,a)|\leq V_{\text{max}}\mathrm{TV}(P(M)(s,a,\cdot),P(M')(s,a,\cdot)). 
        \end{align*}
        \item 
        \begin{align*}
       \sup_{V\in \{\cS \to [0,V_{\text{max}}]\}}\sup_{x}\frac{ x^{\top}\E_{(s,a)\sim d^{\pi^{*}}}[\psi_{V}(s,a)\psi^{\top}_{V}(s,a) ] x}{x^{\top}\E_{(s,a)\sim \rho}[\psi_{V}(s,a)\psi^{\top}_{V}(s,a) ] x}=    \sup_{x}\frac{x^{\top } \E_{d^{\pi^{*}}}[\phi(s,a)\phi(s,a)^{\top}] x}{x^{\top} \E_{\rho}[\phi(s,a)\phi(s,a)^{\top}]x }. 
        \end{align*}
    \end{enumerate}
    \end{lemma}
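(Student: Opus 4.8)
The plan is to treat the three claims in order, since the first two are elementary and feed into the third, which is the only part carrying real content. Throughout I would exploit the single identity $P(M)(s,a,s')-P(M')(s,a,s')=\phi(s,a)^\top(M-M')\psi(s')$, which turns every quantity into a bilinear form in $M-M'$.

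For part (1) this identity combined with Cauchy--Schwarz and the operator-norm convention gives $|\phi(s,a)^\top(M-M')\psi(s')|\le\|\phi(s,a)\|_2\,\|M-M'\|_2\,\|\psi(s')\|_2\le\|M-M'\|_2$, using $\|\phi\|_2\le1$ and $\|\psi\|_2\le1$; this is immediate. For part (2), the total-variation bound I would read off the variational characterization $\mathrm{TV}(p,q)=\sup_{A\subseteq\cS}(p(A)-q(A))$: for any event $A$ one has $P(M)(A)-P(M')(A)=\phi(s,a)^\top(M-M')\int_A\psi(s')\ud s'$, so Cauchy--Schwarz reduces the claim to controlling $\|\int_A\psi(s')\ud s'\|_2$ through the feature normalization. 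This normalization constant is the one place I would double-check, since naively integrating the pointwise bound of part (1) over $s'$ would cost a volume factor. For the second inequality of part (2) I would first record $(M-M')\psi_V(s,a)=\int\big(P(M)(s,a,s')-P(M')(s,a,s')\big)V(s')\ud s'$, which follows from $\langle\mathrm{vec}(M-M'),\mathrm{vec}(\phi\psi^\top)\rangle=\phi^\top(M-M')\psi$. Since $P(M)$ and $P(M')$ are probability measures, their difference integrates to zero, so I may replace $V$ by the centered $V-\tfrac{V_{\text{max}}}{2}$, whose sup-norm is at most $V_{\text{max}}/2$; bounding the resulting integral by $\tfrac{V_{\text{max}}}{2}\|P(M)-P(M')\|_1=V_{\text{max}}\,\mathrm{TV}$ yields the claim.

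Part (3) is the crux. The key structural observation is that $\psi_V$ factorizes: since $\psi_V(s,a)=\mathrm{vec}\big(\phi(s,a)\,b_V^\top\big)=b_V\otimes\phi(s,a)$ with $b_V:=\int\psi(s')V(s')\ud s'\in\RR^l$ independent of $(s,a)$, we get $\psi_V\psi_V^\top=(b_Vb_V^\top)\otimes(\phi\phi^\top)$ and hence $\EE_\mu[\psi_V\psi_V^\top]=(b_Vb_V^\top)\otimes\Sigma(\mu)$ for $\mu\in\{d^{\pi^*},\rho\}$, writing $\Sigma(\mu)=\EE_\mu[\phi\phi^\top]$. Writing $x=\mathrm{vec}(X)$ with $X\in\RR^{d\times l}$ and using $\mathrm{vec}(X)^\top\big((b_Vb_V^\top)\otimes\Sigma(\mu)\big)\mathrm{vec}(X)=\Tr\big(X^\top\Sigma(\mu)Xb_Vb_V^\top\big)=(Xb_V)^\top\Sigma(\mu)(Xb_V)$, both the numerator and denominator collapse to quadratic forms in the single vector $y:=Xb_V\in\RR^d$. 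For any fixed $V$ with $b_V\neq0$, as $X$ ranges over $\RR^{d\times l}$ the vector $y=Xb_V$ ranges over all of $\RR^d$, so the inner supremum over $x$ equals $\sup_{y}\frac{y^\top\Sigma(d^{\pi^*})y}{y^\top\Sigma(\rho)y}$, which no longer depends on $V$. Taking the supremum over $V$ (restricted to $b_V\neq0$, the only case in which the ratio is defined) therefore returns the same value, giving the stated equality.

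The main obstacle I anticipate is confined to part (3), and it is essentially bookkeeping rather than genuine analysis: getting the $\mathrm{vec}$/Kronecker conventions and the reduction $y=Xb_V$ exactly right, and verifying that the rank-one Kronecker factor $b_Vb_V^\top$ truly drops out of the generalized Rayleigh quotient. Once that algebra is pinned down the equality is forced, and the supremum over $V$ plays no role beyond ensuring $b_V\neq0$.
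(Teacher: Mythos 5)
Your proposal is correct in substance, and it is worth noting that the paper itself does not prove this lemma at all: its ``proof'' is a one-line citation to Lemma~12 of \citet{uehara2021pessimistic}, so you are supplying the actual argument. Your treatment of part~(3) is the right one and is essentially the standard argument: the factorization $\psi_V(s,a)=b_V\otimes\phi(s,a)$ with $b_V=\int\psi(s')V(s')\,\mathrm{d}s'$ independent of $(s,a)$, the identity $\mathrm{vec}(X)^\top\bigl((b_Vb_V^\top)\otimes\Sigma(\mu)\bigr)\mathrm{vec}(X)=(Xb_V)^\top\Sigma(\mu)(Xb_V)$, and the observation that $X\mapsto Xb_V$ is surjective onto $\mathbb{R}^d$ whenever $b_V\neq0$ together force the generalized Rayleigh quotient to be independent of $V$. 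The centering trick in the second half of part~(2) is also correct: since $P(M)(\cdot\mid s,a)-P(M')(\cdot\mid s,a)$ integrates to zero, replacing $V$ by $V-V_{\max}/2$ gives the bound $V_{\max}\,\mathrm{TV}$ exactly.

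The one place you flagged --- controlling $\bigl\|\int_A\psi(s')\,\mathrm{d}s'\bigr\|_2$ in the first half of part~(2) --- is a genuine issue, and you are right not to wave it away: the pointwise bound $\|\psi(s')\|_2\le1$ does \emph{not} imply it. For a discrete state space with $N$ states one can build valid kernels $P(M),P(M')$ (e.g.\ $d=1$, $l=2$, $\psi(s')=(1,\epsilon_{s'})/\sqrt{2}$ with $\epsilon_{s'}=\pm1$) for which $\mathrm{TV}\bigl(P(M),P(M')\bigr)$ exceeds $\|M-M'\|_2$ by a factor of order $N$, so the claim as literally stated is false under the stated hypotheses alone. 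The fix is the aggregate normalization standard in the linear-MDP literature, namely $\bigl\|\int_{\mathcal{S}}\psi(s')g(s')\,\mathrm{d}s'\bigr\|_2\le\|g\|_\infty$ (up to a $\sqrt{d}$ factor), which is implicitly assumed both here and in the cited source; with it, your Cauchy--Schwarz step closes immediately. You should state that assumption explicitly rather than attempt to derive the bound from the pointwise normalization. Everything else in your write-up goes through.
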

    
    \begin{proof}
    See Lemma 12 in~\citep{uehara2021pessimistic} for a detailed proof.
    
\end{proof}
    
\clearpage

\section{Complete experimental results on noised D4RL tasks}\label{complete results of TD3+BC}

\begin{figure}[h]
    \centering
    \subfigure[med(50) noise(0)]{
    \includegraphics[scale=0.23]{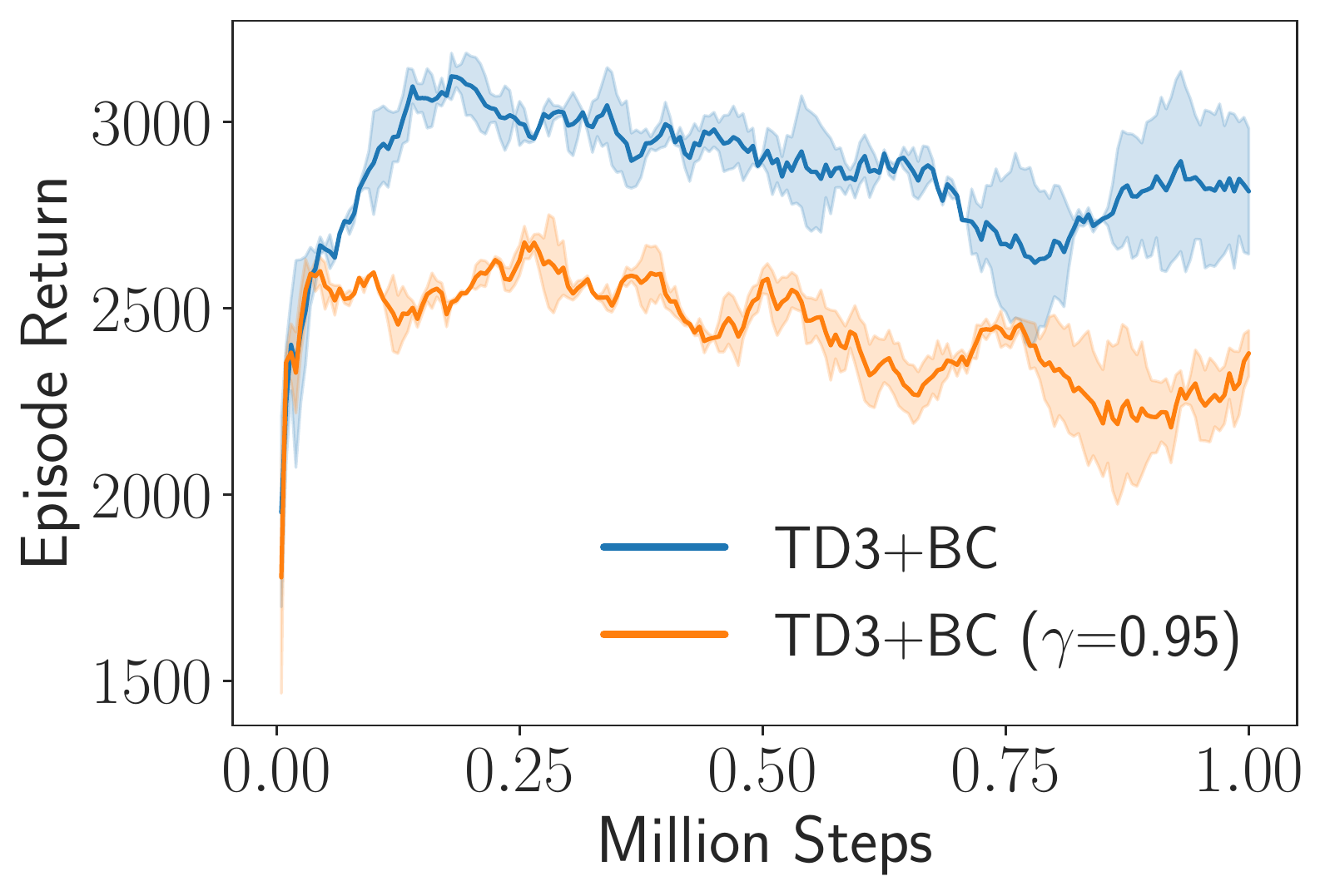}}
    \subfigure[med(50) noise(0)]{
    \includegraphics[scale=0.23]{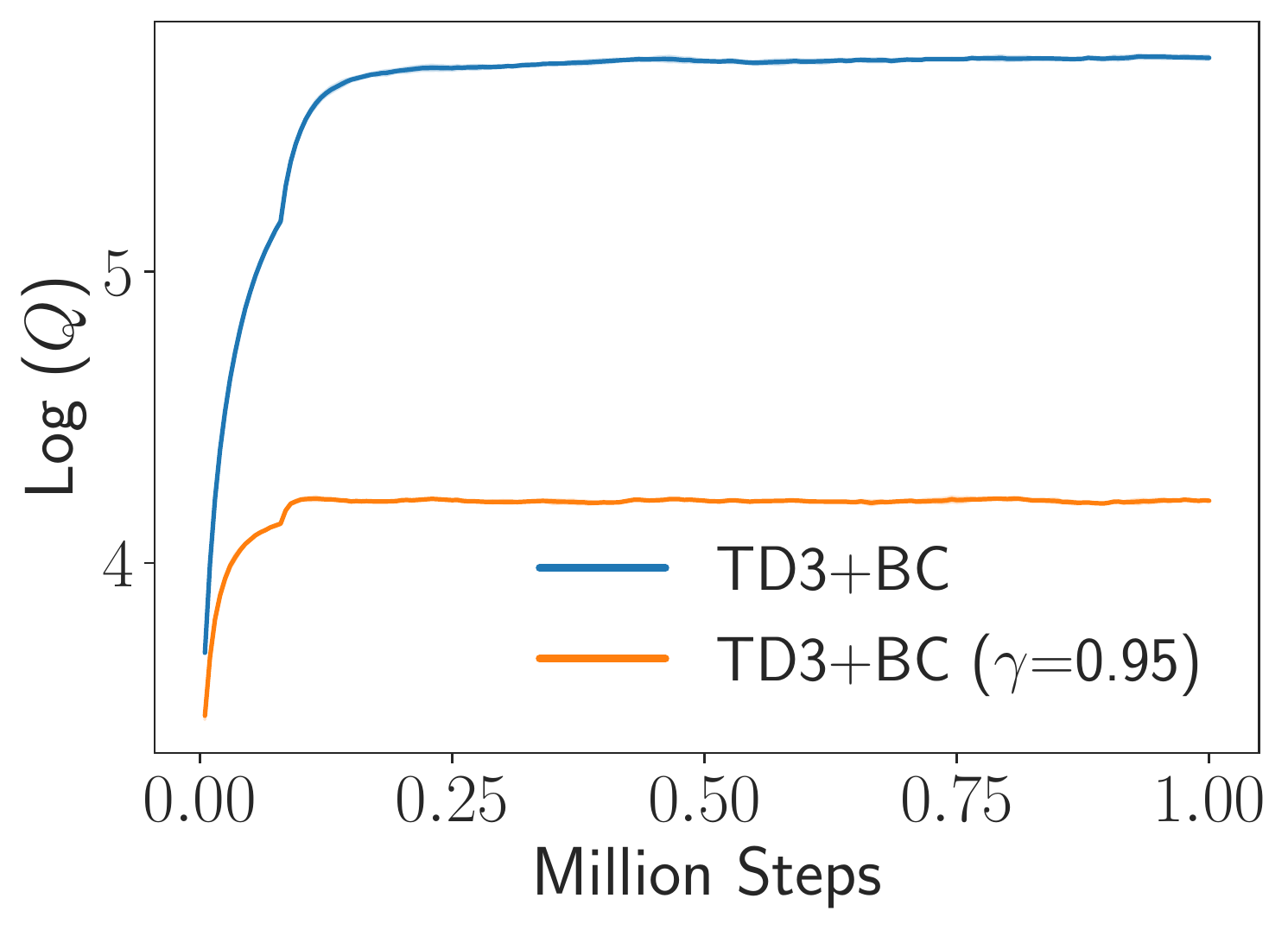}}
    \subfigure[med(50) noise(5)]{
    \includegraphics[scale=0.23]{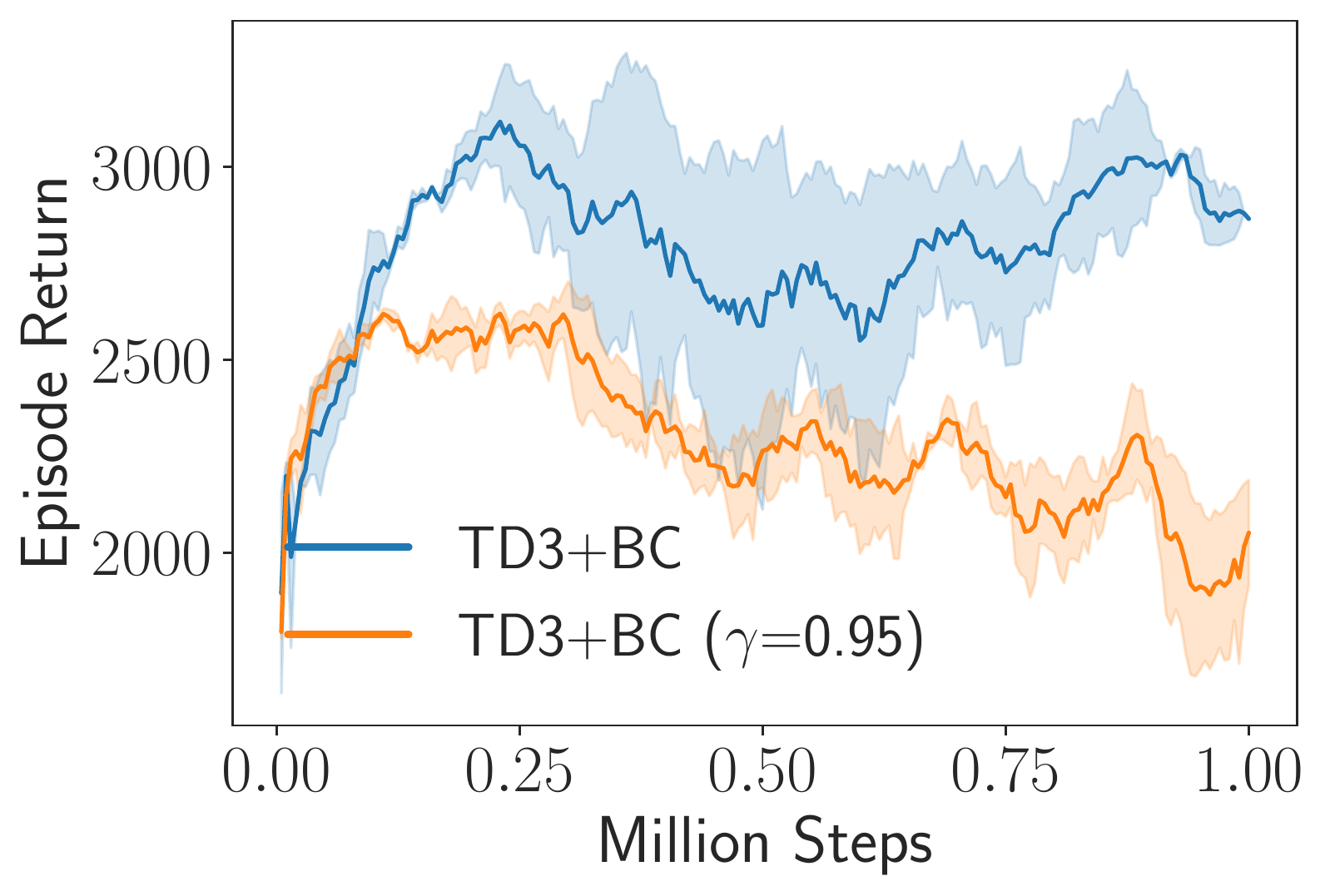}}
    \subfigure[med(50) noise(5)]{
    \includegraphics[scale=0.23]{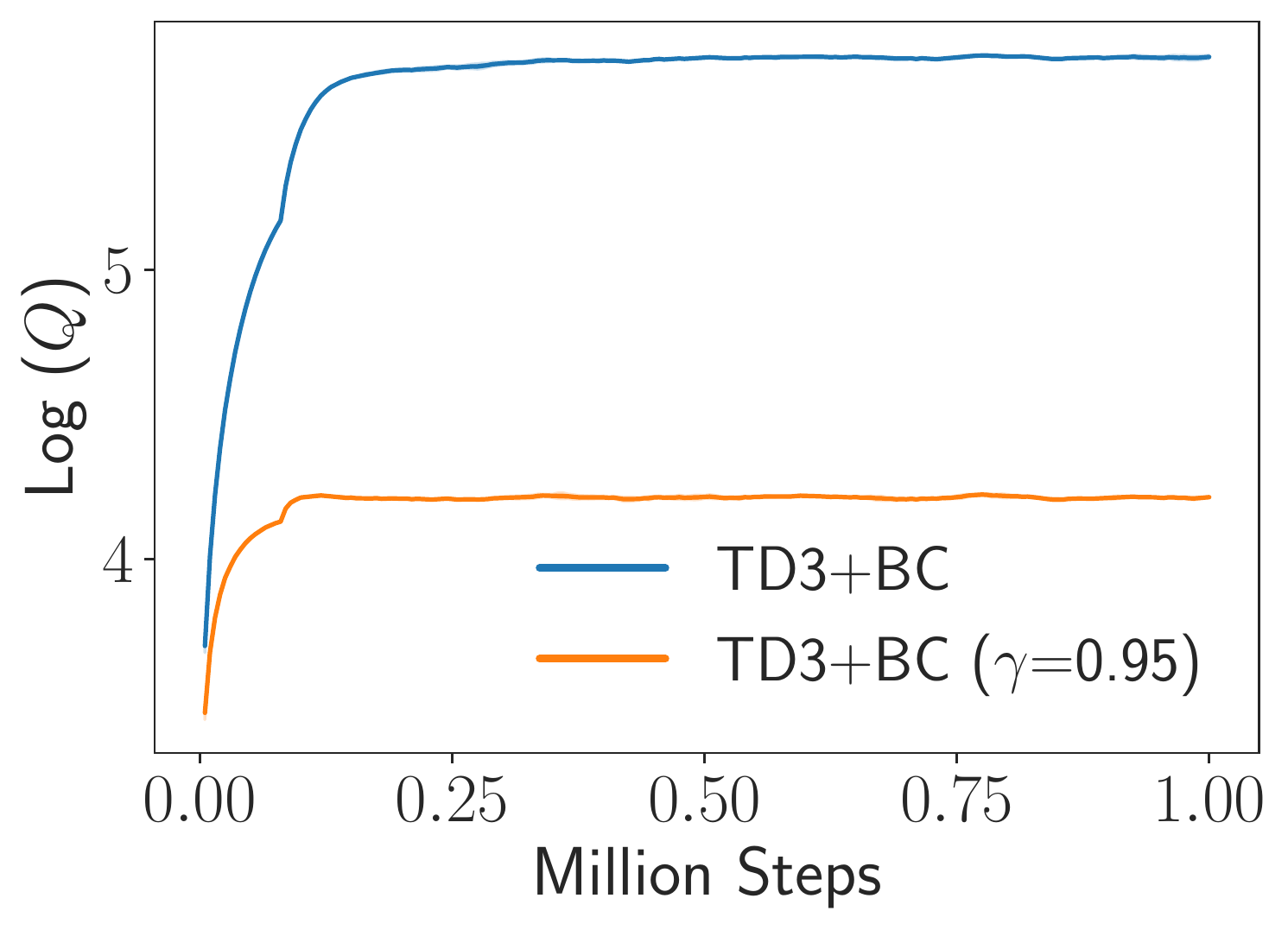}}
    
    \subfigure[med(50) noise(10)]{
    \includegraphics[scale=0.23]{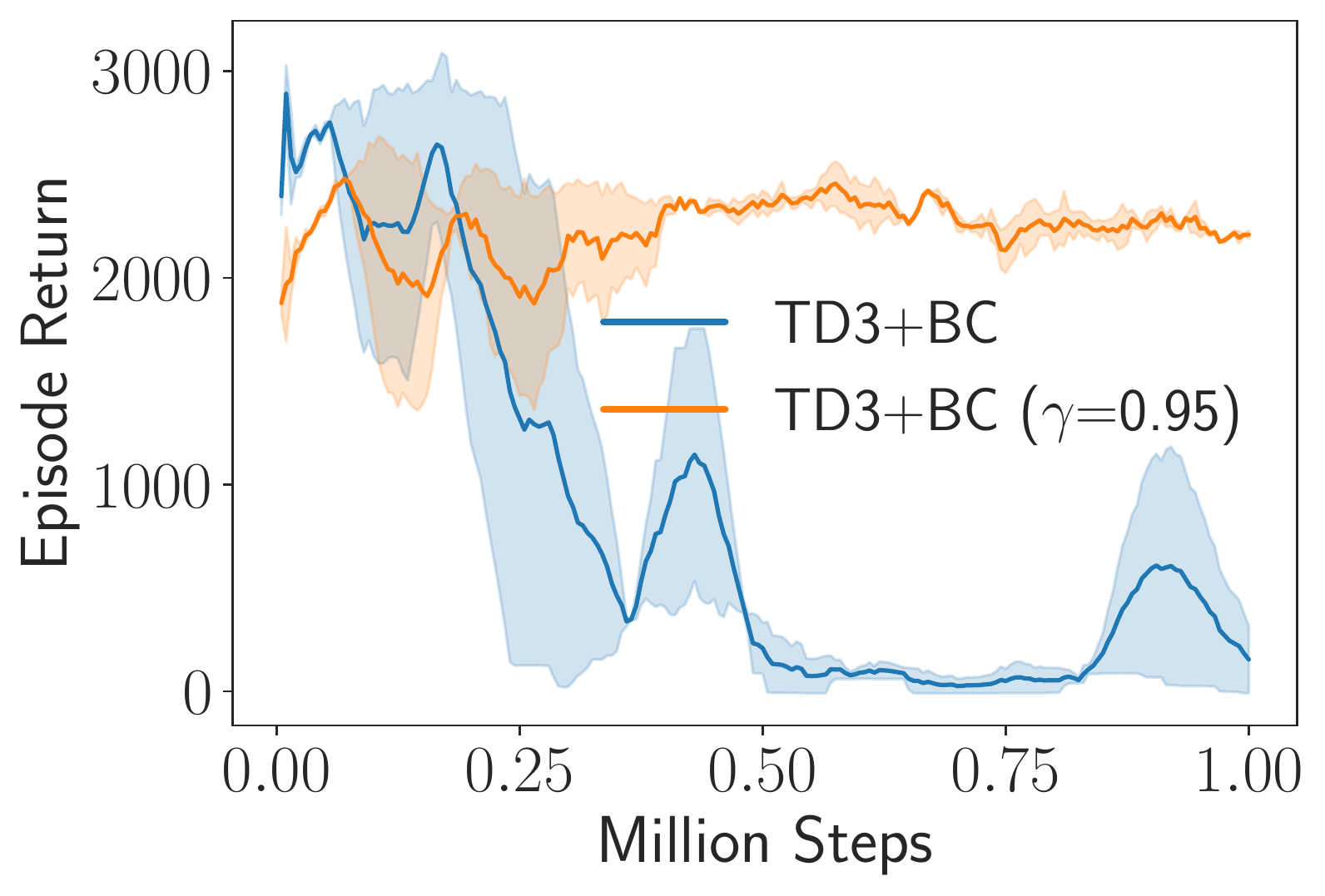}}
    \subfigure[med(50) noise(10)]{
    \includegraphics[scale=0.23]{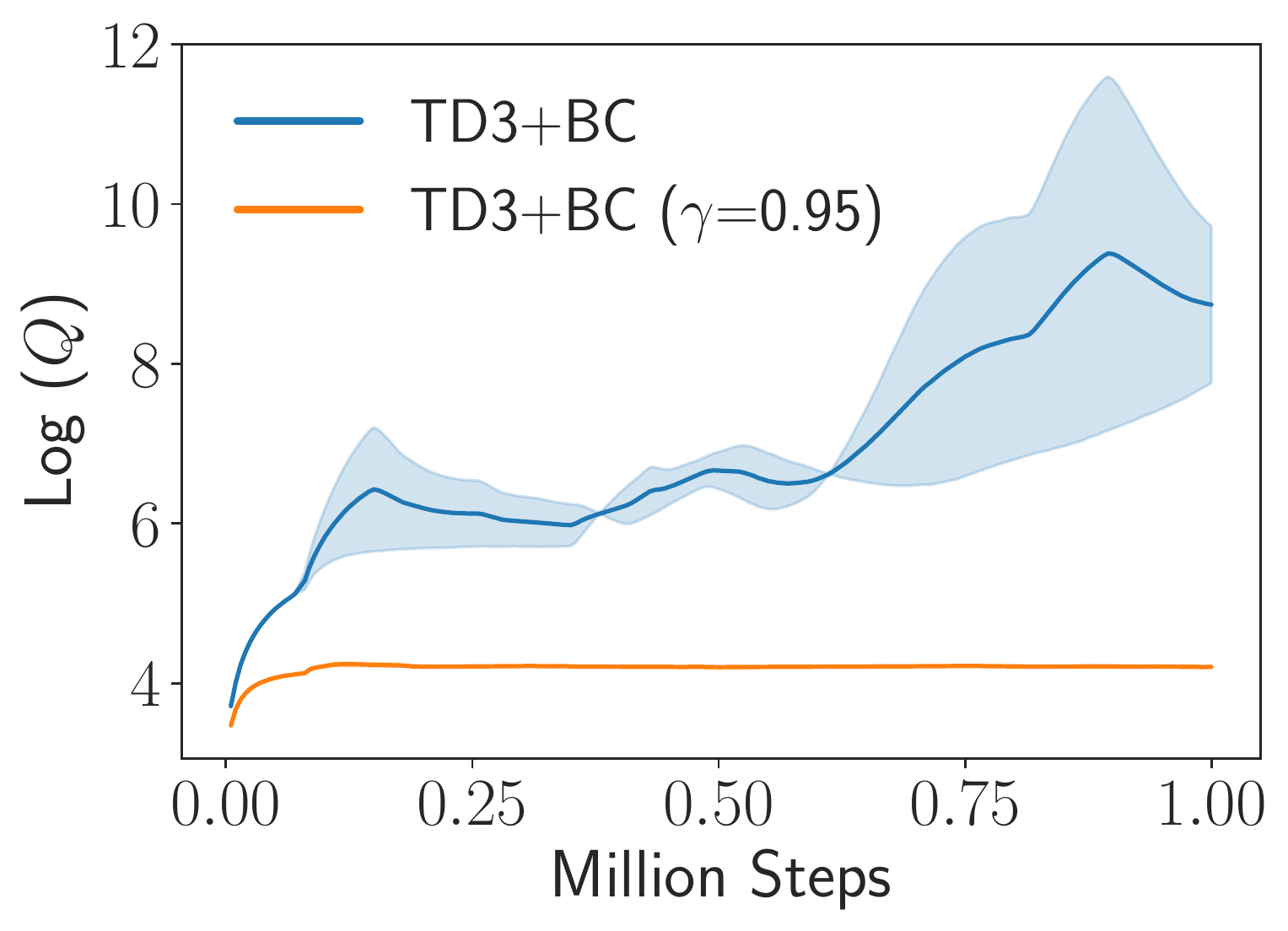}}
     \subfigure[med(50) noise(15)]{
    \includegraphics[scale=0.23]{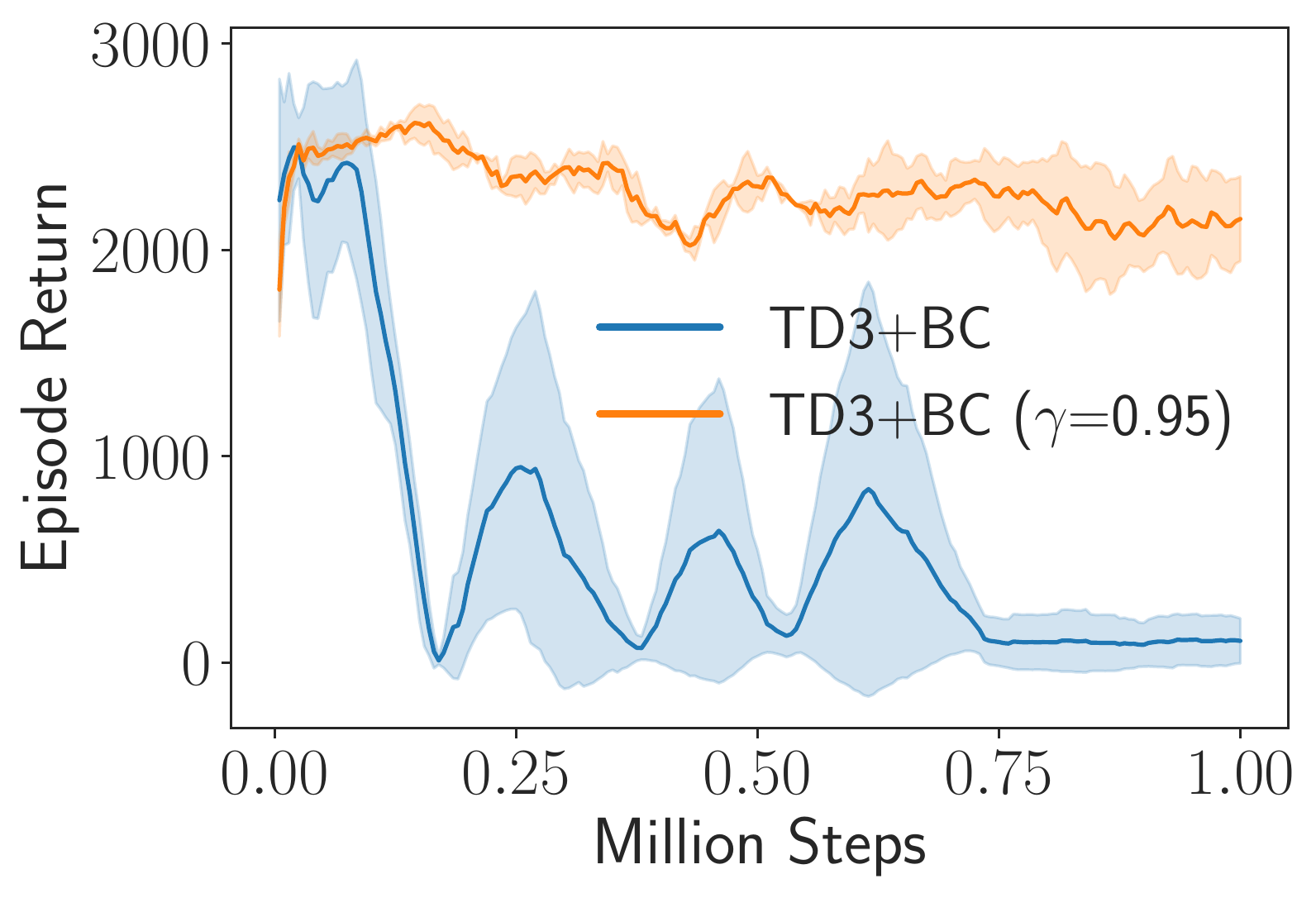}}
    \subfigure[med(50) noise(15)]{
    \includegraphics[scale=0.23]{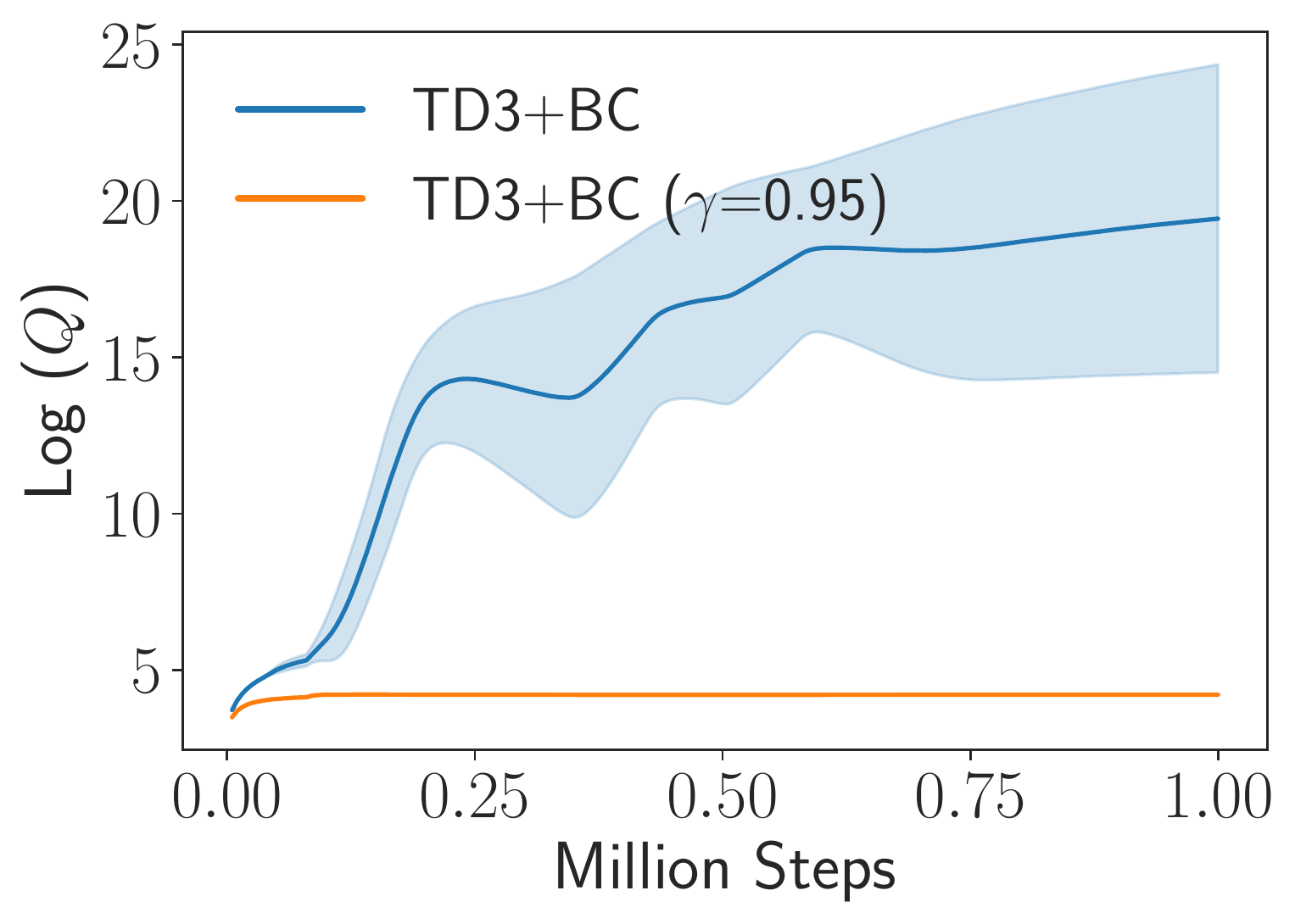}}
    
    \subfigure[med(50) noise(20)]{
    \includegraphics[scale=0.23]{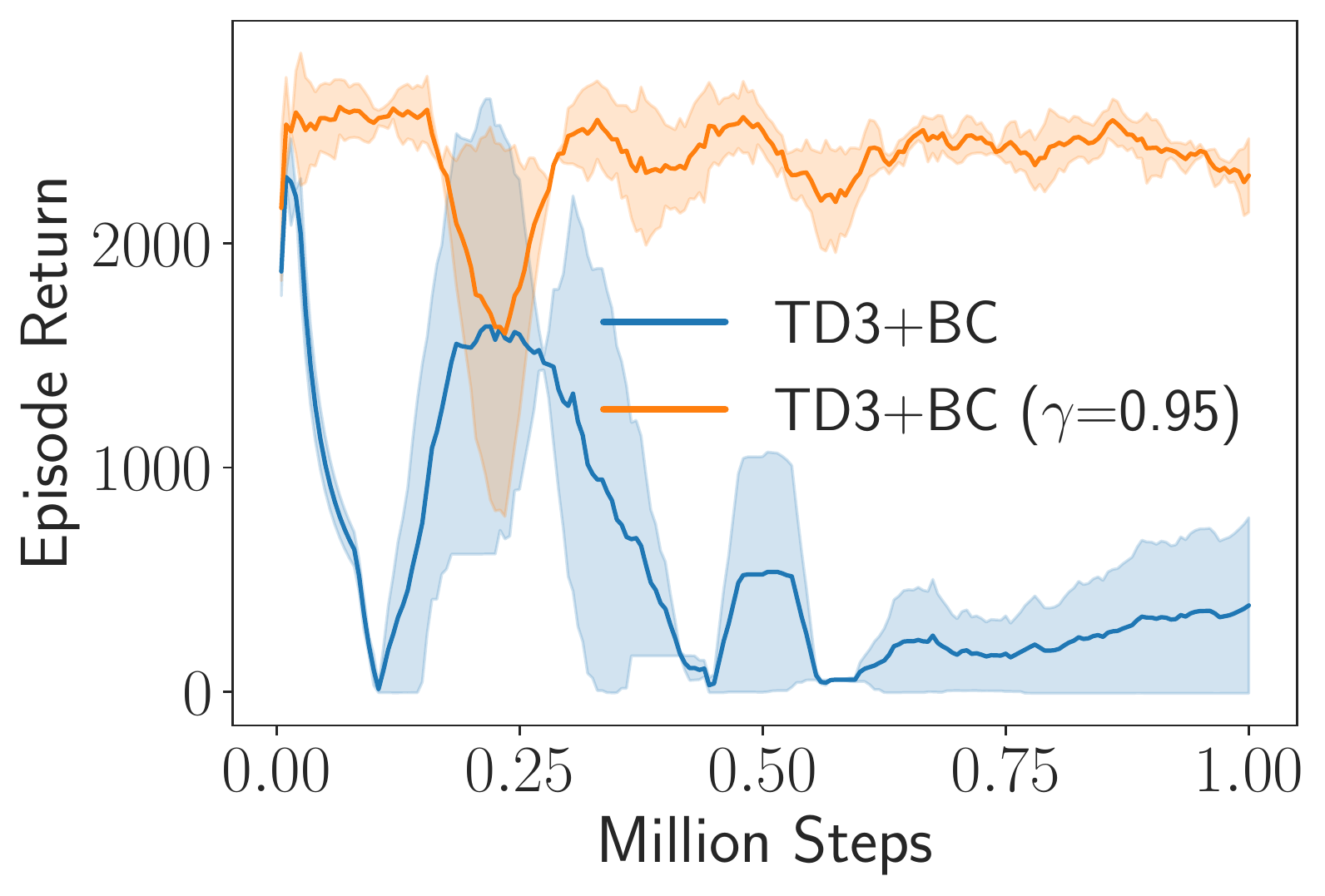}}
    \subfigure[med(50) noise(20)]{
    \includegraphics[scale=0.23]{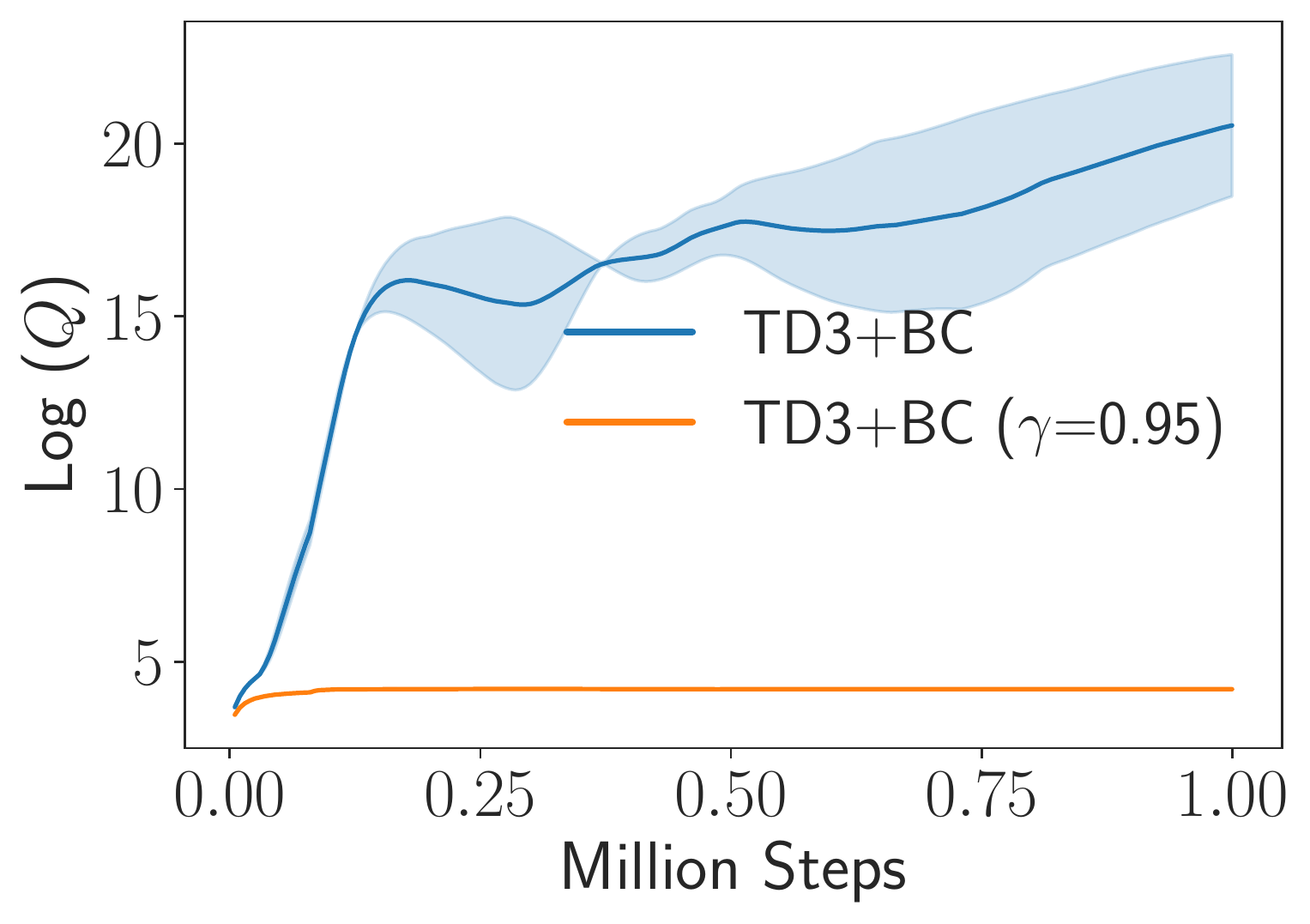}}
     \subfigure[med(50) noise(25)]{
    \includegraphics[scale=0.23]{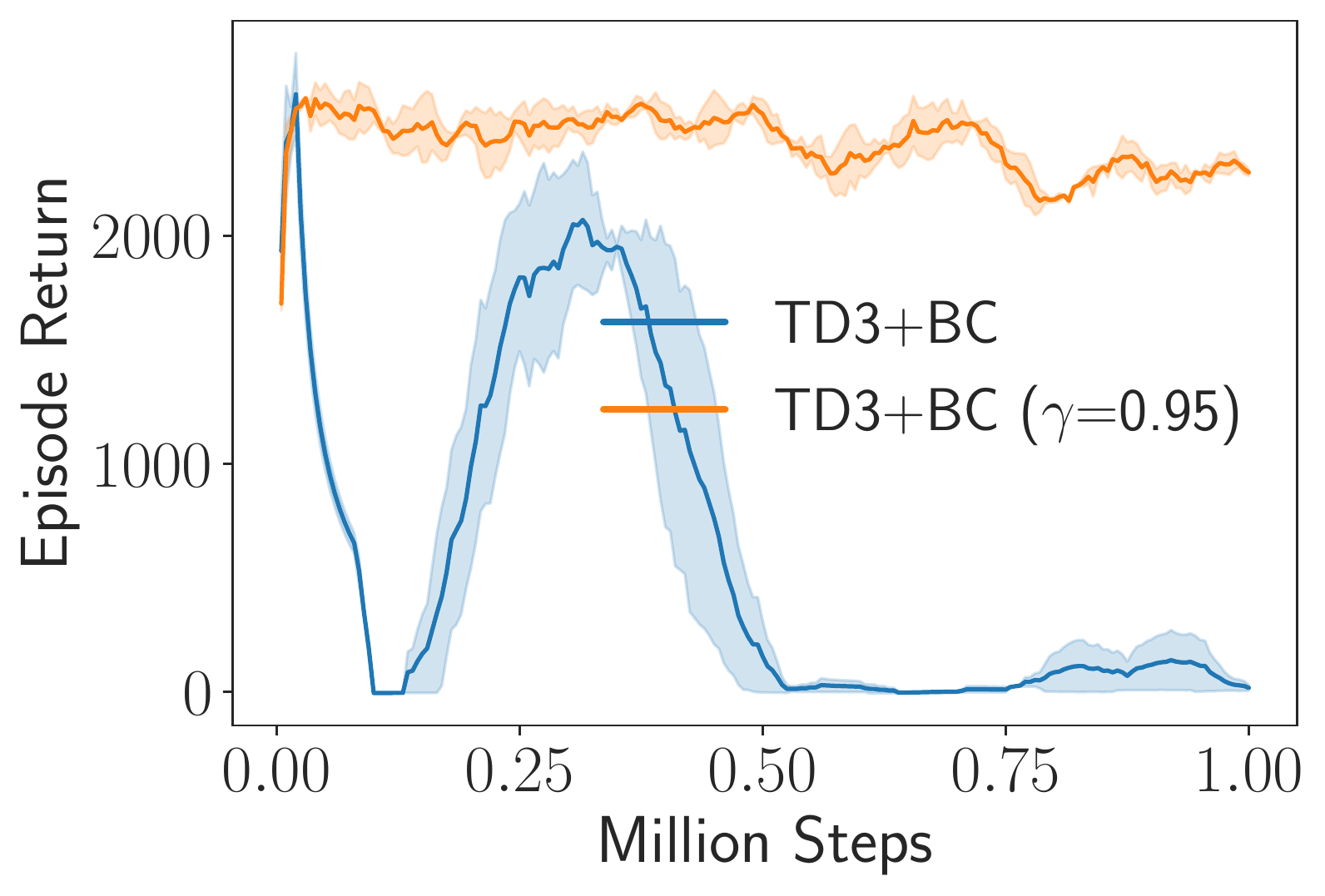}}
    \subfigure[med(50) noise(25)]{
    \includegraphics[scale=0.23]{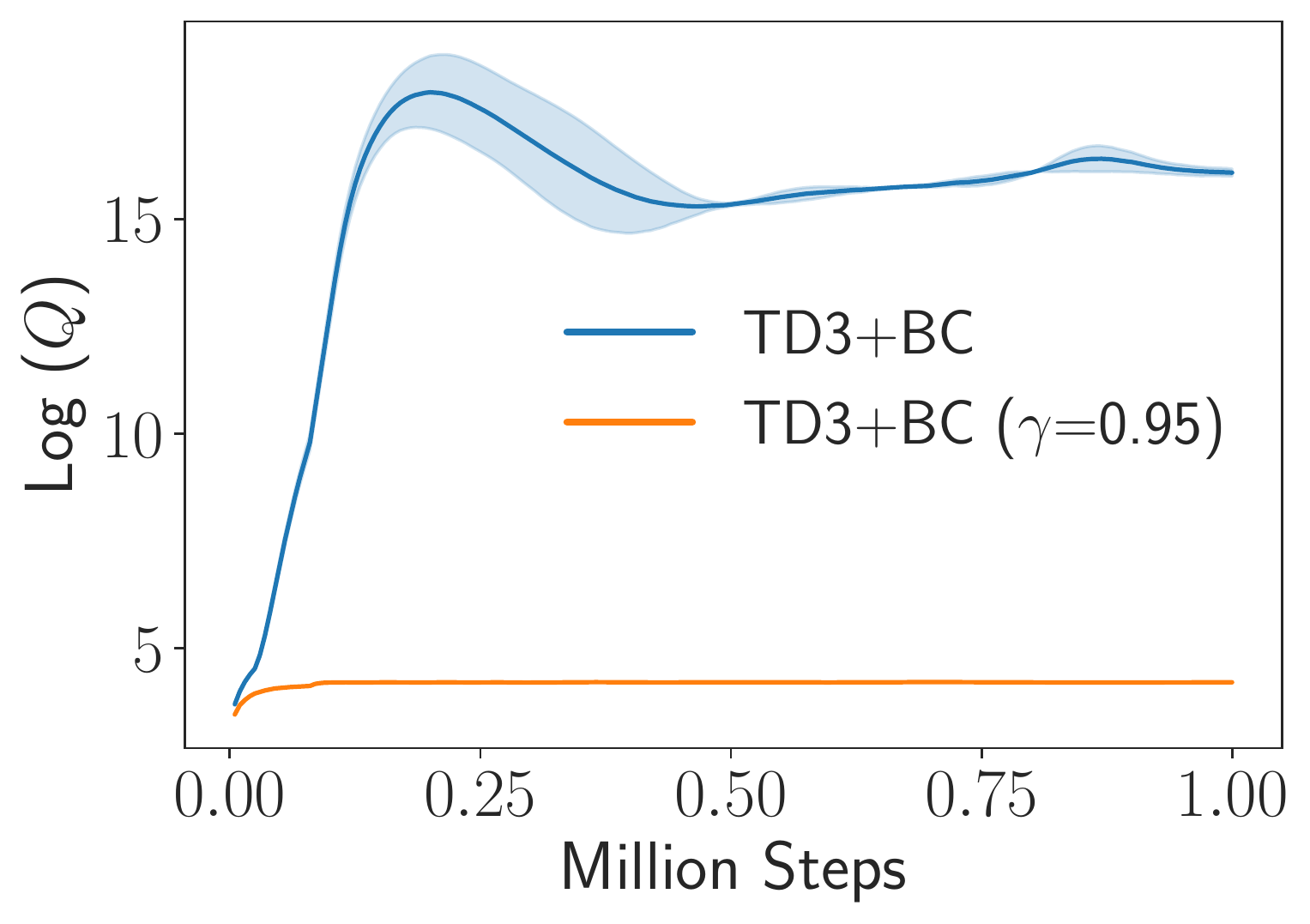}}
    \caption{Experimental results on walker2d task consisting of 50 medium trajectories and $x$ noised trajectories.
    The evaluation metric is the episode return and log $Q$-value.}
    \label{fig:my_label}
\end{figure}

\begin{figure}[h]
    \centering
    \subfigure[med(50) noise(0)]{
    \includegraphics[scale=0.23]{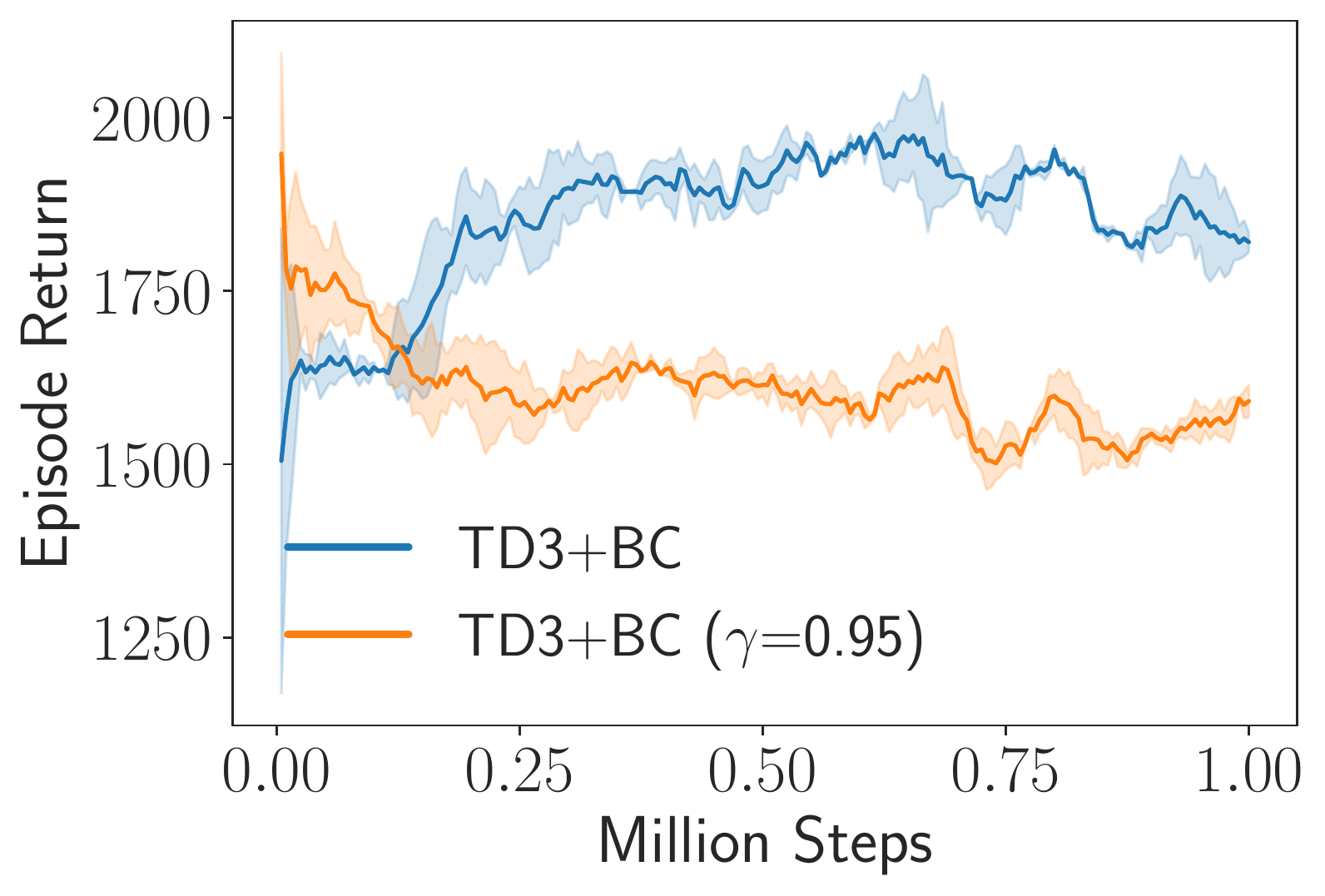}}
    \subfigure[med(50) noise(0)]{
    \includegraphics[scale=0.23]{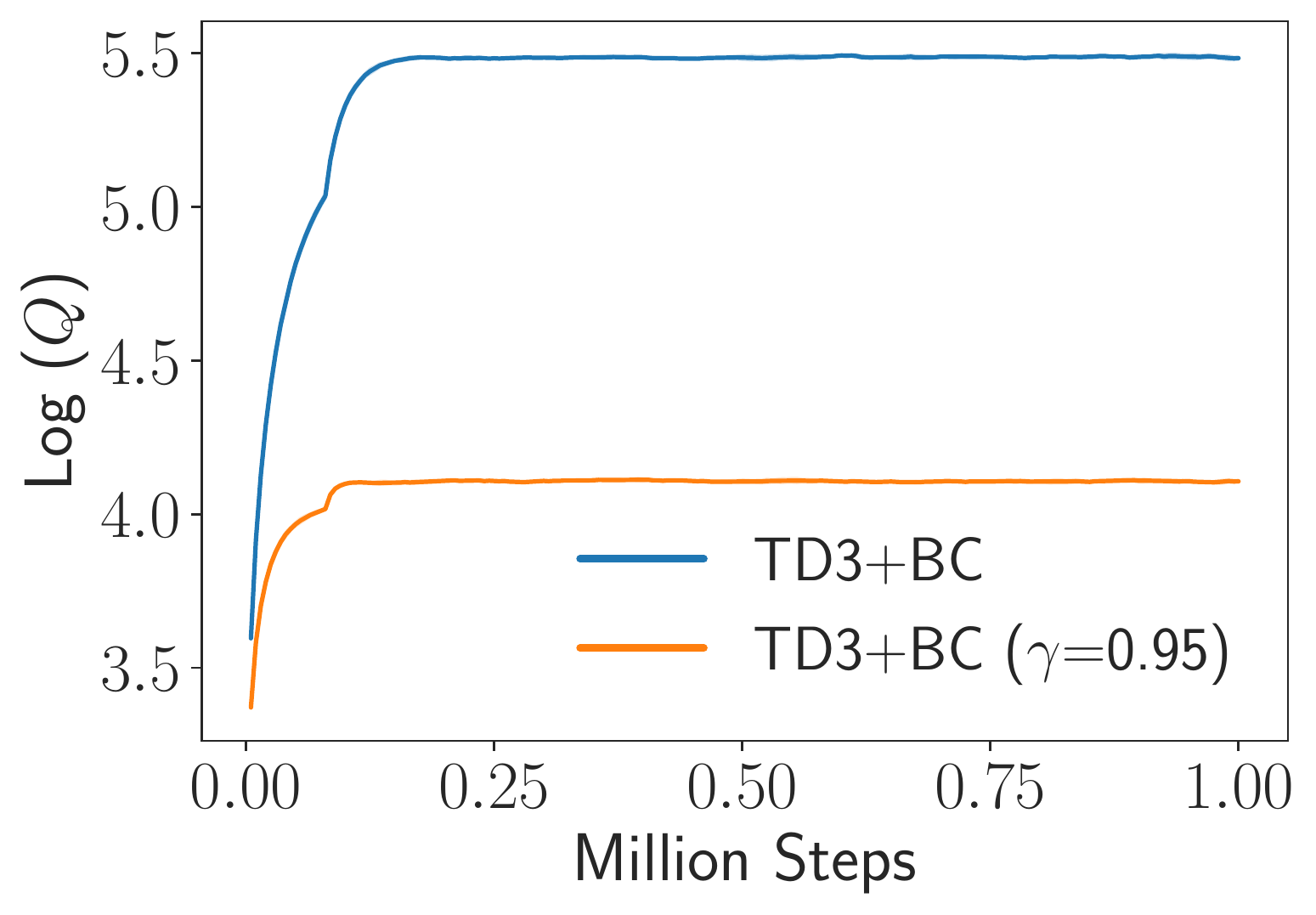}}
    \subfigure[med(50) noise(5)]{
    \includegraphics[scale=0.23]{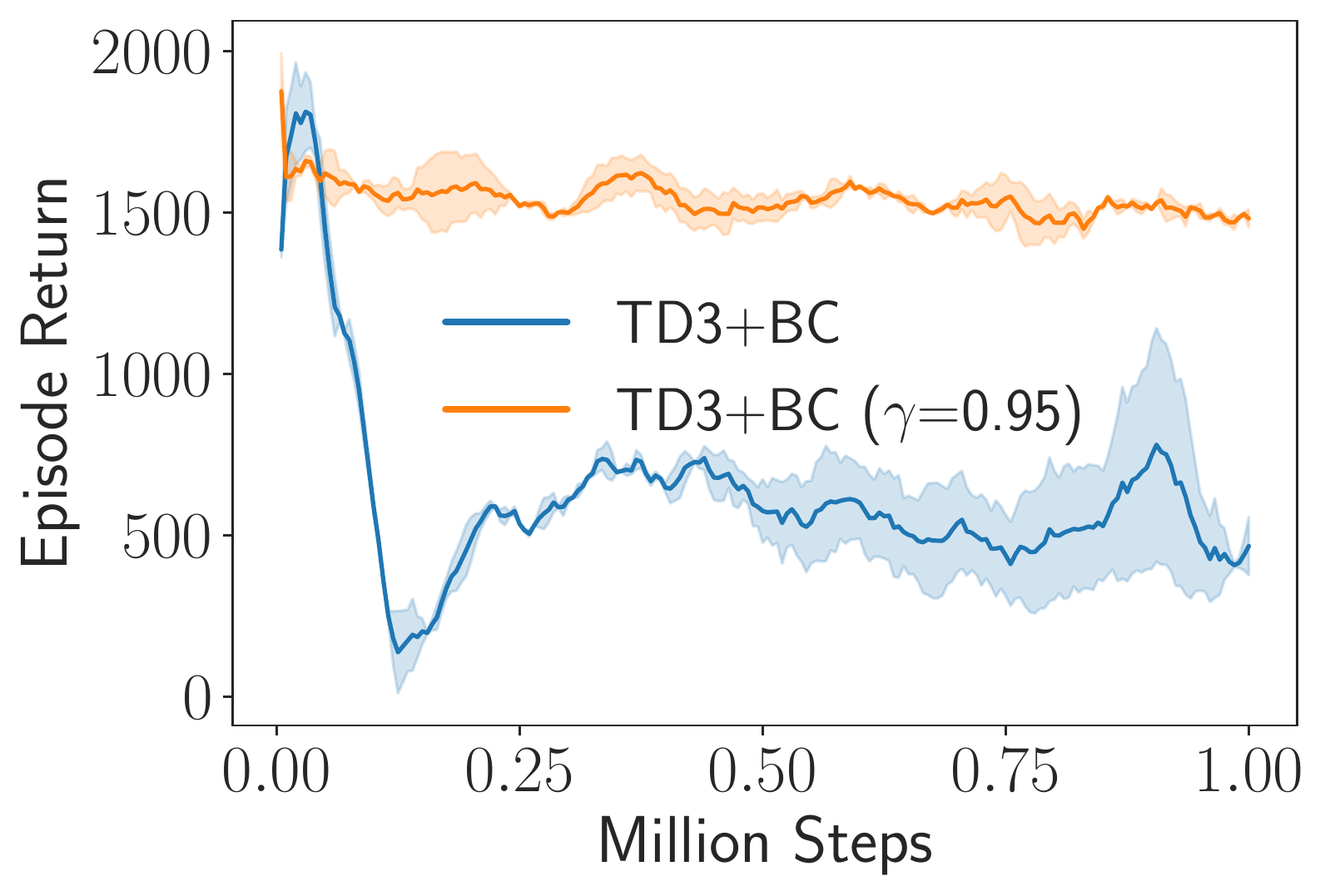}}
    \subfigure[med(50) noise(5)]{
    \includegraphics[scale=0.23]{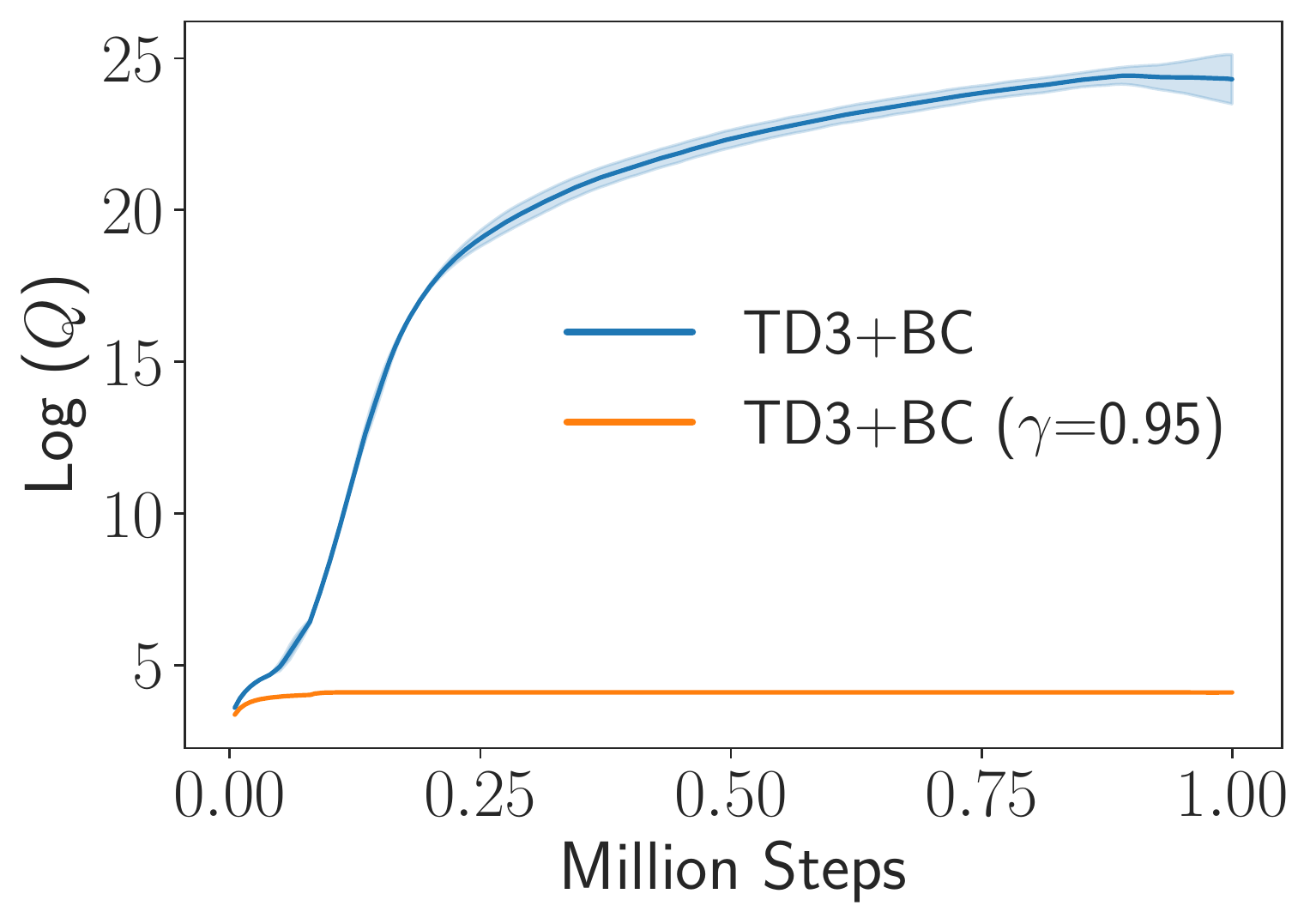}}
    
    \subfigure[med(50) noise(10)]{
    \includegraphics[scale=0.23]{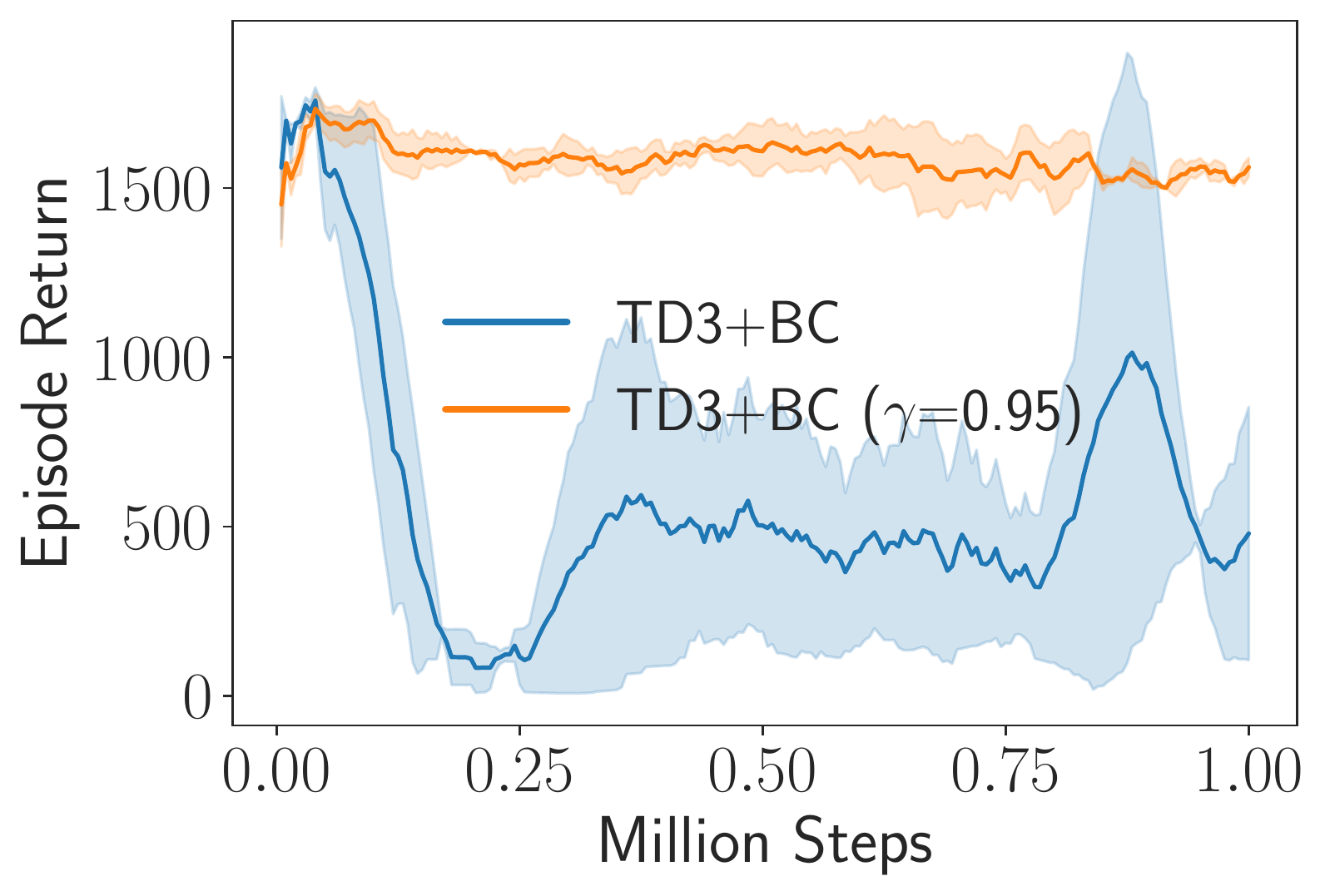}}
    \subfigure[med(50) noise(10)]{
    \includegraphics[scale=0.23]{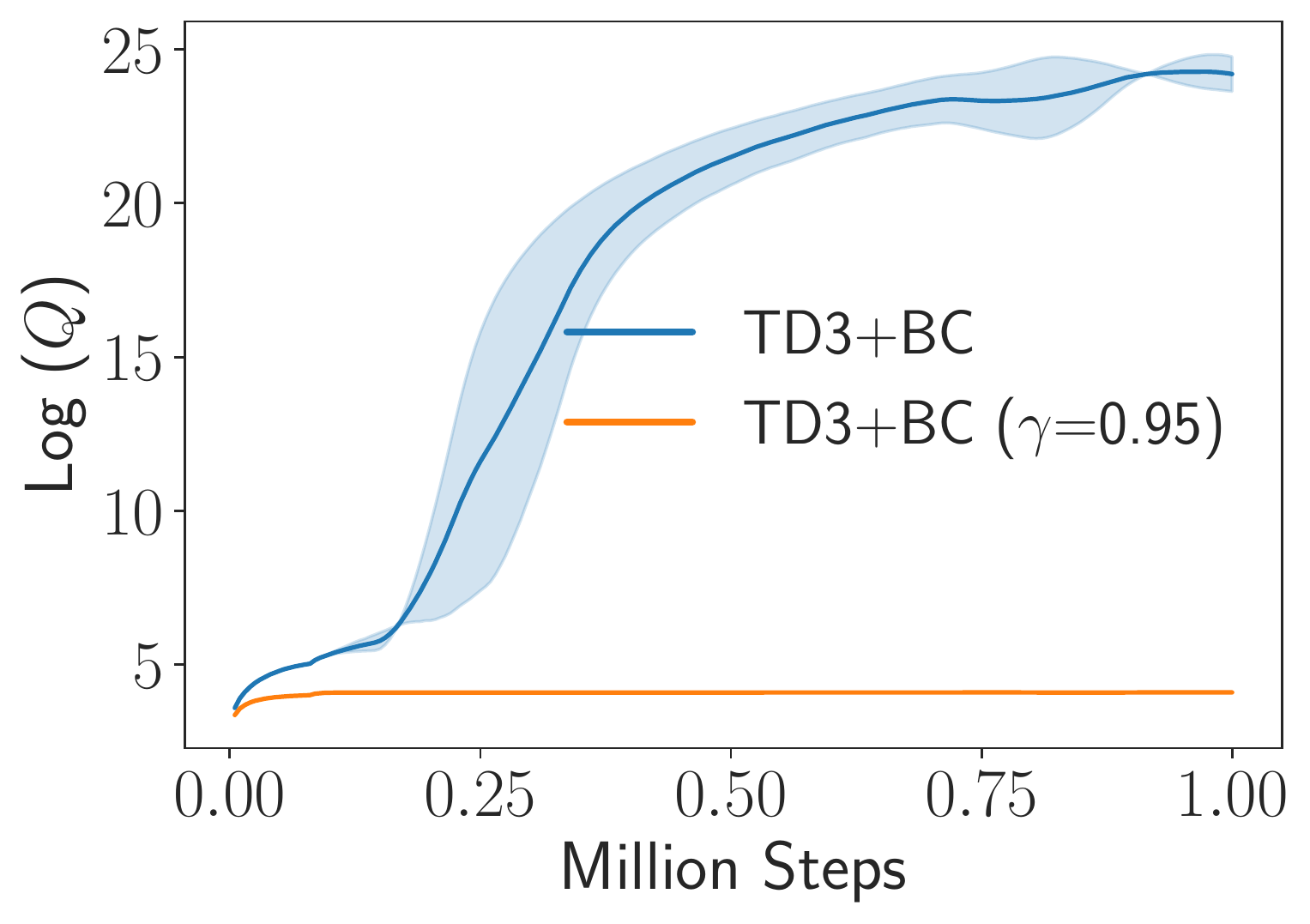}}
     \subfigure[med(50) noise(15)]{
    \includegraphics[scale=0.23]{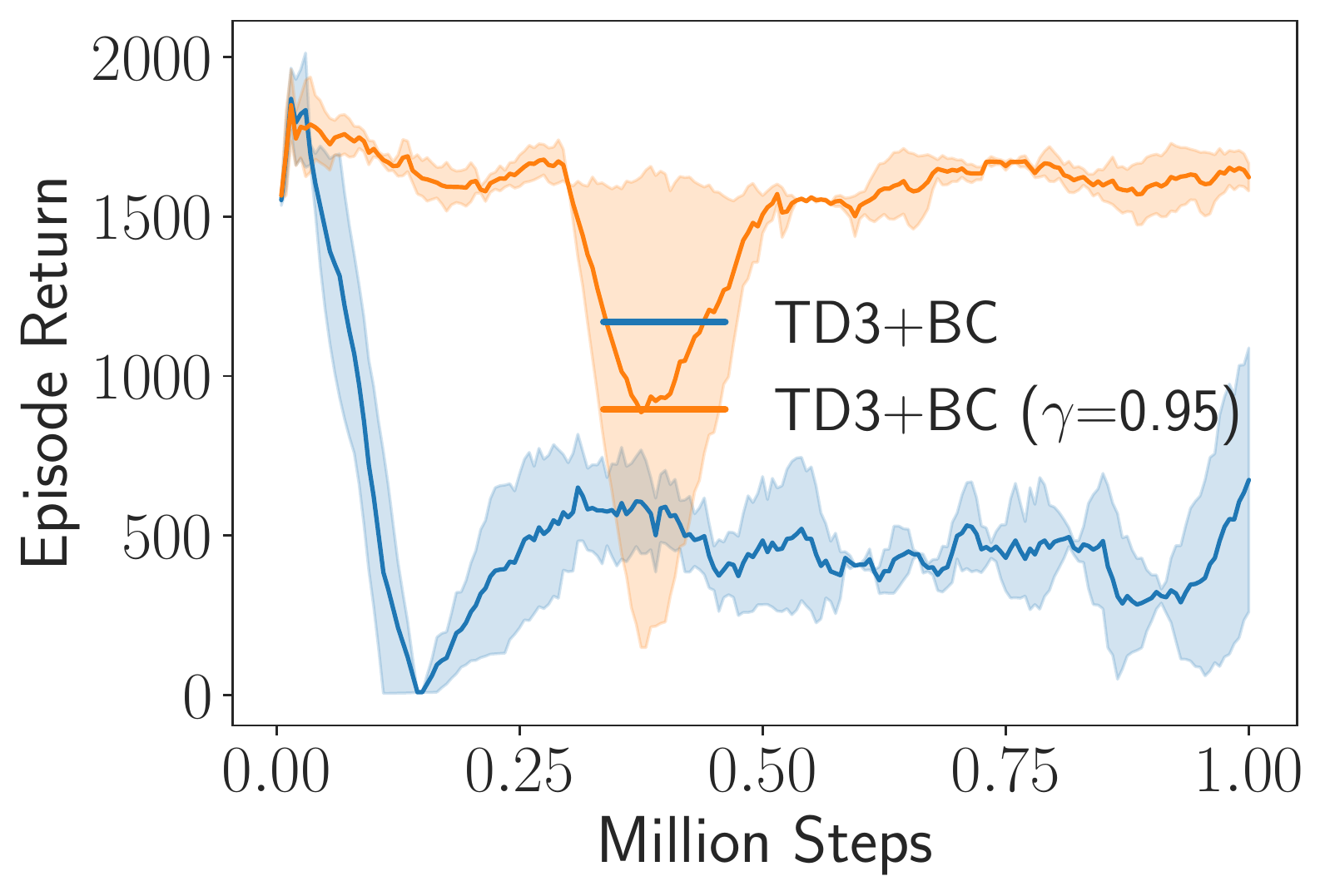}}
    \subfigure[med(50) noise(15)]{
    \includegraphics[scale=0.23]{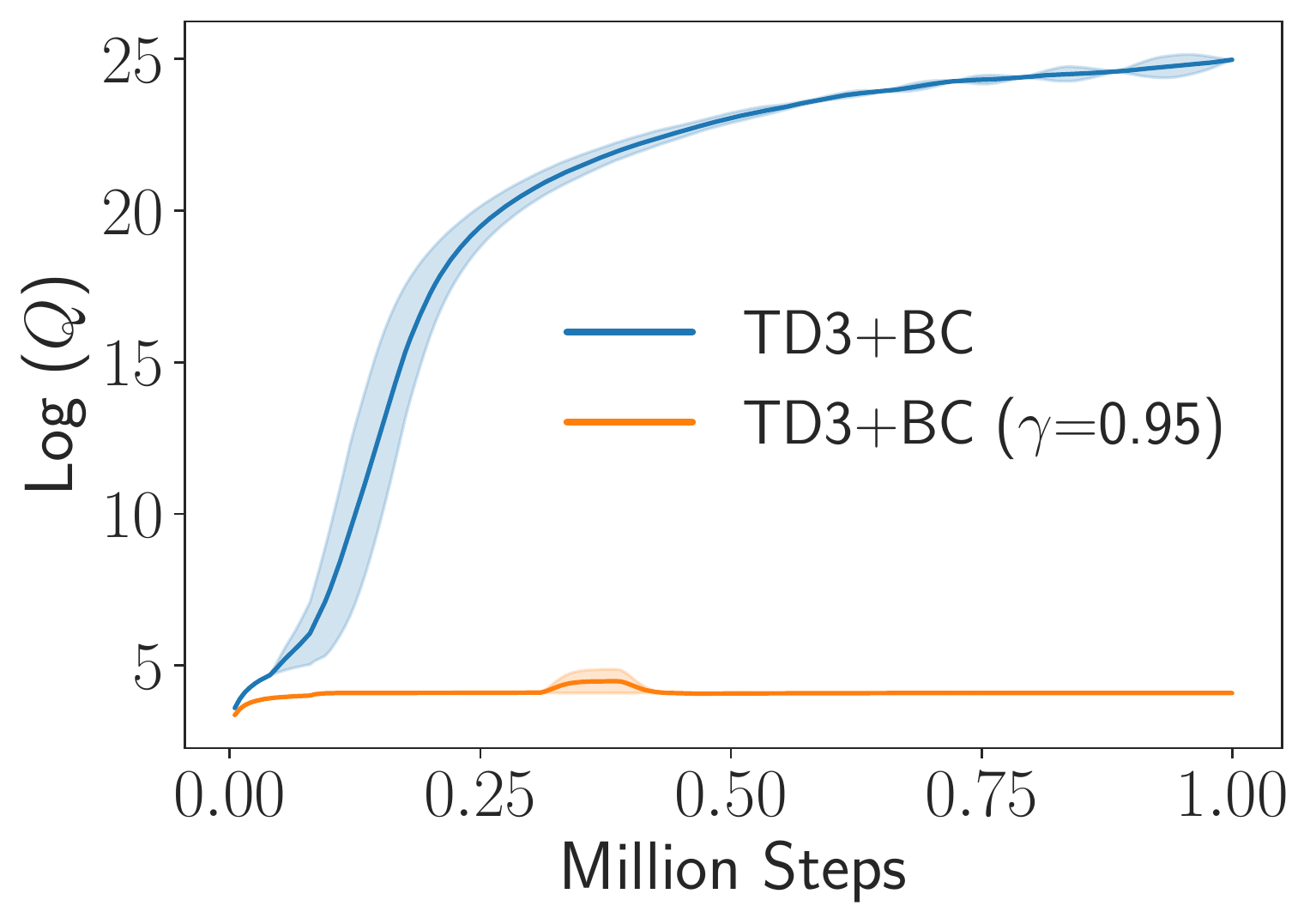}}
    
    \subfigure[med(50) noise(20)]{
    \includegraphics[scale=0.23]{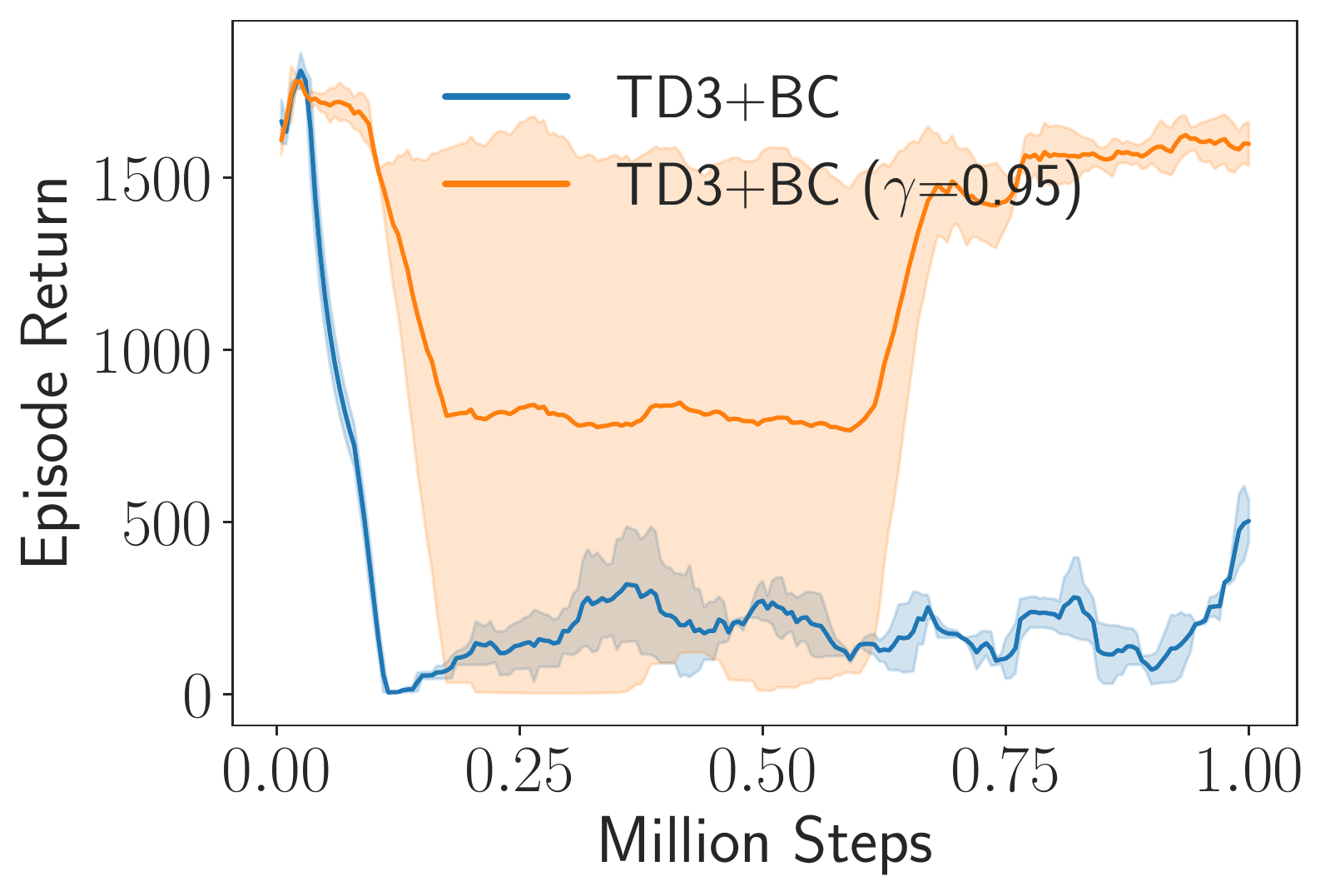}}
    \subfigure[med(50) noise(20)]{
    \includegraphics[scale=0.23]{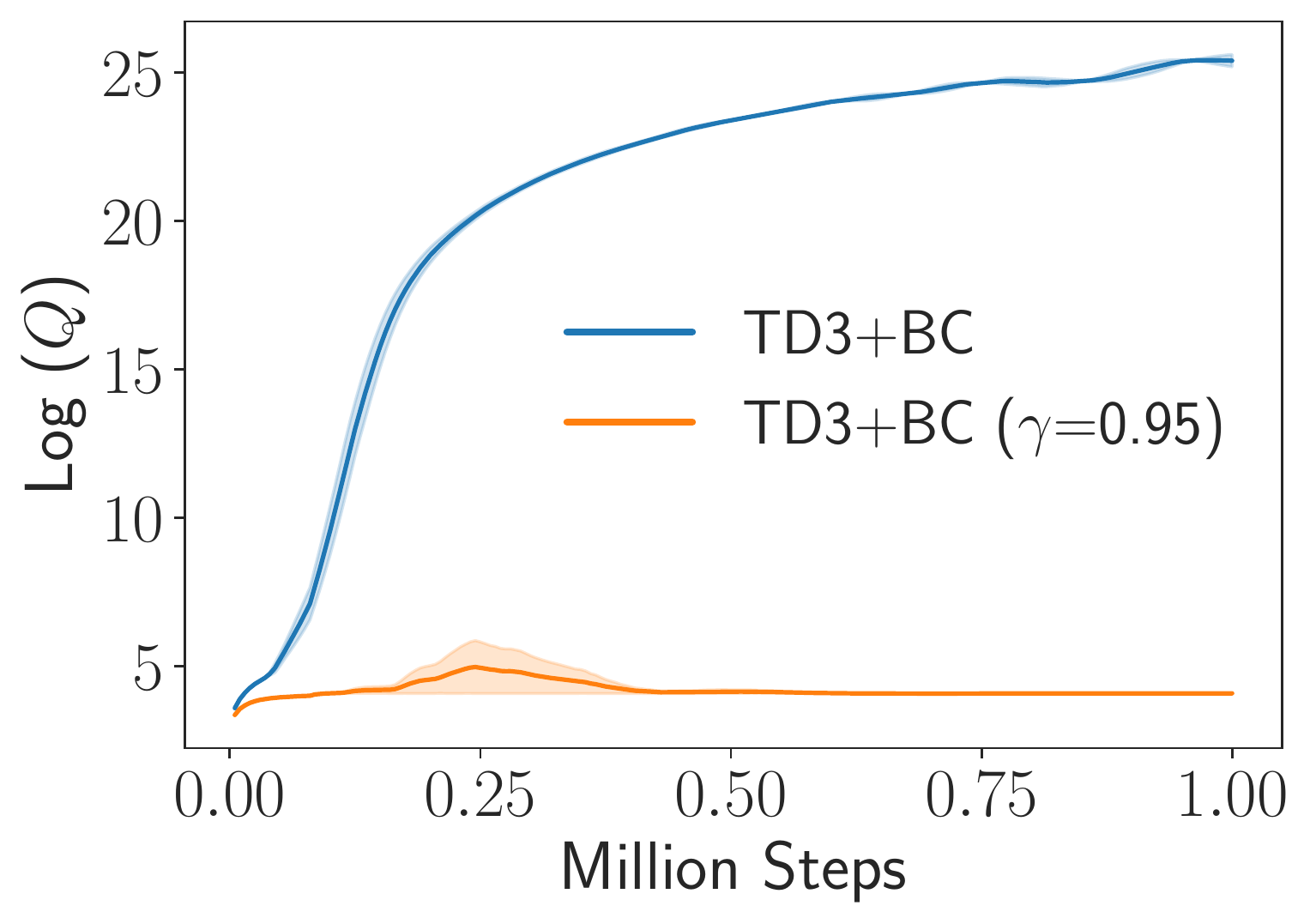}}
     \subfigure[med(50) noise(25)]{
    \includegraphics[scale=0.23]{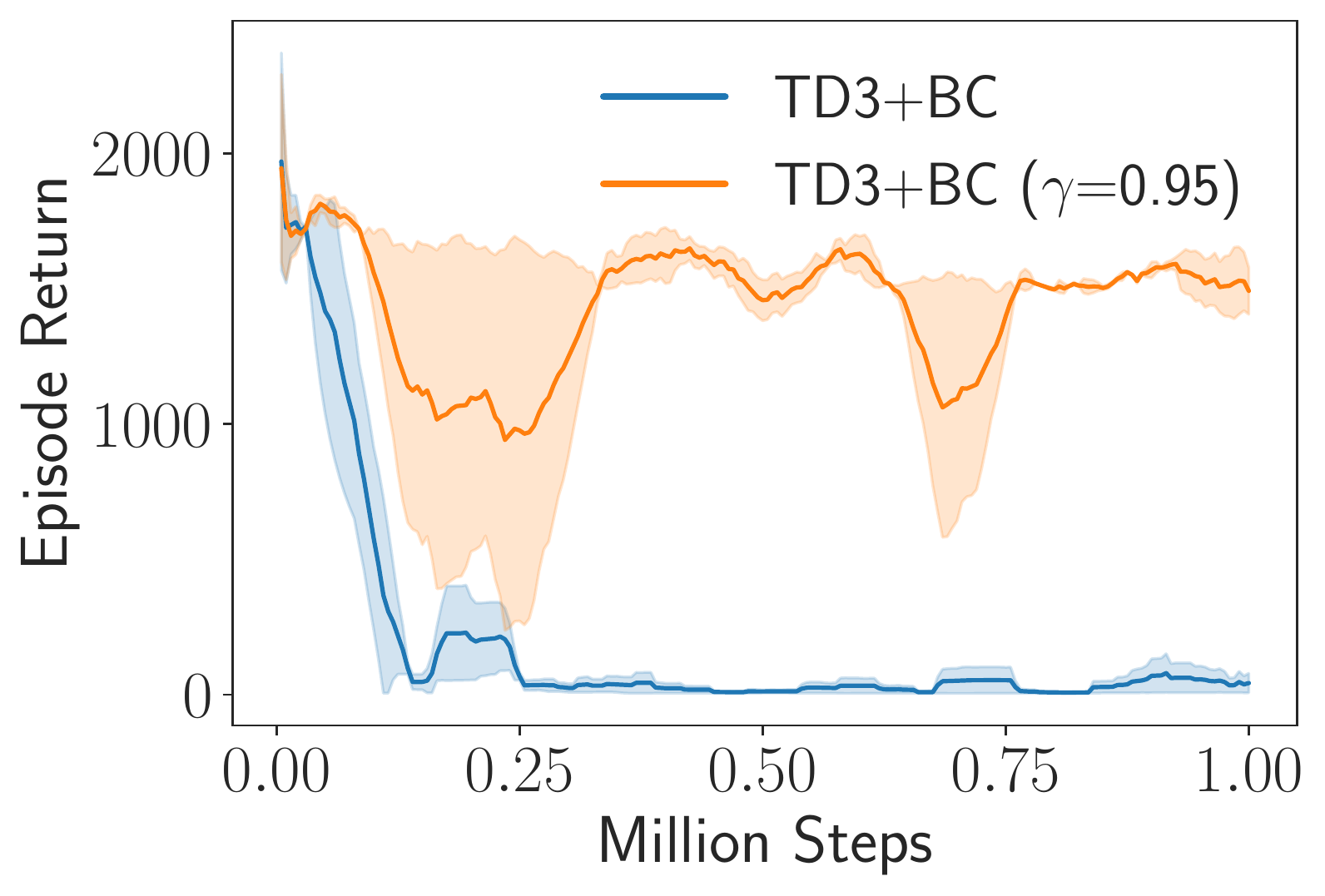}}
    \subfigure[med(50) noise(25)]{
    \includegraphics[scale=0.23]{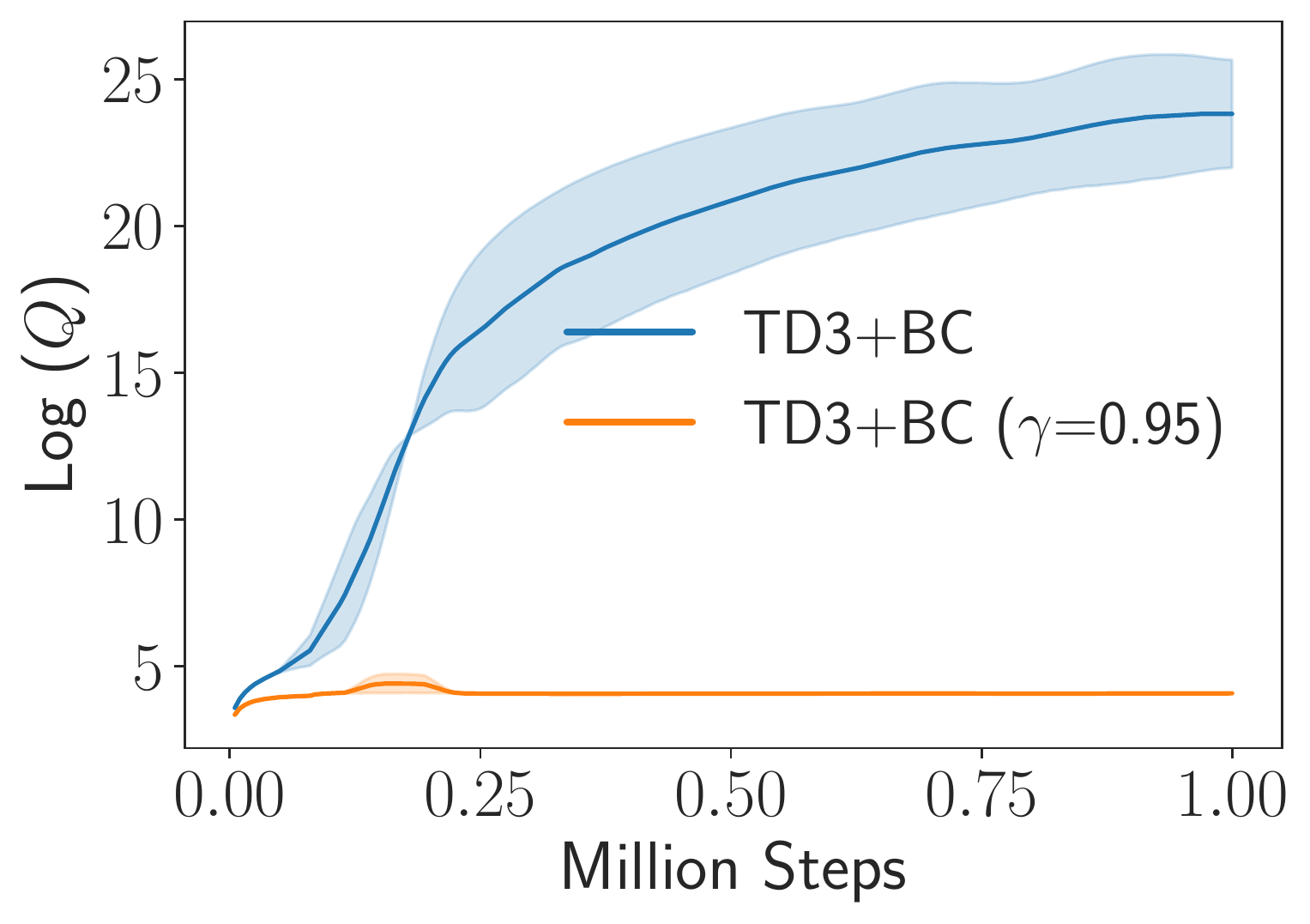}}
    \caption{Experimental results on hopper task consisting of 50 medium trajectories and $x$ noised trajectories.
    The evaluation metric is the episode return and log $Q$-value.}
    \label{fig:my_label_2}
\end{figure}

\begin{figure}[h]
    \centering
    \subfigure[med(50) noise(0)]{
    \includegraphics[scale=0.23]{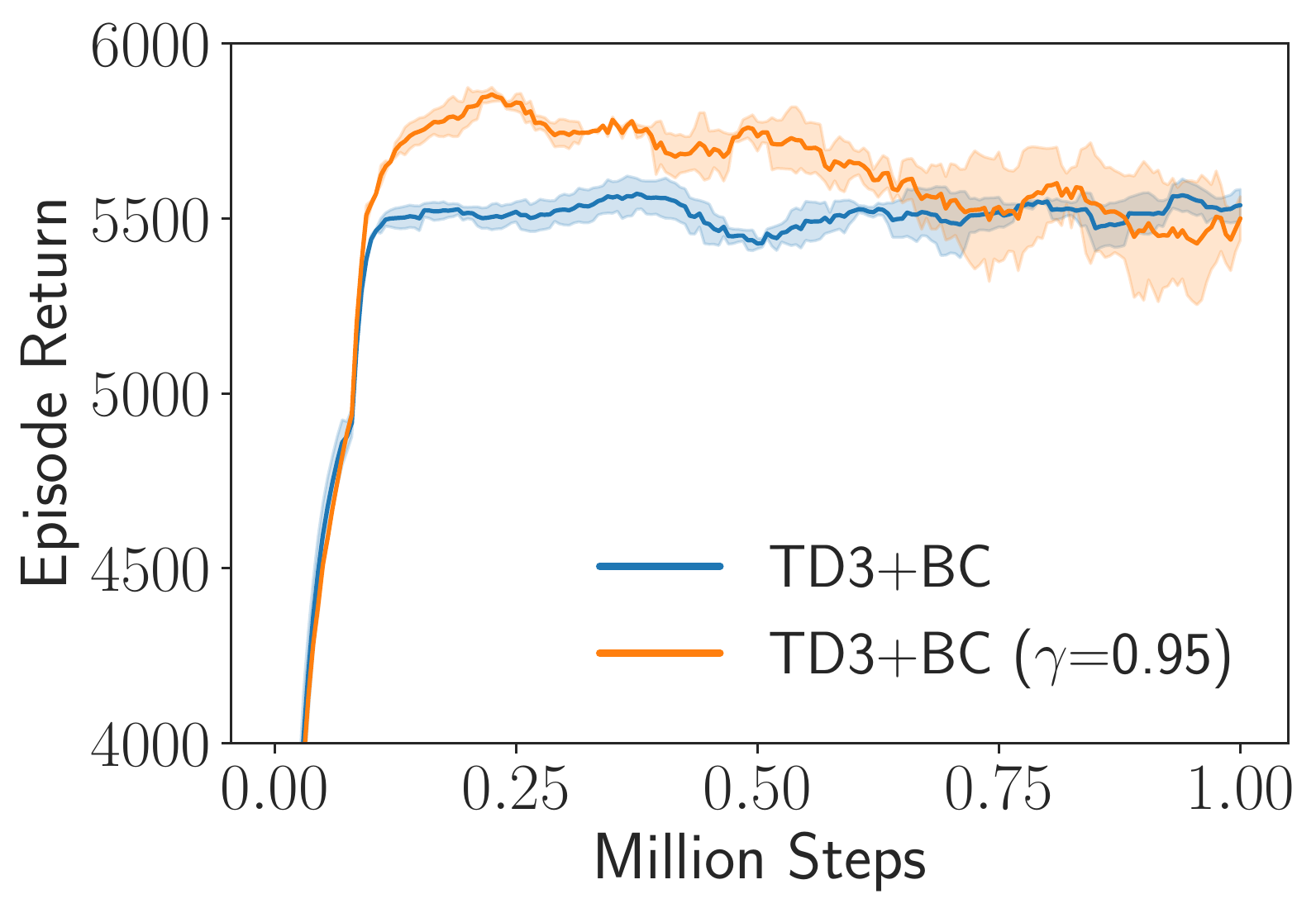}}
    \subfigure[med(50) noise(0)]{
    \includegraphics[scale=0.23]{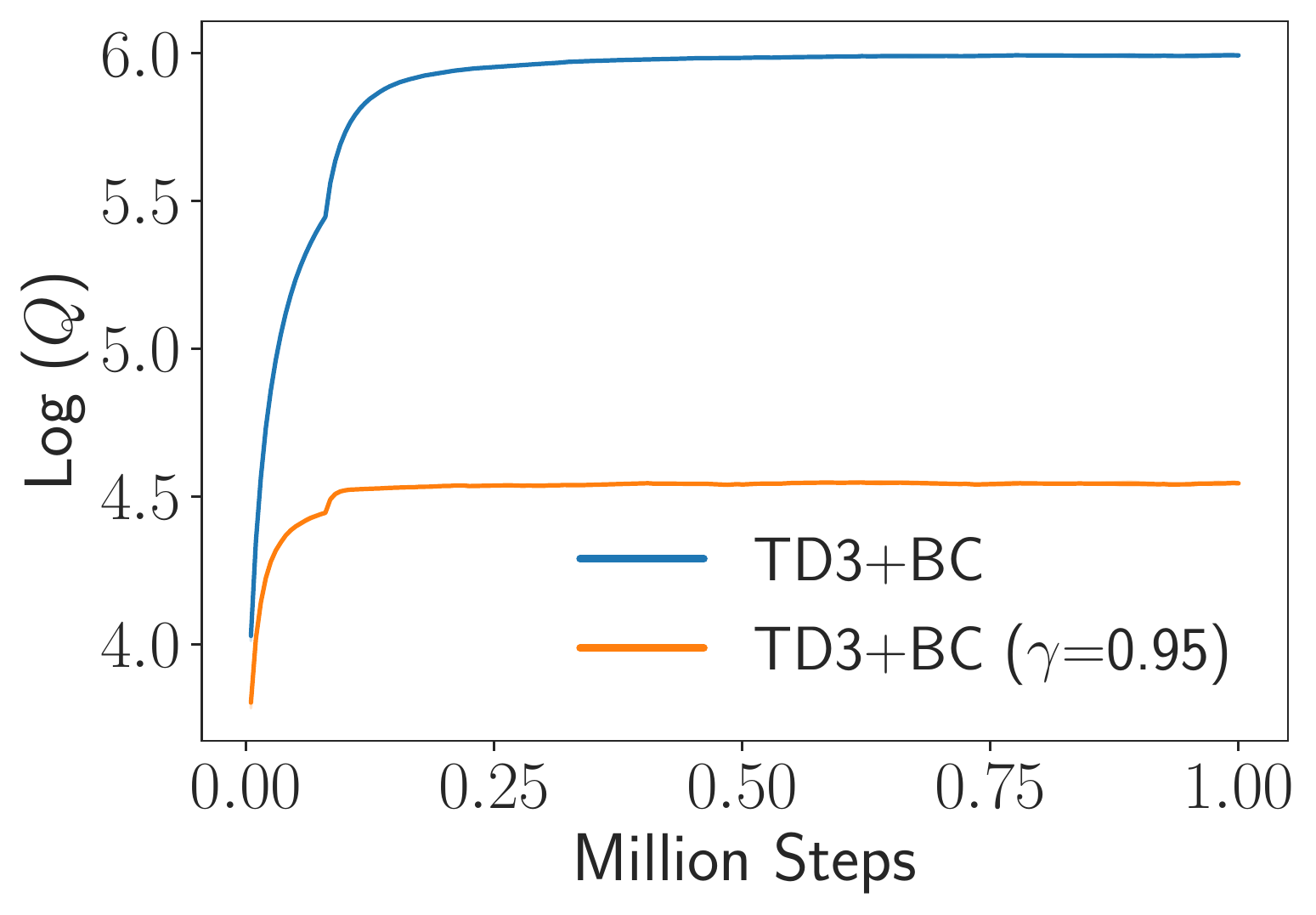}}
    \subfigure[med(50) noise(5)]{
    \includegraphics[scale=0.23]{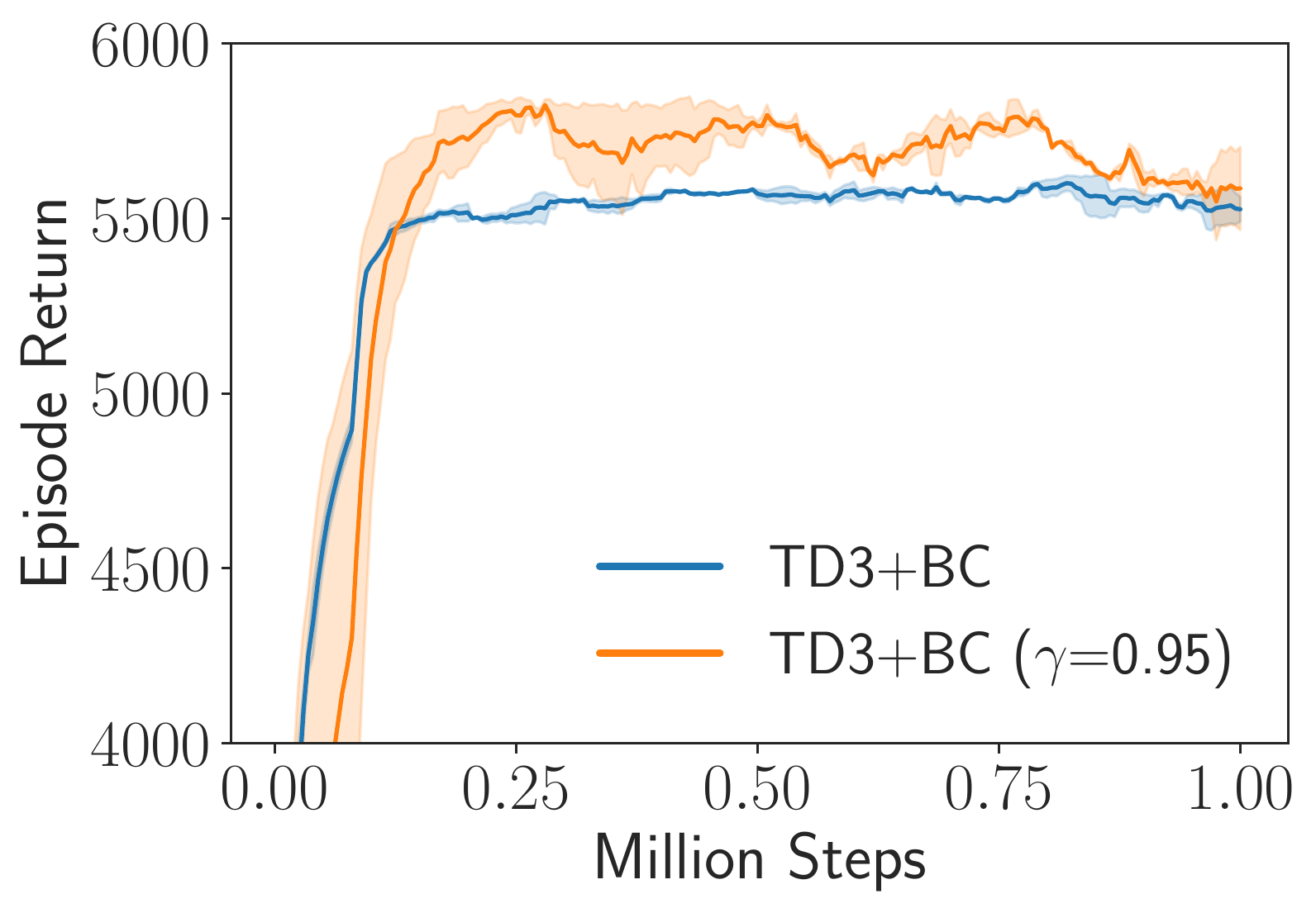}}
    \subfigure[med(50) noise(5)]{
    \includegraphics[scale=0.23]{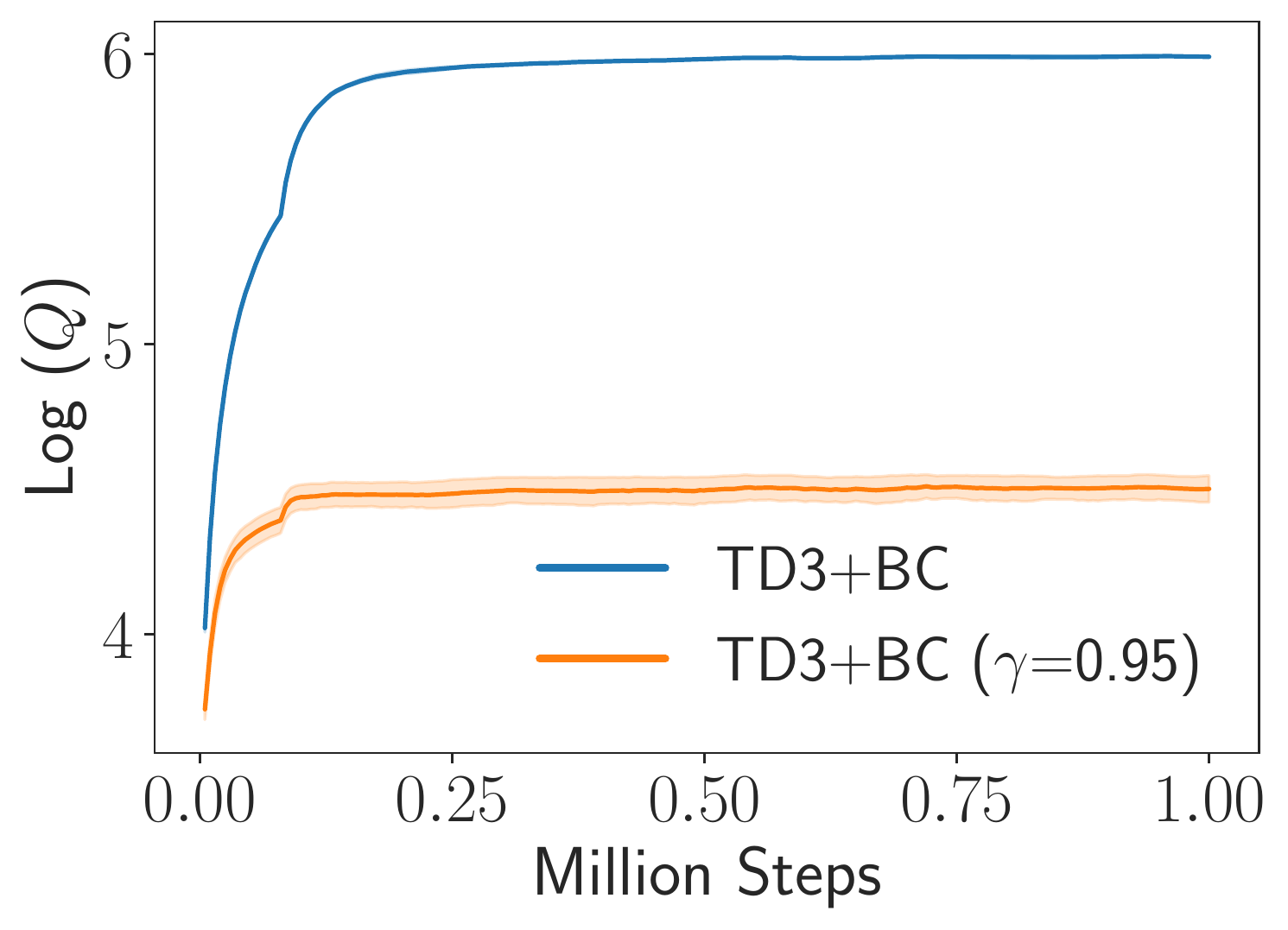}}
    
    \subfigure[med(50) noise(10)]{
    \includegraphics[scale=0.23]{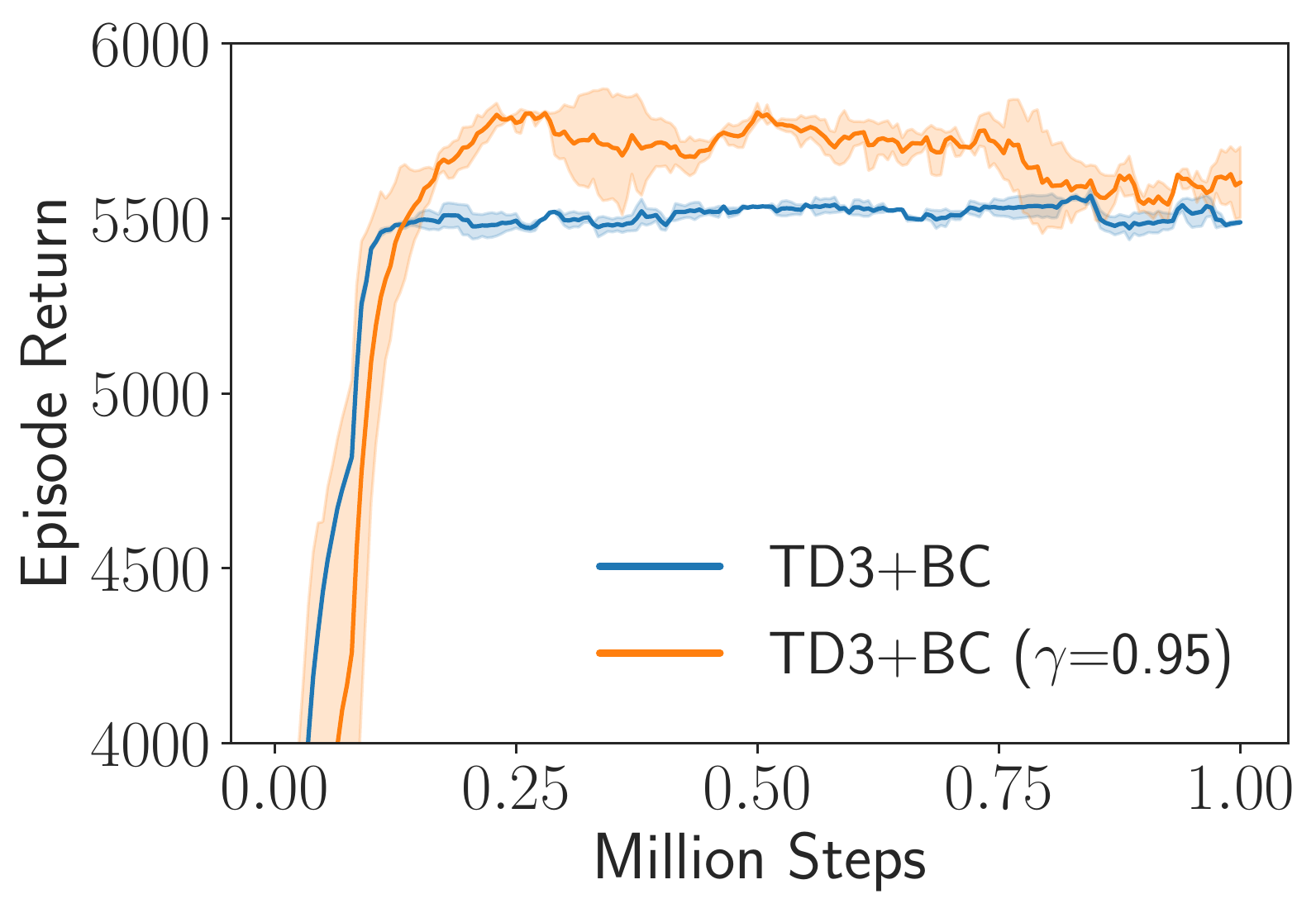}}
    \subfigure[med(50) noise(10)]{
    \includegraphics[scale=0.23]{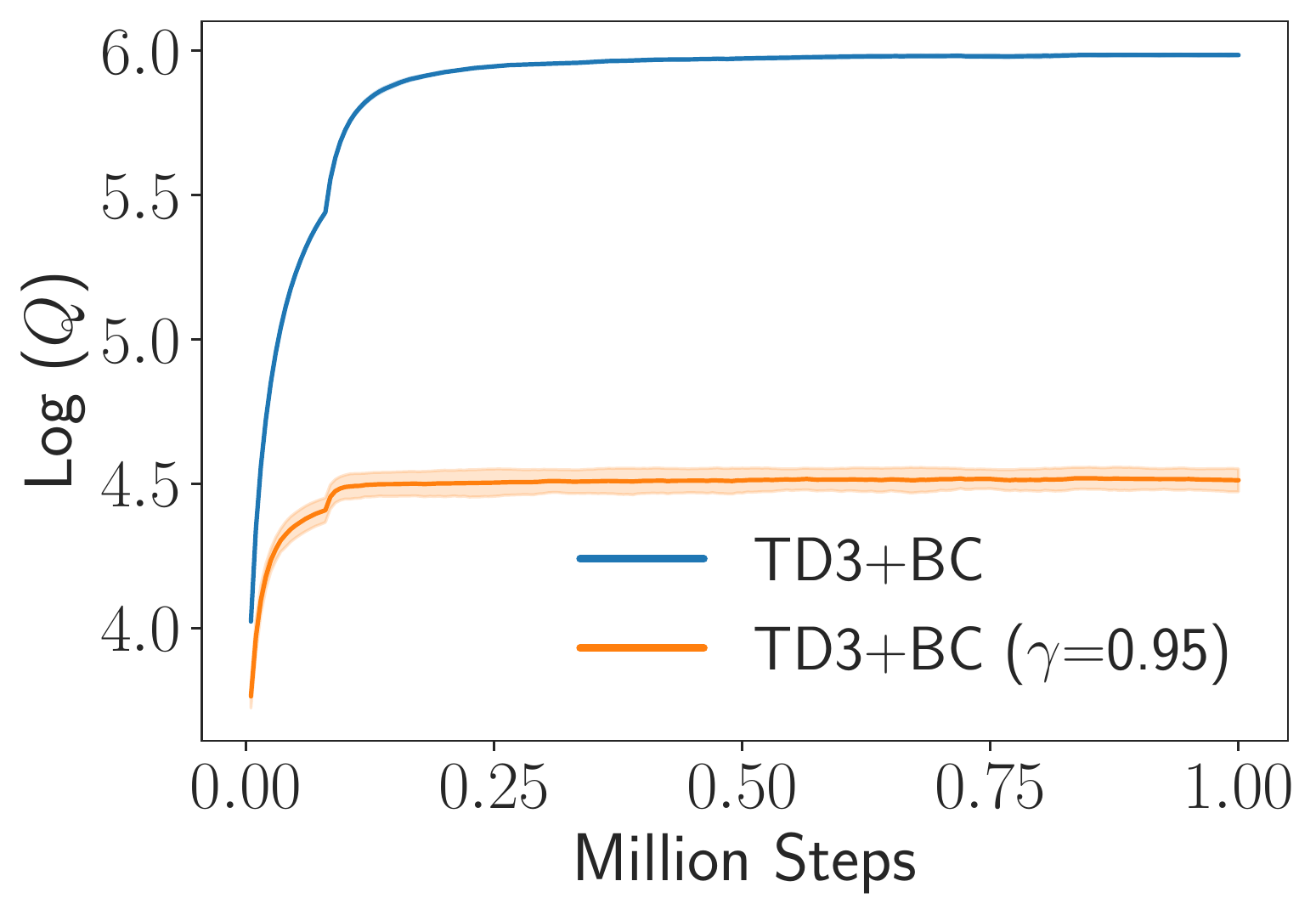}}
     \subfigure[med(50) noise(15)]{
    \includegraphics[scale=0.23]{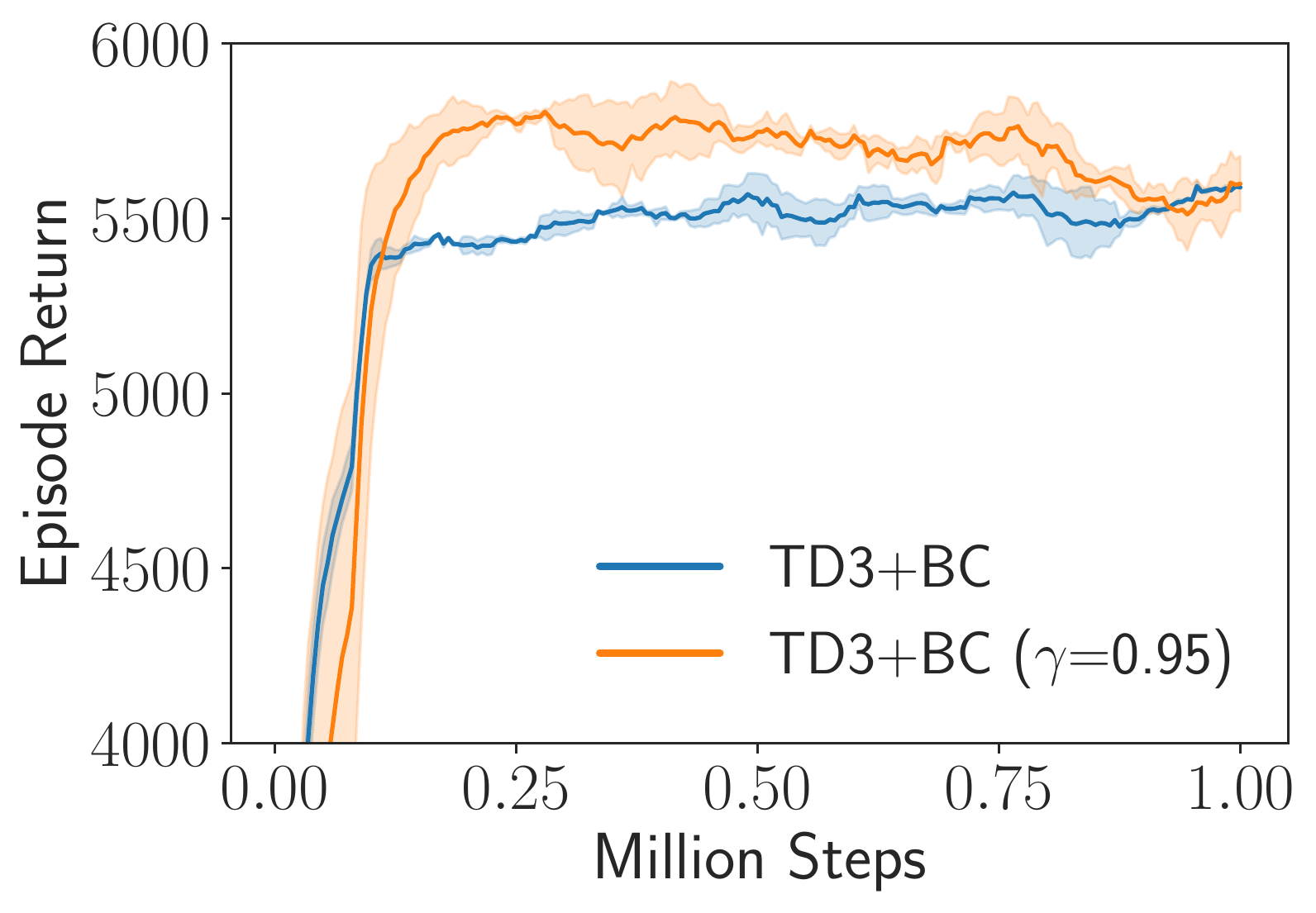}}
    \subfigure[med(50) noise(15)]{
    \includegraphics[scale=0.23]{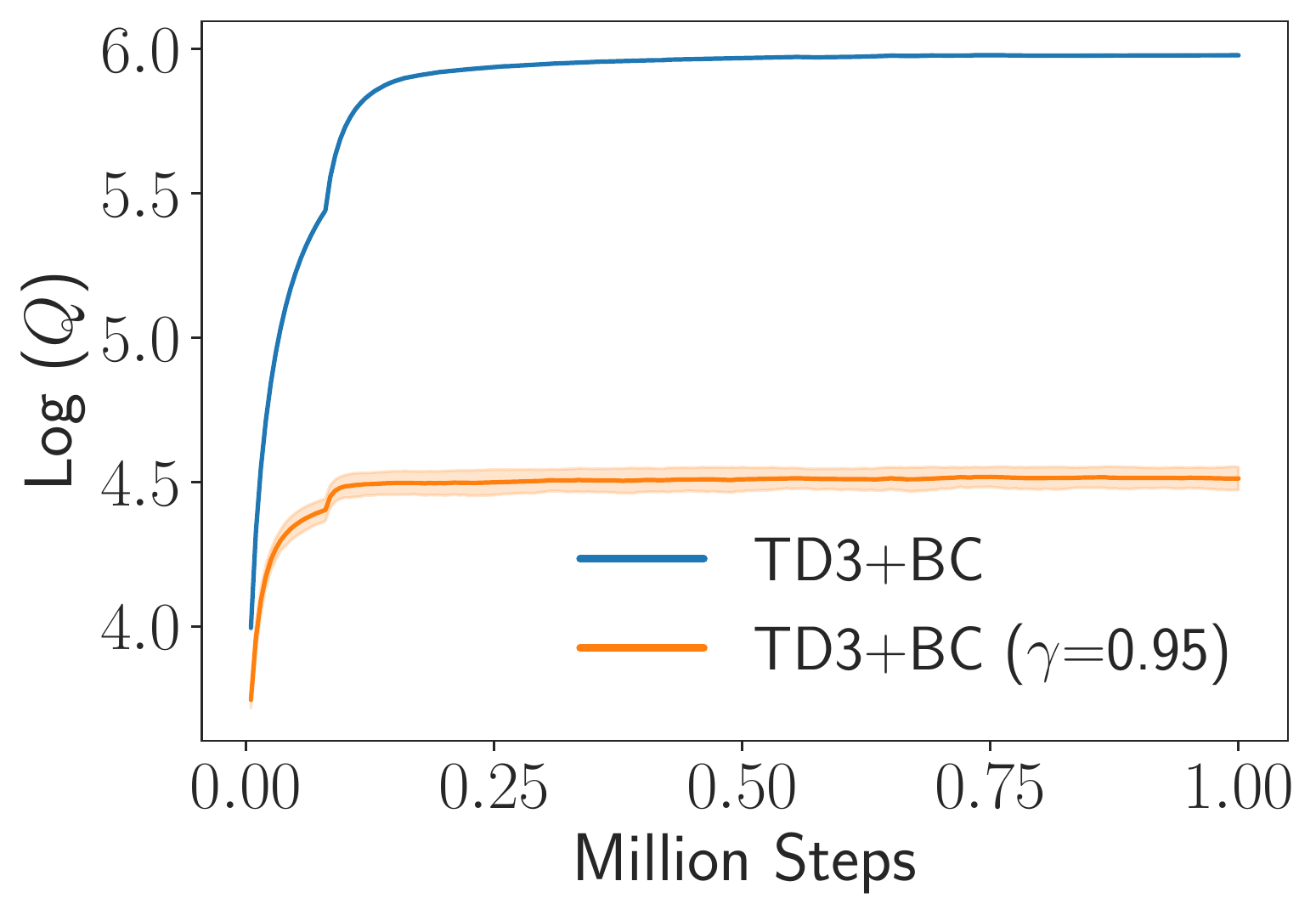}}
    
    \subfigure[med(50) noise(20)]{
    \includegraphics[scale=0.23]{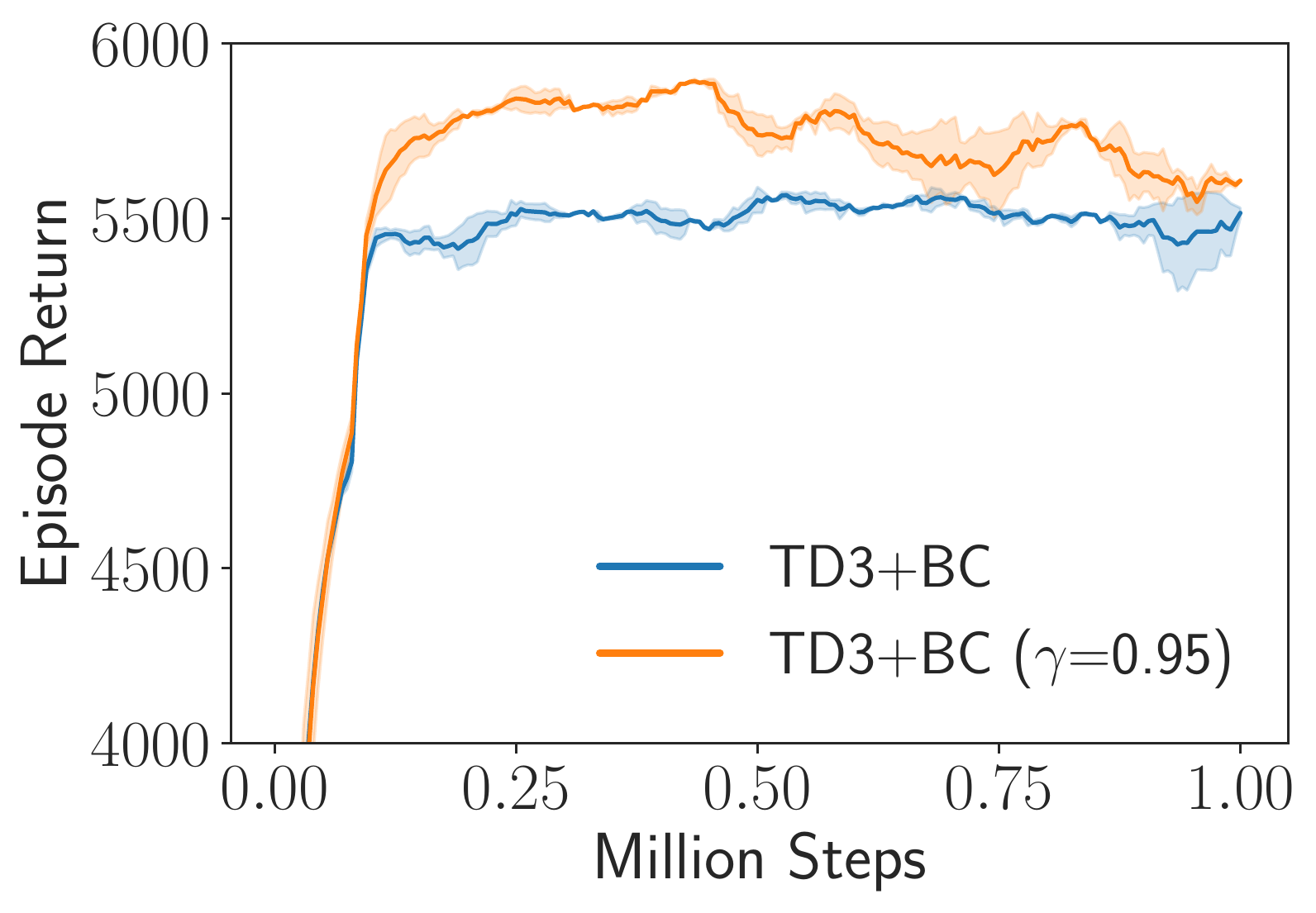}}
    \subfigure[med(50) noise(20)]{
    \includegraphics[scale=0.23]{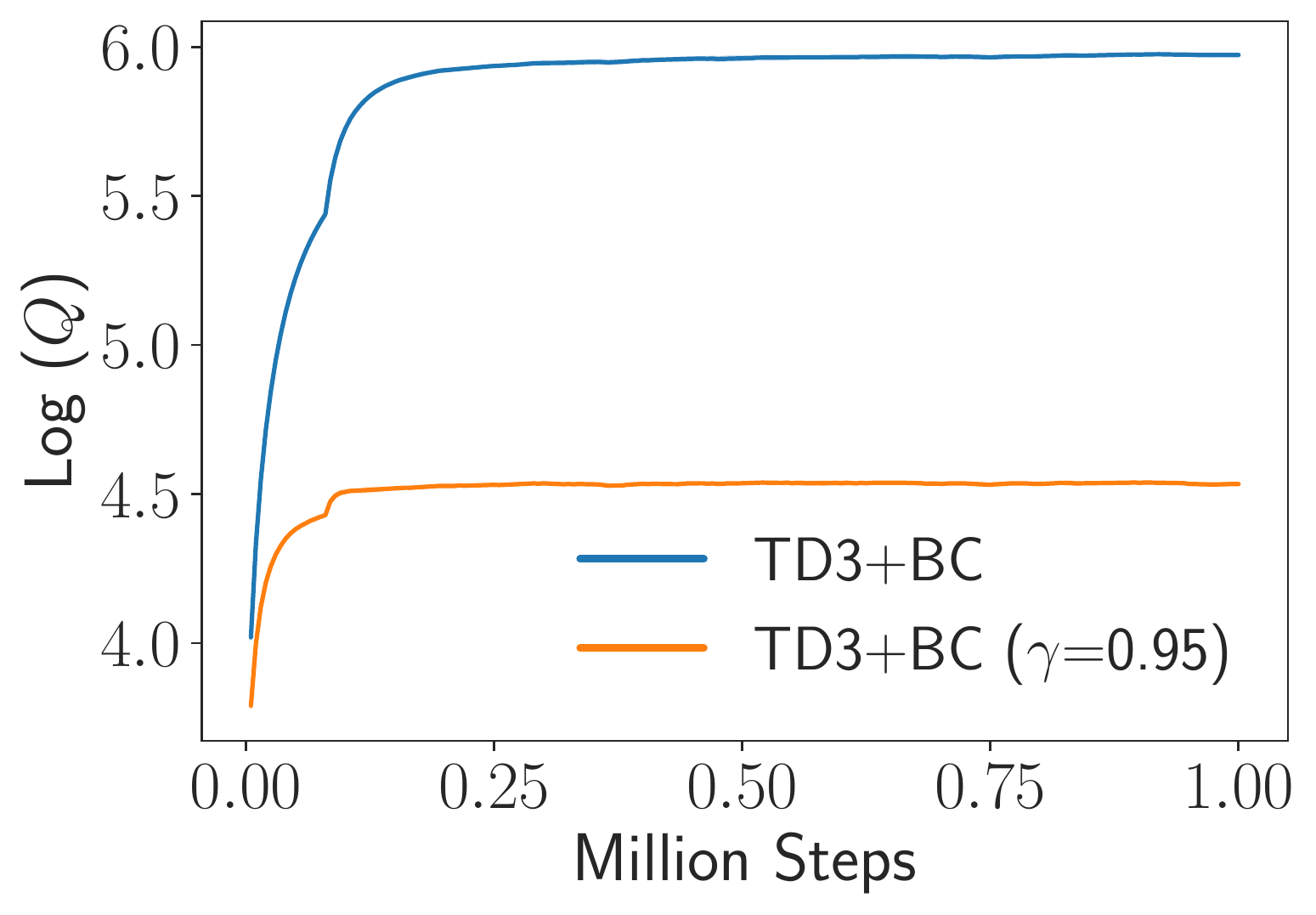}}
     \subfigure[med(50) noise(25)]{
    \includegraphics[scale=0.23]{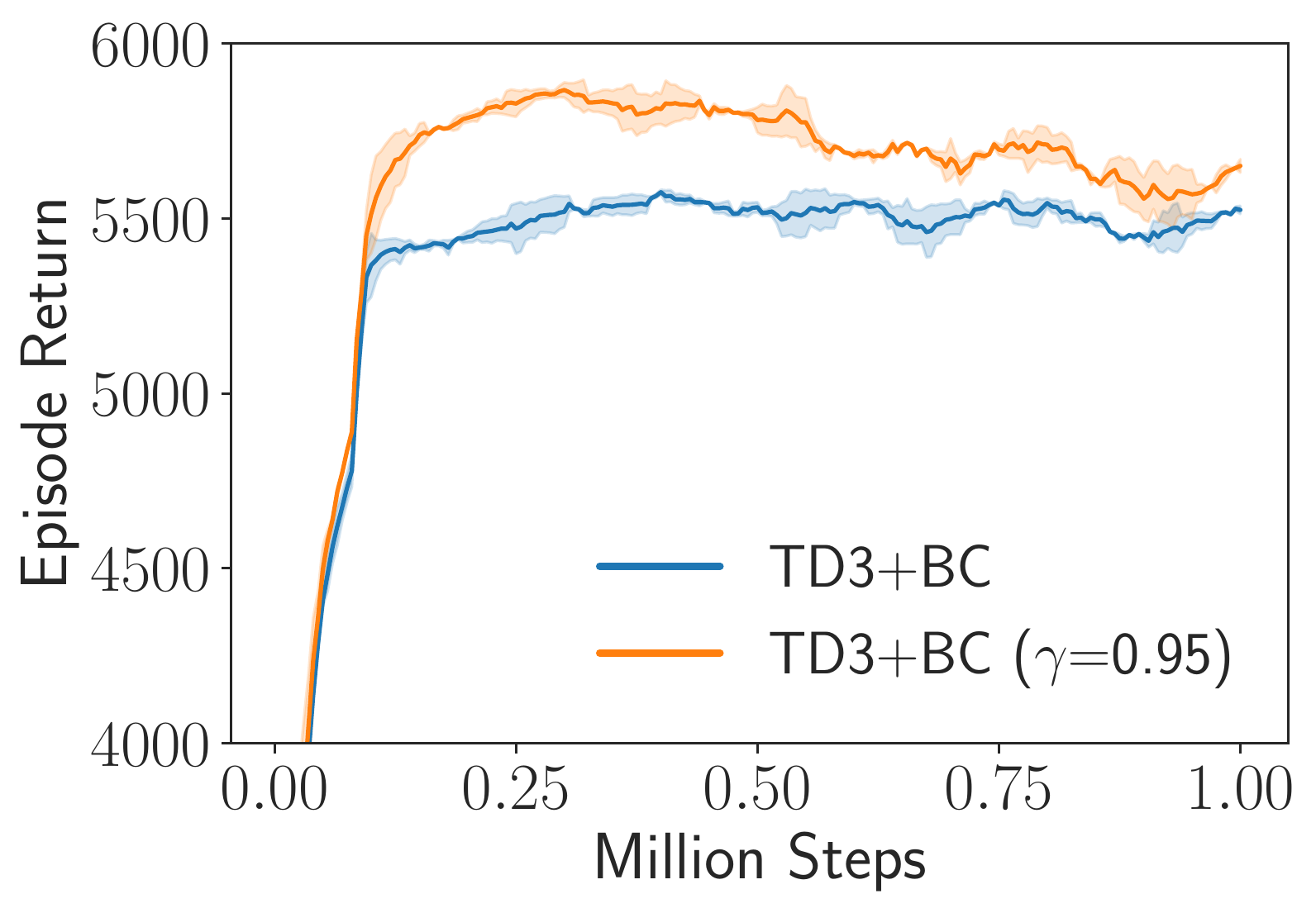}}
    \subfigure[med(50) noise(25)]{
    \includegraphics[scale=0.23]{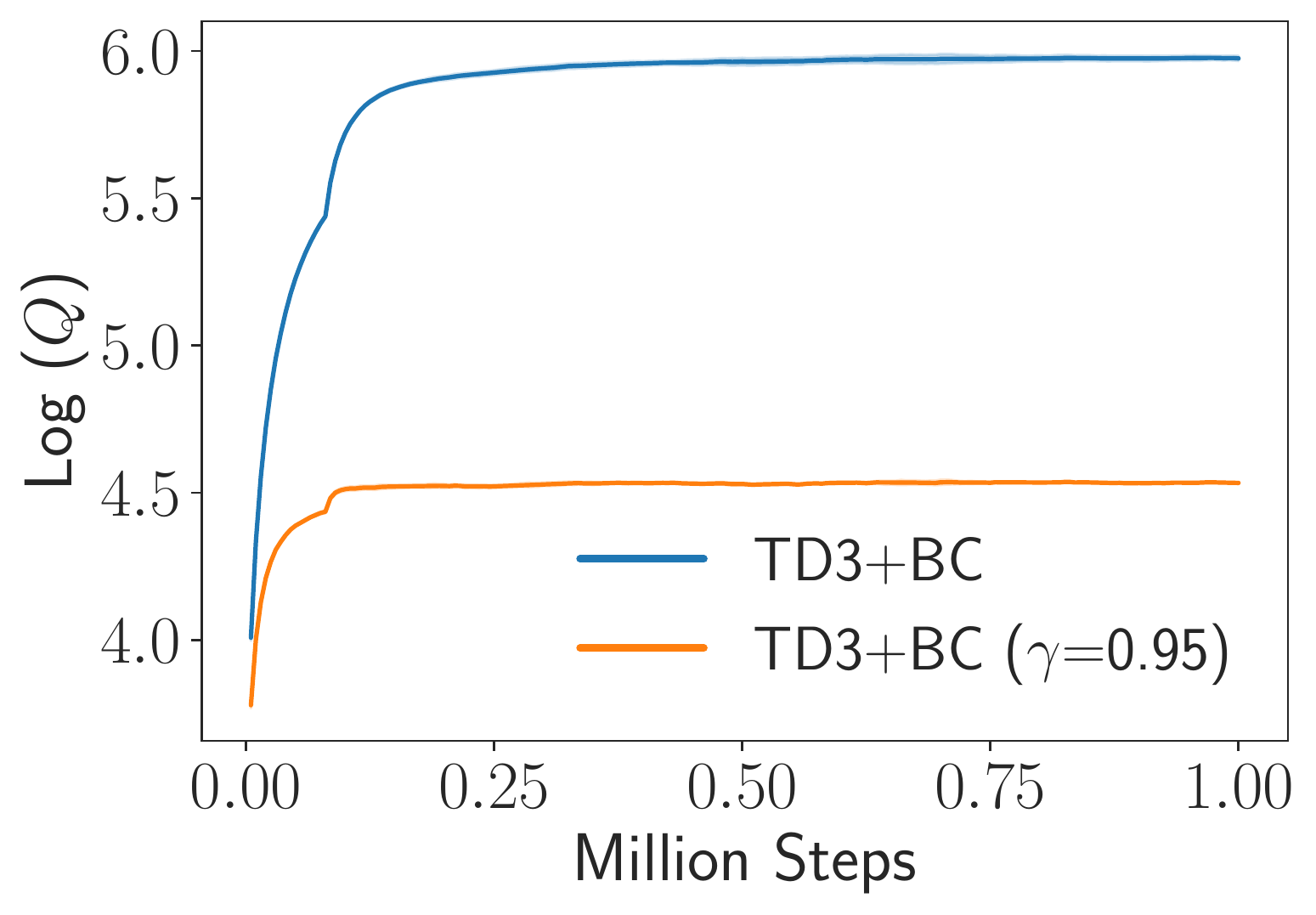}}
    \caption{Experimental results on halfcheetah task consisting of 50 medium trajectories and $x$ noised trajectories.
    The evaluation metric is the episode return and log $Q$-value.
    }
    \label{fig:my_label_3}
\end{figure}

%For the final version, one more page can be added.
%If you want, you can use an appendix like this one, even using the one-column format.
%%%%%%%%%%%%%%%%%%%%%%%%%%%%%%%%%%%%%%%%%%%%%%%%%%%%%%%%%%%%%%%%%%%%%%%%%%%%%%%
%%%%%%%%%%%%%%%%%%%%%%%%%%%%%%%%%%%%%%%%%%%%%%%%%%%%%%%%%%%%%%%%%%%%%%%%%%%%%%%

\end{document}